\crefname{section}{Sec.}{Secs.}
\Crefname{section}{Section}{Sections}
\Crefname{table}{Table}{Tables}
\crefname{table}{Tab.}{Tabs.}
\newcommand{\bacon}{\textsc{Bacon}}
\newcommand{\siren}{\textsc{Siren}}
\newcommand{\nglod}{\textsc{Nglod}}
\newcommand*{\red}{\textcolor{black}}
\begin{document}
\title{\textsc{Bacon}: Band-limited Coordinate Networks\\ for Multiscale Scene Representation\vspace{-0.5em}}

\author{David B. Lindell
\qquad Dave Van Veen
\qquad Jeong Joon Park
\qquad Gordon Wetzstein\\[0.25em]
Stanford University\\
{\small\url{http://computationalimaging.org/publications/bacon}}
}

\maketitle

\begin{abstract}

Coordinate-based networks have emerged as a powerful tool for 3D representation and scene reconstruction.
These networks are trained to map continuous input coordinates to the value of a signal at each point.
Still, current architectures are black boxes: their spectral characteristics cannot be easily analyzed, and their behavior at unsupervised points is difficult to predict.
Moreover, these networks are typically trained to represent a signal at a single scale, so naive downsampling or upsampling results in artifacts.
We introduce band-limited coordinate networks (\bacon{}), a network architecture with an analytical Fourier spectrum.
\red{\bacon{} has constrained behavior at unsupervised points, can be designed based on the spectral characteristics of the represented signal, and can represent signals at multiple scales without per-scale supervision.}
We demonstrate \bacon{} for multiscale neural representation of images, radiance fields, and 3D scenes using signed distance functions and show that it outperforms conventional single-scale coordinate networks in terms of interpretability and quality.
\end{abstract}


\vspace{-1em}
\section{Introduction}
\label{sec:introduction}
\begin{figure}
    \centering
    \includegraphics[width=\columnwidth]{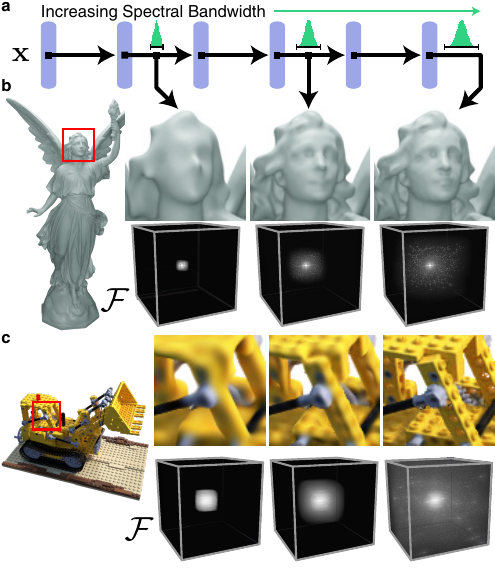}
    \caption{Overview of band-limited coordinate networks (\bacon{}). (a) The proposed architecture produces intermediate outputs with an analytical spectral bandwidth that can be specified at initialization. When supervised on a high-resolution signal, the network learns a multi-resolution decomposition of the output, as shown for fitting 3D shapes via a signed distance function (b) and radiance fields (c). The network is characterized entirely by its Fourier spectrum (see insets) so its behavior is \red{constrained}, even at unsupervised locations.}
    \label{fig:teaser}
    \vspace{-1em}
\end{figure}

Coordinate networks are an emerging class of neural networks that can be used to represent or optimize a broad format of signals including images, video, 3D models, audio waveforms, and more~\cite{park2019deepsdf,mescheder2019occupancy,sitzmann2019srns,mildenhall2020nerf,sitzmann2020siren}.
As opposed to storing discrete samples of signals in conventional array- or grid-based formats, neural representations approximate signals using a continuous function that is embedded in the learned weights of a fully-connected neural network.
Given an input coordinate, these networks are trained to output the value of a signal at that point.
Since even complex or high-dimensional signals can be flexibly optimized using a coordinate network, they have become popular for applications including view synthesis~\cite{mildenhall2020nerf}, image processing~\cite{saito2019pifu}, 3D reconstruction~\cite{park2019deepsdf}, and neural rendering~\cite{tewari2020state}.

Yet, current coordinate networks are black box models that are designed to represent signals at a single scale.
As a result, the behavior of the network at unsupervised coordinates is difficult to predict, with complex dependencies on hyperparameters such as hidden layer size, network depth, or input coordinate encoding.
The black box nature of the architecture similarly inhibits multiscale signal representation, since we cannot readily filter or anti-alias these models, and the frequency spectrum of a coordinate network is difficult to analyze.
Thus naive downsampling or upsampling by querying the network on a coarser or finer grid of coordinates leads to aliasing or undesired high-frequency artifacts.
Ultimately, these characteristics stem from the fact that coordinate networks are not amenable to Fourier analysis and are not designed to be scale aware.

Still, being able to represent and optimize signals at multiple resolutions is an important requirement for many applications.
For example in image processing, many techniques rely on image pyramids~\cite{simoncelli1995steerable} (e.g., optical flow estimation, compression, filtering, etc.). 
Representing 3D objects or scenes at multiple levels of detail is useful for speeding up rendering and reducing memory requirements (e.g., mipmapping).

In this work, we introduce band-limited coordinate networks (\bacon{}). 
The key properties of this architecture are that (1) the maximum frequency at each layer can be manipulated analytically, and (2) the behavior of a trained network is entirely characterized by its Fourier spectrum. 
\bacon{} is suited to multiscale signal representation because band-limited output layers can be designed with an inductive bias towards a particular resolution or scale.

In addition to introducing \bacon{}, we demonstrate a variety of applications including multiscale representation of images, neural radiance fields, and 3D scenes.
Our work takes important steps towards making coordinate-based networks scale aware, and provides a new representation with interpretable behavior. 
Specifically, we make the following contributions:
\begin{itemize}
    \item We introduce band-limited coordinate-based networks for representing and optimizing signals.
    \item We develop methods for spectral analysis of the architecture, and propose a principled, band-limited initialization scheme.
    \item We demonstrate that our architecture outperforms conventional single-scale coordinate networks for multiscale image fitting, neural rendering, and 3D scene representation.
\end{itemize}

\section{Related Work}
\label{sec:related}
\begin{figure*}[ht!]
    \centering
    \includegraphics{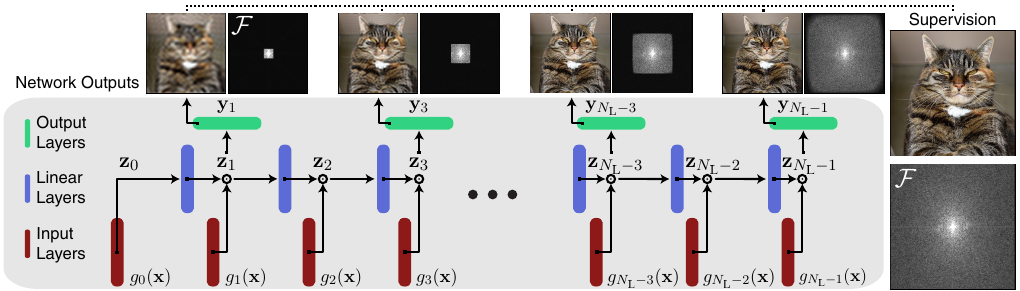}
    \caption{Overview of \bacon{} architecture. We initialize the frequencies of the sine layers of a multiplicative filter network~\cite{fathony2020multiplicative} within a limited bandwidth $[-B_i, B_i]$ (bottom row). Then, the bandwidth of each output layer is the sum of the input bandwidths up to that point (top row), allowing the network bandwidth to be explicitly specified. At training time the network can be supervised with a signal at any resolution, and the network learns to fit the signal in a band-limited fashion. Image from DIV2K dataset~\cite{Agustsson_2017_CVPR_Workshops}.}
    \label{fig:architecture}
    \vspace{-1em}
\end{figure*}

\paragraph{Neural Scene Representation and Rendering.}

Emerging neural scene representations promise 3D-structure-aware, continuous, memory-efficient representations for parts~\cite{genova2019learning,genova2019deep}, objects~\cite{park2019deepsdf,michalkiewicz2019implicit,atzmon2019sal,gropp2020implicit,yariv2020multiview,davies2020effectiveness,chabra2020deep}, or scenes~\cite{eslami2018neural,sitzmann2019srns,jiang2020local,peng2020convolutional,sitzmann2020siren}.
These can be supervised with 3D data, such as point clouds, and optimized as either signed distance functions~\cite{park2019deepsdf,michalkiewicz2019implicit,atzmon2019sal,gropp2020implicit,sitzmann2019srns,jiang2020local,peng2020convolutional,sitzmann2019metasdf,kellnhofer2021neural,yifan2021geometry,takikawa2021nglod} or occupancy networks~\cite{mescheder2019occupancy,chen2019learning}.
Using neural rendering~\cite{tewari2020state, tewari2021advances}, representation networks can also be trained using multiview 2D images~\cite{barron2021mip,saito2019pifu,sitzmann2019srns,Oechsle2019ICCV,Niemeyer2020CVPR,mildenhall2020nerf,yariv2020multiview,liu2020neural,jiang2020sdfdiff,liu2020dist,martinbrualla2020nerfw,pumarola2020d,srinivasan2020nerv,zhang2020nerf,neff2021donerf,oechsle2021unisurf,garbin2021fastnerf,lindell2020autoint,wang2021neus,yariv2021volume,yu2020pixelnerf}.
Temporally aware extensions ~\cite{Niemeyer2019ICCV} and multimodal variants with part-level semantic segmentation~\cite{kohli2020inferring} have also been proposed.
\red{Recent 2D GANs have analyzed the bandwidth of convolutional layers for image generation~\cite{karras2021alias}, and 
3D-aware GANs use related ideas but are trained with 2D image collections~\cite{graf,chan2020pi,Niemeyer2020GIRAFFE,chan2022efficient,or2021stylesdf,deng2021gram}.}

\vspace{-1.5em}
\paragraph{Architectures for Scene Representation.}

Neural network architectures for scene representation networks can be roughly classified as feature-based, coordinate-based, or hybrid.
Feature-based approaches represent the scene using differentiable feature primitives, such as points~\cite{fan2017point,qi2017pointnet,wang2019dynamic,yifan2021iso,Peng2021NEURIPS}, surface patches~\cite{yifan2019differentiable}, meshes~\cite{hedman2018deep,thies2019deferred,riegler2020free,zhang2020neural}, multi-plane~\cite{Zhou:2018,Mildenhall:2019,flynn2019deepview} or multi-sphere~\cite{Broxton:2020,Attal:2020:ECCV} images, or using a voxel grid of features~\cite{sitzmann2019deepvoxels,Lombardi:2019}.
A tradeoff with feature-based representations is that they can be quickly evaluated, but typically have a large memory footprint.

Coordinate-based representations (sometimes called implicit representations or coordinate networks), use a multilayer perceptron (MLP) to map input coordinates to a signal value, for example, the signed distance or occupancy of a 3D scene.
These networks can represent signals globally~\cite{park2019deepsdf,mescheder2019occupancy,tancik2020fourier,sitzmann2020siren} or locally~\cite{chabra2020deep,jiang2020local,chen2021learning,Reiser2021ICCV,mehta2021modulated}.
\red{Some global networks, such as Fourier Features~\cite{tancik2020fourier} and \textsc{SIREN}~\cite{sitzmann2020siren}, have tunable parameters that bias the network to fitting low- or high-frequency signals~\cite{yifan2021geometry}, though without explicit control over the bandwidth.}

Hybrid architectures combine feature-based and coordinate representations to achieve best of both worlds~\cite{liu2020neural,martel2021acorn,hedman2021snerg,peng2020convolutional}.
These networks can represent complex, high-dimensional signals continuously across the input domain with a small memory footprint.
The proposed method is also a coordinate network, but rather than using an MLP architecture, as with all coordinate networks discussed above, our method builds on recently proposed multiplicative filter networks (MFNs)~\cite{fathony2020multiplicative}.
We develop the theory of MFNs, with new tools to describe and manipulate the Fourier spectra of these networks, and a new initialization scheme that mitigates vanishing activations in deep networks.
These insights enable band-limited coordinate networks, which we demonstrate for multiscale signal representation.

\vspace{-1.5em}
\paragraph{Multiscale Representations.}

Several existing works have explored multiscale architectures in the context of scene representation networks.
For example, proposed methods use an octree~\cite{takikawa2021nglod,yu2021plenoctrees} to accelerate neural rendering of radiance fields or signed distance functions, or a hierarchy of features~\cite{chen2021multiresolution} to improve 3D shape completion.
Multiscale representations can be optimized directly using specialized architectures~\cite{martel2021acorn,yifan2021geometry} or progressive training strategies~\cite{martel2021acorn,hertz2021sape}.
The closest work to ours in this category is Mip-NeRF~\cite{barron2021mip}, which is a coordinate-based network with a scale-dependent positional encoding. 
After training the network with supervision at multiple scales, the resolution of the network output can be controlled by adjusting the positional encoding.
Our work differs in that the bandwidth of the network outputs are constrained by design rather than through training.
Thus, our approach learns a band-limited multiscale decomposition of a signal, even without explicit training at multiple scales.

\section{Method}
\label{sec:method}
This section provides an overview of MFNs and the \bacon{} multiscale architecture, describes the Fourier spectra of these networks, and proposes an initialization scheme for deep networks.

\subsection{Band-limited Coordinate Networks}

Our approach builds on a recently introduced coordinate-based architecture called Multiplicative Filter Networks (MFNs)~\cite{fathony2020multiplicative}, which differ from conventional MLPs in that they employ a Hadamard product between linear layers and sine activation functions.
While \bacon{} uses an MFN backbone, we significantly extend the theoretical understanding and practicality of these networks by (1) proposing architectural changes to achieve multiscale, band-limited outputs, (2) deriving formulas to quantify the expected frequencies in the representation, and (3) deriving a principled initialization scheme that prevents vanishing activations in deep networks.

In a forward pass through the network, an input coordinate $\mathbf{x}\in\mathbb{R}^{d_\text{in}}$ is first passed through several layers of the form $g_i : \mathbb{R}^{d_\text{in}} \mapsto \mathbb{R}^{d_\text{h}}$, with $g_i(\mathbf{x}) = \sin(\boldsymbol{\omega}_i \mathbf{x} + \boldsymbol{\phi}_i)$, $i = 0, \ldots, N_\text{L}-1$, and $N_\text{L}$ the number of layers in the network.
We refer to the intermediate activations as $\mathbf{z}_i\in\mathbb{R}^{d_\text{h}}$, and we allow intermediate outputs of the network $\mathbf{y}_i\in\mathbb{R}^{d_\text{out}}$ at the $i$th layer, defined as follows (see also Fig.~\ref{fig:architecture}).
\begin{equation}
    \begin{aligned}
    &\mathbf{z}_0 = g_0(\mathbf{x})\\
    &\mathbf{z}_{i} = g_{i}(\mathbf{x}) \circ \left(\mathbf{W}_{i} \mathbf{z}_{i-1} + \mathbf{b}_{i}\right),\quad 0\leq i<N_\text{L}\\
    &\mathbf{y}_i = \mathbf{W}^\text{out}_{i} \mathbf{z}_{i} + \mathbf{b}^\text{out}_{i},\\
    \end{aligned}
   \label{eq:mfn}
\end{equation}
where $\circ$ indicates the Hadamard product.
The parameters of the network are $\theta = \{\boldsymbol{\omega}_i\in\mathbb{R}^{{d_\text{h}} \times d_\text{in}}, \mathbf{b}_i, \boldsymbol{\phi}_i\in\mathbb{R}^{d_\text{h}}, \mathbf{W}_i\in\mathbb{R}^{d_\text{h}\times d_\text{h}}, \mathbf{W}^\text{out}_i \in \mathbb{R}^{d_\text{out}\times d_\text{h}}, b_i^\text{out} \in \mathbb{R}^{d_\text{out}}\}$.

A useful property of this formulation is that the network output can be expressed equivalently as a sum of sines with varying amplitude, frequency, and phase~\cite{fathony2020multiplicative}.
\vspace{-0.5em}
\begin{equation}
    \mathbf{y}_i = \sum\limits_{j=0}^{N_\text{sine}^{(i)}-1} \bar{\alpha}_j\sin(\boldsymbol{\bar{\omega}}_j\mathbf{x} + \bar{\phi}_j),
\end{equation}
where $\bar{\alpha}_i$, $\boldsymbol{\bar{\omega}}_i$, and $\bar{\phi}_i$ depend on the parameters of the MFN \red{(see supplemental \S1.2)}, and the number of terms in the sum for an $N_\text{L}$ layer network is given as \red{(see supplemental \S1.1)}
\begin{equation}
N_\text{sine}^{(N_\text{L})} = \sum_{i=0}^{N_\text{L}-1} 2^{i}d_\text{h}^{i+1}.
\label{eq:num_sines}
\end{equation}
This property stems from the repeated Hadamard product of sines and the trigonometric identity that 
\begin{equation}
    \sin(a) \sin(b) = \frac{1}{2} \left(\sin(a + b - \pi/2) + \sin(a - b + \pi/2)\right).
    \label{eq:sines}
\end{equation}
By applying this identity through the layers of the network, the output can be reduced to a single sum of sines.

\subsection{Frequency Spectrum}
We exploit the property that MFNs can be expressed as a sum of sines to create band-limited networks.
This is achieved by designing the architecture so that the frequency of all represented sines never exceeds a desired threshold.

To this end, we freeze (i.e., do not optimize) the frequencies, or entries of $\boldsymbol{\omega}_i$, and set them to a bandwidth in $[-B_i, B_i]$ using random uniform initialization.  
Then, since \red{the Hadamard products} of sines result in summed frequencies (Eq.~\ref{eq:sines}), the total bandwidth of an output at layer $i$ of the network is less than or equal to $\sum_{j=0}^{i} B_j$ and the maximum bandwidth is $B = \sum_{i=0}^{N_\text{L}-1}B_i$ (see Fig.~\ref{fig:architecture}).

When representing signals across a finite input domain, e.g., with input coordinates $\mathbf{x}\in[-0.5, 0.5]^{d_\text{in}}$, it is not necessary to represent all frequencies continuously. 
Instead, we can assume that the represented signal is periodic, so we are only required to represent discrete frequency values whose spacing is $1/T$, where $T$ is the periodicity or extent of the signal in the primal domain. 
Moreover, using discrete frequencies allows complete characterization of the network spectrum by applying a fast Fourier transform to a uniformly sampled network output (shown in Fig.~\ref{fig:architecture} for image fitting).

We also analyze the distribution of sine frequencies in the network.
Briefly, sines in the network can be associated with one of the $N_\text{L}$ terms in the summation of Eq.~\ref{eq:num_sines}.
Then, considering the probability of sines originating from each term results in a compound random variable that gives the overall distribution of frequencies. 
We provide an extended derivation in the supplemental, showing that the distribution is approximately zero-mean Gaussian with variance
\begin{equation}
    \text{Var}(\boldsymbol{\omega}_i)\cdot\sum\limits_{m=0}^{N_\text{L}-1} m \cdot \frac{2^{N_\text{L}-1-m}d_\text{h}^{N_\text{L}-m}}{\sum\limits_{i = 0}^{N_\text{L}-1} 2^{i}d_\text{h}^{i+1}}.
\end{equation}
The Gaussian distribution of frequencies results in a greater parameterization of low frequencies in the network; this may be a useful inductive bias since low-frequency Fourier coefficients typically have a greater amplitude than high-frequency coefficients in natural signals~\cite{torralba2003statistics}.

\begin{figure}[ht!]
    \centering
    \includegraphics{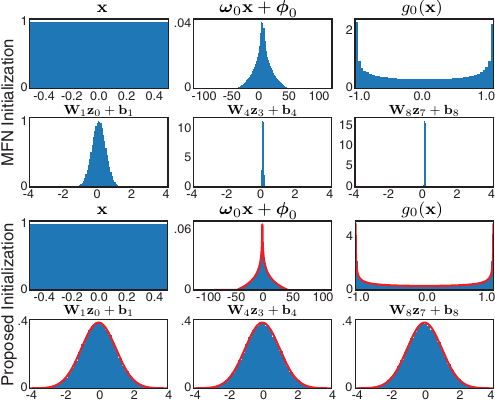}
    \caption{Comparison of distribution of activations at initialization. The initialization scheme proposed for MFNs~\cite{fathony2020multiplicative} results in vanishingly small activations for deep networks (shown for layers 0, 1, 4, and 8). The proposed initialization scheme maintains a standard normal distribution after each linear layer (all distributions shown for a network with $d_\text{h}=1024$), and activations at intermediate outputs closely match our analytical derivations (red lines, see supplemental for details).}
   \label{fig:initialization}
   \vspace{-1.5em}
\end{figure}

To facilitate representing signals at multiple resolutions, we introduce linear layers at intermediate stages throughout the network to extract band-limited outputs (see Fig.~\ref{fig:architecture}).
By supervising the outputs of these layers, we can train \bacon{} to fit a signal at multiple scales simultaneously. 
Interestingly, because the outputs are band-limited, \bacon{} can be trained in a semi-supervised fashion where the bandwidth of the supervisory signal need not match the desired bandwidth of the output of the network, demonstrated in Fig.~\ref{fig:architecture} for image fitting. 

\subsection{Initialization Scheme}

\begin{figure*}[ht]
    \centering
    \includegraphics[width=\textwidth]{./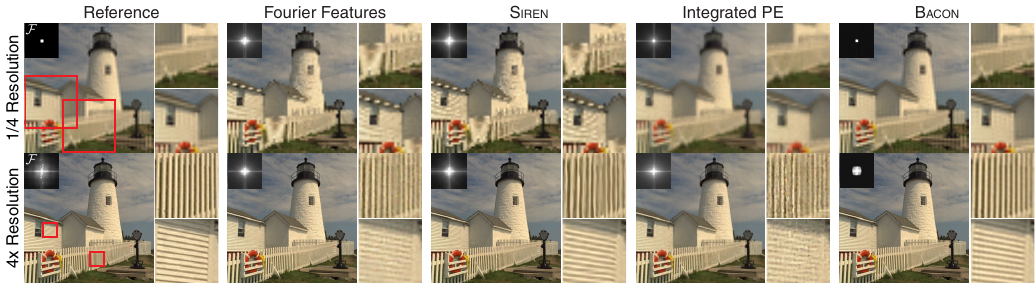}
    \caption{Image fitting results. We train networks using Fourier Features~\cite{tancik2020fourier}, \siren{}~\cite{sitzmann2020siren}, and integrated positional encoding (PE)~\cite{barron2021mip} to fit an image at 256$\times$256 (1$\times$) resolution.
    We show network outputs at 1/4 and 4$\times$ resolution. 
Fourier Features and \textsc{Siren} fit to a single scale and show aliasing when subsampled. 
Integrated PE is explicitly supervised at 1/4 and 1$\times$ resolution and learns reasonable anti-aliasing; however, all methods except \bacon{} show high-frequency artifacts at 4$\times$ resolution (insets).
\red{\bacon{} is supervised at a single scale and approximates low-pass filtered and high-resolution reference images (left column and Fourier spectra insets).}}
    \label{fig:image}
    \vspace{-1.5em}
\end{figure*}

\begin{figure}[ht]
    \centering
    \includegraphics[width=0.95\columnwidth]{./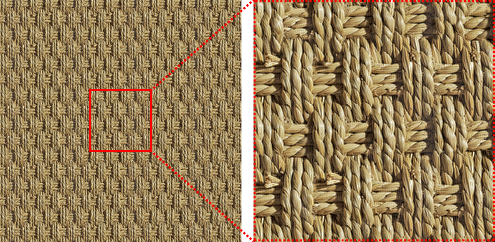}
    \caption[]{\bacon{} periodic extrapolation behavior\protect\footnotemark.}
    \label{fig:extrapolation}
    \vspace{-1.5em}
\end{figure}

Finally, we derive a principled initialization scheme that ensures the distribution of activation functions at the output of each layer is distributed uniformly at the beginning of training.
\red{While the proposed scheme and that of \textsc{Siren}~\cite{sitzmann2020siren} both involve sine non-linearities, our initialization explicitly accounts for Hadamard products in the architecture and the distribution of inputs to sine layers $g_i$.}
We compare our initialization scheme to the initialization proposed by Fathony et al.~\cite{fathony2020multiplicative} in Fig.~\ref{fig:initialization}. 
Our proposed scheme resolves a problem with vanishingly small activations for deep networks and results in standard normal distributed activations after each linear layer.

In the supplemental, we provide an extended derivation, which we summarize as follows. 
Assume the input to the network is uniformly distributed $\mathbf{x}\sim\mathcal{U}(-0.5, 0.5)$, with $\boldsymbol{\omega}_i\sim\mathcal{U}(-B_i, B_i)$ and $\boldsymbol{\phi}_i\sim\mathcal{U}(-\pi, \pi)$, where we describe the distribution of each element of the matrix or vector.
Then, $\boldsymbol{\omega}_i\mathbf{x} + \boldsymbol{\phi}_i$ is distributed as
\begin{equation}
    \begin{dcases} 1/B_i\log\left(B_i/\min(|2x|,B_i) \right), & -B/2\leq x \leq B/2\\ 0 & \text{else} \end{dcases}\nonumber
\end{equation}
and $g_i(\mathbf{x}) = \sin(\boldsymbol{\omega}_i\mathbf{x} + \boldsymbol{\phi}_i)$ is approximately arcsine distributed with variance 0.5 (see supplemental, red plots in Fig.~\ref{fig:initialization}). 
Now, let $\mathbf{W}_i \sim \mathcal{U}[-\sqrt{6/d_\text{h}},\sqrt{6/d_\text{h}}]$.
Then we have that $\mathbf{W}_1 g_0(\mathbf{x}) + \mathbf{b}_1$ converges to the standard normal distribution with increasing $d_\text{h}$ (see supplemental).
Finally, the Hadamard product $g_1(\mathbf{x})\circ (\mathbf{W}_{1} \mathbf{z}_0 + \mathbf{b}_1)$ is the product of arcsine distributed and standard normal random variables which again has a variance of 0.5.
Applying the next linear layer results in another standard normal distribution, which is also the case after all subsequent linear layers (see red plots of Fig.~\ref{fig:initialization}).

\section{Experiments}
\label{sec:results}
We demonstrate \bacon{} on three separate tasks: image fitting, \red{view synthesis} using neural radiance fields, and 3D shape fitting using signed distance functions. 

\subsection{Image Fitting} \label{sec:img_fit}
We use an image fitting task to evaluate the performance of \bacon{} and to demonstrate its band-limited behavior.
\bacon{} is compared to three other baselines: a network with Gaussian Fourier Features positional encoding~\cite{tancik2020fourier}, \textsc{Siren}~\cite{sitzmann2020siren}, and the integrated positional encoding of Mip-NeRF~\cite{barron2021mip}, which is scale-dependent.

\footnotetext{\textit{Image:} \url{https://www.sketchuptextureclub.com/}}
We initialize all networks with 4 hidden layers, 256 hidden features, and we train on the 256$\times$256 resolution image for 5000 iterations using PyTorch~\cite{paszke2019pytorch} and Adam~\cite{kingma2014adam}.
The batch size is equal to the number of image pixels.
\red{For Fourier Features and integrated positional encoding, we use encoding scales of 6 and 10, respectively, to balance between image quality and high-frequency overfitting.}
For \textsc{Siren}, we initialize the frequency parameter to $\omega_0=30$.

\begin{figure*}[ht]
    \centering
    \includegraphics{./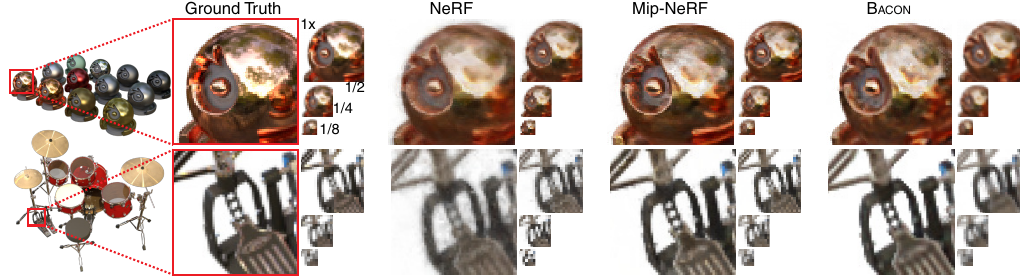}
    \caption{Neural rendering results. We compare NeRF~\cite{mildenhall2020nerf}, Mip-NeRF~\cite{barron2021mip}, and \bacon{} supervised on a multiscale synthetic dataset~\cite{barron2021mip}. \bacon{} captures higher frequency details better than NeRF while requiring fewer parameters to render at 1/2, 1/4, and 1/8 resolution.} 
    \label{fig:nerf}
    \vspace{-1.5em}
\end{figure*}

Fourier Features and \textsc{Siren} are trained to minimize the loss $\mathcal{L}_\text{img} = \lVert \mathbf{y} - \mathbf{y}_\text{GT} \rVert_2^2$, where $\mathbf{y}$ is the network output and $\mathbf{y}_\text{GT}$ are the image pixel values.
For \bacon{}, we sum this loss over all network outputs, with explicit supervision at all scales on the full-resolution image.
The integrated positional encoding network is supervised explicitly on anti-aliased image pixels at 1/4, 1/2, and full resolution, following Barron et al.~\cite{barron2021mip}.
Finally, we initialize \bacon{} to have a maximum bandwidth $B$ of  $0.5$ cycles/pixel, which is the Nyquist limit for the image.
The frequencies $\boldsymbol{\omega}_i$ are initialized so that the outputs $\mathbf{y}_1$, $\mathbf{y}_2$, and $\mathbf{y}_4$ are constrained to quarter, half, or full bandwidth.
That is, $B_0=B_1= B/8$, and $B_2=B_3=B_4=B/4$ such that $\sum_i B_i = B$.

Results of image fitting on a test scene from the Kodak dataset~\cite{kodak} are shown in Fig.~\ref{fig:image}.
Since Fourier Features and \textsc{Siren} only represent the signal at the trained resolution, sampling the network at 1/4 resolution results in aliasing.
We show the interpolation performance of these networks by evaluating a 4$\times$ upsampled grid of 1024$\times$1024 pixels.
\red{When upsampled, \bacon{} does not synthesize spurious high frequencies and has a band-limited output.
All other methods have non-zero high-frequency spectra and exhibit artifacts in the reconstruction. We show additional image fitting experiments in the supplemental, including evaluation of deep 8- and 16-layer \bacon{}s and MFNs.}

\vspace{-1.5em}
\paragraph{Periodic Extrapolation.}
Since \bacon{} uses discrete frequencies at each sine layer $g_i(\mathbf{x})$, the representation is periodic.
We demonstrate this by fitting a seamless texture using coordinates $\mathbf{x}\in[-0.5, 0.5]$ (red square of Fig.~\ref{fig:extrapolation}) and querying the network output for $\mathbf{x}\in[-2, 2]$.

\vspace{-2.5em}
\red{\paragraph{Scale Interpolation.}
Although \bacon{} outputs at discrete scales, we can interpolate between multiscale outputs, similar to the trilinear filtering used to render from mipmaps~\cite{williams1983pyramidal}. 
See supplemental for additional details and results.}

\subsection{Neural Radiance Fields}
Neural radiance fields (NeRF)~\cite{mildenhall2020nerf} have become a popular method for \red{view synthesis} and neural rendering.
The method operates on a dataset of multiview images with known camera positions, where each image pixel is associated with a ray $\mathbf{r}(t) = \mathbf{o} + t\mathbf{d}$ that extends from the camera center of projection $\mathbf{o}$ in the direction $\mathbf{d}$ passing through the pixel. 
A pixel color $\mathbf{C}(\mathbf{r})$ is predicted using the volume rendering equation to integrate predicted intermediate values of color $\mathbf{c}$ and opacity $\sigma$ along the ray~\cite{barron2021mip}. 
In practice, a neural network is queried to evaluate samples of $\mathbf{c}$ and $\sigma$ along each ray $\mathbf{r}(t)$, and the volume rendering integral is evaluated using quadrature as~\cite{Mildenhall:2019,max1995optical}
\vspace{-0.8em}
\begin{equation}
\begin{aligned}
    \mathbf{C}(\mathbf{r}, \mathbf{t}) = \sum\limits_j T_j(1 - \exp(-\sigma_j(t_{j+1} - t_j)))\,\mathbf{c}_j,\\
    \text{with} \quad T_j - \exp\left(-\sum\limits_{i'<i}\sigma_{i'}(t_{i'+1} - t_{i'}) \right),
\end{aligned}
\vspace{-0.5em}
\end{equation}
where $T_j$ represents the transmittance or visibility of a point on the ray, and the values $w_j = T_j(1 - \exp(-\sigma_j(t_{i+1} - t_j)))$ can be interpreted as alpha compositing weights applied to the predicted colors $\mathbf{c}_j$.
After training, novel views can be rendered by simply evaluating the corresponding rays.

\begin{table}[t]
    \centering
    \resizebox{\columnwidth}{!}{
    \begin{tabular}{lccccc|cccc}
    \toprule
    & \multicolumn{5}{c|}{PSNR $\uparrow$} & \multicolumn{4}{|c}{\# Params.}\\
                    & 1$\times$ & 1/2 & 1/4 & 1/8 & Avg. & 1$\times$ & 1/2 & 1/4 & 1/8 \\\midrule
        NeRF        & 26.734 & 28.941 & 29.297 & 26.464 & \multicolumn{1}{|c|}{27.859} & \multicolumn{4}{c}{\xrfill{0.5pt}511K\xrfill{0.5pt}} \\
        Mip-NeRF    & 29.874 & 31.307 & 32.093 & 32.832 & \multicolumn{1}{|c|}{\textbf{31.526}} & \multicolumn{4}{c}{\xrfill{0.5pt}511K\xrfill{0.5pt}} \\
        \bacon{}    & 27.430 & 28.066 & 28.520 & 28.475 & \multicolumn{1}{|c|}{\underline{28.123}} & 531K & 398K & 266K & 133K \\
    \bottomrule
    \end{tabular}}
    \caption{Performance of NeRF, Mip-NeRF, and \bacon{} averaged across the multiscale Blender dataset. \bacon{} achieves better average performance than NeRF while requiring fewer parameters to render the lower resolution images.}
    \label{tab:nerf}
    \vspace{-1.5em}
\end{table}

\begin{figure*}[ht]
    \centering
    \includegraphics{./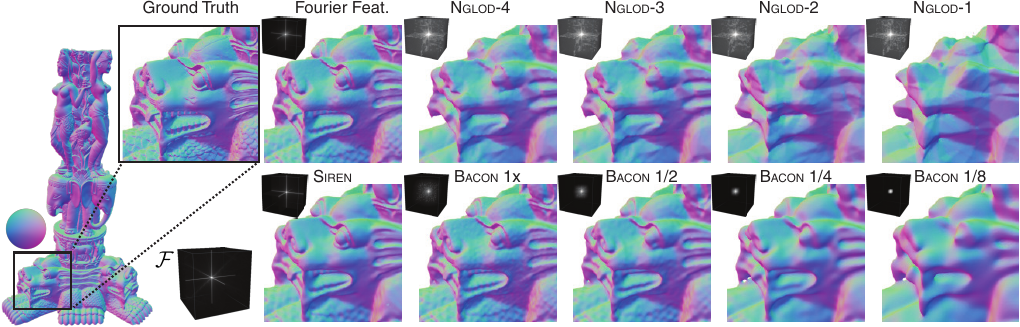}
    \caption{Shape fitting results. Results on the Thai Statue from the Stanford 3D Scanning Repository are shown for levels-of-detail 1--4 of Neural Geometric Level of Detail (\nglod{})~\cite{takikawa2021nglod}, Fourier Features~\cite{tancik2020fourier}, \textsc{Siren}~\cite{sitzmann2020siren}, and \bacon{}. All methods perform similarly at their highest detail output (see Table~\ref{tab:shapes}), but \bacon{} learns a smooth multiscale decomposition of the shape. Insets show the spectra of the extracted signed-distance functions, revealing the band-limited output of \bacon{}. Additional results included in the supplemental.}
    \label{fig:shapes}
    \vspace{-1.5em}
\end{figure*}

We evaluate \bacon{} for this task and compare to NeRF and Mip-NeRF baselines trained on a multiscale Blender dataset~\cite{barron2021mip} with images at full (512$\times$512), 1/2, 1/4, and 1/8 resolution.
For the baselines, we use the implementations of Barron et al.\footnote{\url{https://github.com/google/mipnerf}}~\cite{barron2021mip}. All networks are trained according to the procedure of Mip-NeRF; we use the Adam optimizer with a batch size of 4096 rays and 1e6 training iterations.
The learning rate is annealed logarithmically from 1e-3 to 5e-6 for \bacon{} and 5e-4 to 5e-6 for the baselines.
All networks are composed of 8 hidden layers with 256 hidden features.

For \bacon{}, we adapt the training procedure and architecture as follows.
Rays within the multiscale Blender dataset fall within an 8 by 8 unit volume ($\mathbf{r}(t)\in[-4, 4]^3$), and we find that setting the maximum bandwidth $B$ to 64 cycles per unit interval allows fitting high frequency image details.
\red{To simplify the training procedure, we evaluate all methods without the viewing direction input originally used for NeRF.
This also enables visualization of the \bacon{} Fourier spectrum (see Fig.~\ref{fig:teaser}).}
Thus the input to all networks is a 3D coordinate corresponding to the position along the ray $\mathbf{r}(t)$.
\bacon{} produces four outputs, one for each scale of the dataset: $\mathbf{y}_i$, $i \in [2, 4, 6, 8]$.
The $B_i$ constrain each output to 1/8, 1/4, 1/2, and full resolution, with $\sum_{i=0}^2 B_i = B/8$, $\sum_{i=0}^4 B_i = B/4$, and so on. 
We also adapt the hierarchical sampling procedure of NeRF~\cite{mildenhall2020nerf}, wherein the alpha compositing weights $w_j$ from an initial forward pass are used to resample the ray in regions of non-zero opacity.
To improve efficiency, we use the lowest-resolution output of the network for this initial forward pass and apply the following loss function on pixels rendered using the resampled rays with 256 samples.
\vspace{-0.5em}
\begin{equation}
    \mathcal{L}_\text{\bacon{}} = \sum\limits_{i, j, k} \lVert (\mathbf{C}_k(\mathbf{r}_i, \mathbf{t}_j) - \mathbf{C}_{\text{GT},k}(\mathbf{r}_i) \rVert^2_2,
    \vspace{-0.5em}
\end{equation}
where $i$, $j$, and $k$ index rays, ray positions, and dataset scales, respectively.
\red{For quantitative evaluation, we use per-scale supervision so the \bacon{} outputs are directly comparable to the multiscale ground truth images.}
Finally, we adopt the regularization strategy of Hedman et al.~\cite{hedman2021snerg} to penalize non-zero off-surface opacity \red{(see supplemental for results without per-scale supervision and an ablation study)}. 

Qualitative and quantitative evaluations of \bacon{} for neural rendering, are shown in Fig.~\ref{fig:nerf} and Table~\ref{tab:nerf}. 
\red{\bacon{} achieves better performance than NeRF trained on the multiscale dataset at 1/8 and 1$\times$ resolution}.
We report PSNR at each scale, averaged over all scenes in the multiscale Blender dataset in Table~\ref{tab:nerf}.
In Fig.~\ref{fig:nerf}, we observe that \bacon{} recovers higher frequency details compared to NeRF on the \textit{Materials} and \textit{Drums} scenes.
Mip-NeRF incorporates an additional mechanism which changes the positional encoding along each ray to account for the expansion of the viewing frustum, and achieves the best performance. 
Still, we find that \bacon{} produces high-quality results with a fraction of the parameters at low resolution. 

Additionally, we can use \bacon{} to learn semi-supervised multiscale decompositions of the neural radiance fields.
In this case, we train each output scale at the full resolution, and \bacon{} automatically learns band-limited representations at the intermediate output layers. 
We show an example of this for the \textit{Lego} scene in Fig.~\ref{fig:teaser}. 
Additional results for \bacon{} in the explicitly supervised and semi-supervised cases are shown in the supplemental.

\begin{table}[t]
    \centering
    \resizebox{\columnwidth}{!}{
    \begin{tabular}{lccccc}
    \toprule
    & FF & \textsc{Siren} & \nglod{}-4 & \nglod{}-5 & \bacon{} 1$\times$ \\\midrule
        \# Params. & 527K & 528K & 1.35M & 10.1M & 531K \\
        Chamfer$\downarrow$ & \textbf{2.166e-6} & 2.780e-6 & 8.358e-6 & 2.422e-6 & \underline{2.198e-6} \\
        IOU $\uparrow$ & \textbf{9.841e-1} & 9.751e-1 & 9.479e-1 & 9.811e-1 & \underline{9.833e-1}\\\bottomrule
\end{tabular}}
    \caption{Shape fitting performance of Fourier Features~\cite{tancik2020fourier}, \textsc{Siren}~\cite{sitzmann2020siren}, Neural Geometric Level of Detail (\nglod{})~\cite{takikawa2021nglod}, and \bacon{} averaged across 5 test scenes (detailed in main text). All methods achieve roughly comparable performance, including \bacon{} despite simultaneously representing multiple scales. Multiple levels of detail are shown for \nglod{}, which requires more parameters to populate the explicit feature grids.}
    \label{tab:shapes}
    \vspace{-1.5em}
\end{table}

\subsection{3D Shape Representation} \label{sec:shapes}
Neural representation networks have shown promise for representing and manipulating 3D shapes.
\bacon{} is well-suited for this task, and we evaluate its performance on a range of shapes from the Stanford 3D scanning repository\footnote{\url{http://graphics.stanford.edu/data/3Dscanrep/}}.

We compare \bacon{} to Fourier Features~\cite{tancik2020fourier}, \textsc{Siren}~\cite{sitzmann2020siren}, and Neural Geometric Level of Detail (\nglod{})~\cite{takikawa2021nglod}, a representation which optimizes explicit features stored on a sparse voxel octree.
\red{We do not compare to Mip-NeRF since it requires per-scale supervision.}
All networks are trained to directly fit a signed distance function (SDF) estimated from a ground truth mesh.

For \bacon, Fourier Features, and \siren, we use networks with 8 hidden layers and 256 hidden features.
For Fourier Features (Gaussian encoding), we set the encoding scale to 8 and for \textsc{Siren}, we set $\omega_0=30$. 
We train on locations sampled from the zero level set and add Laplacian noise; this results in an exponential decay in the number of samples off the zero level, as proposed by Davies et al.~\cite{davies2020effectiveness}.
We find that the width of the Laplacian distribution has a large impact on performance.
Setting the variance $\sigma_\text{L}^2$ too small results in poor off-surface fitting, but setting the variance too high reduces the number of samples on the zero level set, degrading the appearance of the surface.
Thus, we introduce a coarse and fine sampling procedure wherein we produce ``fine'' samples using a small variance of  $\sigma^2_\text{L}=\text{2e-6}$ and ``coarse'' samples with $\sigma_\text{L}^2=\text{2e-2}$.
Samples are drawn in the domain $[-0.5, 0.5]^3$, and we initialize the frequencies of \bacon{} similar to the NeRF experiments (additional details in supplemental).
We train using a loss function
\vspace{-0.5em}
\begin{equation}
    \mathcal{L}_\text{SDF} = \lambda_\text{SDF} \lVert \mathbf{y}^c - \mathbf{y}^c_\text{GT}\rVert_2^2 + \lVert \mathbf{y}^f - \mathbf{y}^f_\text{GT} \rVert_2^2,
    \vspace{-0.5em}
\end{equation}
where $\mathbf{y}$ is the network output, $\mathbf{y}_\text{GT}$ represents the ground truth SDF values, the $f$ and $c$ superscripts indicate fine and coarse samples, and \red{we set $\lambda_\text{SDF}$ to 0.01 for all experiments.}
\red{For \bacon{} we compute this loss at all output scales.}

We train Fourier Features, \textsc{Siren}, and \bacon{} on each dataset for 200,000 iterations with a batch size of 5,000 coarse and 5,000 fine SDF samples. 
Models are optimized using Adam~\cite{kingma2014adam}, and we logarithmically anneal the learning rate of each method from 1e-2 (\bacon{}), 1e-3 (Fourier Features), and 1e-4 (\textsc{Siren}) to a final value of 1e-4 during the course of training.
For \nglod{}, we use the default training settings in the authors' code\footnote{\url{https://github.com/nv-tlabs/nglod}}, which samples 500,000 points at each training epoch, uses a batch size of 512, and trains for 250 epochs.
We train \nglod{} models with a maximum of 4 or 5 levels of detail.
Additional levels of detail result in improved performance, but require more memory.

\begin{figure}[ht]
    \centering
    \includegraphics[width=\columnwidth]{./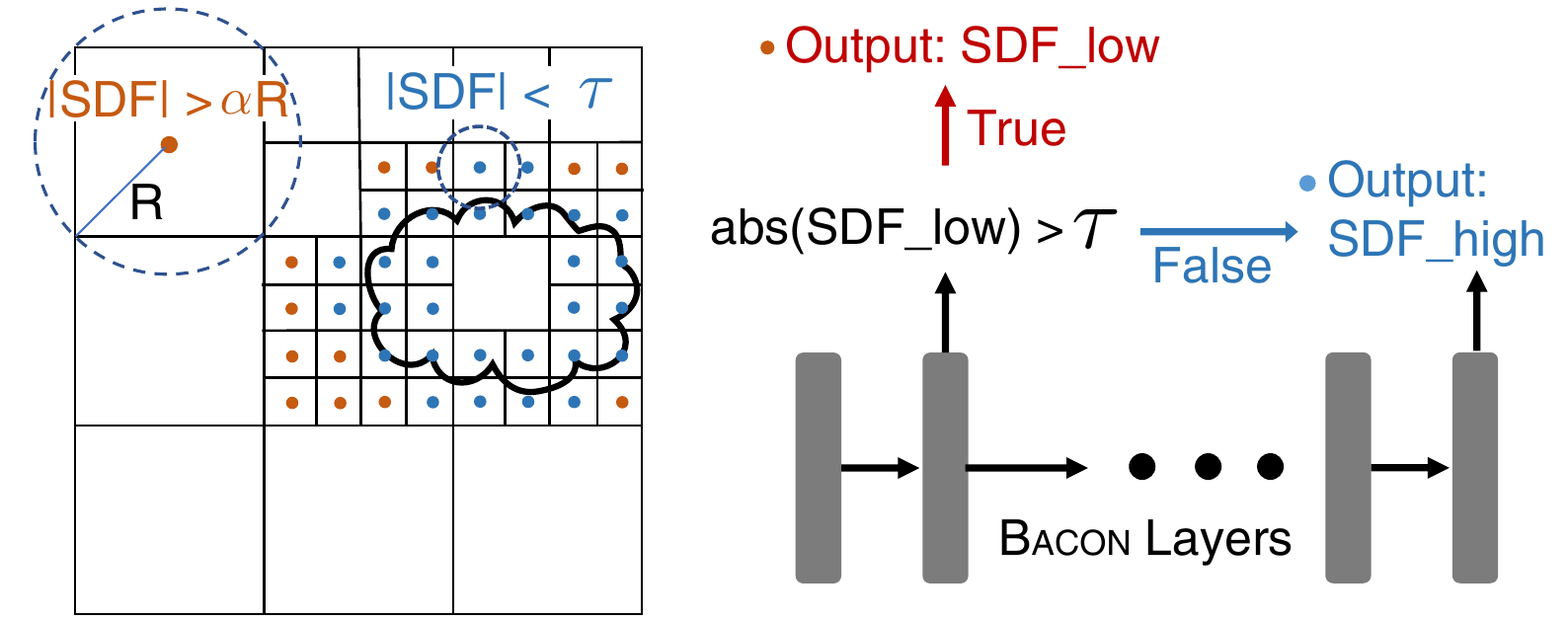}
    \vspace{-2em}
    \caption{Adaptive-frequency multiscale SDF evaluation for fast mesh extraction. 
    We propose a multi-scale evaluation which subdivides non-empty cells, i.e., when the SDF is smaller than the cell radius. We further accelerate the evaluation by using fewer layers (red) for regions that do not require high-frequency details. We evaluate the full network (blue) only when query locations are sufficiently close ($|\text{SDF}|<\tau$) to the surface.}
    \label{fig:SDF_MC}
    \vspace{-0.5em}
\end{figure}

\begin{table}[t]
    \centering
    \resizebox{\columnwidth}{!}{
    \begin{tabular}{lccc}
    \toprule
    & Dense Grid ($512^3$) & Adaptive-Frequency & Adaptive + Multiscale (Proposed) \\\midrule
        Time (s) & 17.91 & 5.50 & \textbf{0.222} \\
        \bottomrule
\end{tabular}}
\caption{SDF evaluation time. The proposed Adaptive + Multiscale method achieves a roughly 80$\times$ speedup over naive evaluation of the SDF on a dense grid (averaged over 5 test scenes).}
    \label{tab:MC}
    \vspace{-1.8em}
\end{table}

We fit each method to four scenes from the Stanford 3D Scanning Repository (\textit{Armadillo}, \textit{Dragon}, \textit{Lucy}, and \textit{Thai Statue}), as well as a simple sphere baseline (all objects are shown in the supplemental).
The models are extracted at $512^3$ resolution using marching cubes and evaluated using Chamfer distance and intersection over union (IOU), and we report these numbers averaged over the 5 scenes in Table~\ref{tab:shapes}. 
The highest resolution outputs of all methods achieve comparable performance, though note that \nglod{}-5 requires over an order of magnitude more parameters than the other representations, and \bacon{} achieves this despite representing all scales simultaneously.

We see similar qualitative trends in Fig.~\ref{fig:shapes}, with Fourier Features, \textsc{Siren}, \nglod{}-4, and \bacon{} all producing detailed reconstructions of the \textit{Thai Statue} scene.
\bacon{} produces a smooth reconstruction at multiple scales because of its band-limited output layers.
This can be compared to the low-resolution outputs of \nglod{}, which show fewer finer details, but also have coarse, angular artifacts from the ReLU non-linearity used in the network. 
This follows from the \nglod{} frequency spectrum (see Fig.~\ref{fig:shapes}), which is non-zero for high frequencies, including at the coarsest scale. 

\vspace{-1em}
\paragraph{Accelerated Marching Cubes.}
We observe that the band-limited, multi-output nature of our network allows efficient allocation of resources when evaluating SDFs on a dense grid for mesh extraction.
The key idea is to use the lower-layer output of \bacon{} when a cell is far away from the surface (Fig.~\ref{fig:SDF_MC}).
That is, we early-stop the computation within the network when $|\text{SDF}|<\tau$, where we set $\tau$ to 0.7$\times$ the finest voxel size. This adaptive computation significantly reduces the mesh extraction time (Table~\ref{tab:MC}).

Moreover, we propose a multiscale approach for further acceleration.
As SDF values indicate the distance to the closest surface, we can consider a cell to be empty when $|\text{SDF}|>\alpha\text{R}$, for circumsphere radius $\text{R}$ and some $\alpha>1$ that improves robustness to imperfect SDFs (we use $\alpha=2$).
Starting from the coarsest resolution grid, we subdivide a cell only when $|\text{SDF}|<\alpha\text{R}$ to prune empty space.
Multiscale extraction approaches have been proposed for extracting occupancy fields from coordinate networks~\cite{mescheder2019occupancy}, but using an SDF facilitates pruning since each sample reveals a region of empty space.

We combine the two strategies by applying the adaptive-frequency evaluation on each level of the multiscale grids, leading to roughly 80$\times$ faster SDF evaluation than the naive approach as shown in Table~\ref{tab:MC} (see supplemental results, all timings evaluated on an NVIDIA RTX A6000 GPU).

\section{Conclusion}
\label{sec:conclusion}
\vspace{-0.2em}
In this work, we take steps towards making coordinate networks interpretable and scale aware.
Our approach enables analyzing and controlling the spectral bandwidth of the network at intermediate layers, allowing multiscale signal representation, even without explicit supervision.  
\red{Since we can characterize the bandwidth of the network using Fourier analysis, its behavior is provably constrained, even at unsupervised locations.}
Moreover, \bacon{}'s intermediate outputs help to improve inference times via adaptive frequency evaluation.
We show that \bacon{} outperforms other single-scale coordinate networks for multiscale image fitting, neural rendering, and 3D scene representation. 

\vspace{-1em}
\paragraph{Limitations.}
We also highlight a few limitations of \bacon{} and promising future directions. 
While we demonstrated fitting two- and three-dimensional signals, fitting signals in higher dimensions may require more parameters to achieve dense spectral coverage due to the curse of dimensionality.
Still, it may be possible to optimize the initialization of frequencies in a way that maximizes spectral coverage and mitigates this challenge.
Also, our current work is limited to single scene overfitting.
However, many generative models  work by increasing the frequency of the output using successive upsampling layers~\cite{karras2019style}, which is similar in spirit to our method.
Recent work on band-limited models for image synthesis has shown great promise~\cite{karras2021alias}, so applying \bacon{} for generative modeling is an exciting area for future research.

\vspace{-1em}
\paragraph{Societal Impact.} 
\red{We condemn the misuse of scene representation networks, including \bacon{}, for malicious deepfakes or spreading misinformation, and we emphasize the importance of research to thwart such efforts (see, e.g., Tewari et al.~\cite{tewari2020state} for a discussion of related strategies).}

\vspace{-0.5em}
\paragraph{Acknowledgments.}  
\red{This project was supported in part by a PECASE by the ARO, NSF award 1839974, Samsung GRO, and Stanford HAI.}

{\small
\bibliographystyle{ieee_fullname}
\bibliography{references}
}

\end{document}


\onecolumn

\title{\textsc{Bacon}: Band-limited Coordinate Networks\\ for Multiscale Scene Representation\\\vspace{0.3em}Supplemental Material}

\author{David B. Lindell
\qquad Dave Van Veen
\qquad Jeong Joon Park
\qquad Gordon Wetzstein\\[0.25em]
Stanford University\\
{\small\url{http://computationalimaging.org/publications/bacon}}
}

\maketitle

{
\begin{spacing}{1.5} 
\hypersetup{linkcolor=black}
\tableofcontents
\end{spacing}
}

\newpage

\section{Supplemental Derivations}

\subsection{Deriving the Number of Sines}

\begin{lemma}
The product of two sines is the sum of two sines with frequencies corresponding to the sum and difference of the initial frequencies. 
\begin{align}
    \sin(\omega_1 x + \phi_1) \cdot \sin(\omega_2 x + \phi_2) = \frac{1}{2} \Big[\sin\big((\omega_1 + \omega_2)\cdot x + \phi_1 + \phi_2 - \pi/2\big) + \sin\big((\omega_1 - \omega_2)\cdot x + \phi_1 - \phi_2 + \pi/2\big)\Big]
\end{align}
\label{lemma:sine_product}
\end{lemma}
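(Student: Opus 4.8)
The plan is to reduce the claim to the elementary product-to-sum identity for cosines and then re-express the resulting cosines as phase-shifted sines. First I would recall the two cosine addition formulas, $\cos(A-B) = \cos A\cos B + \sin A\sin B$ and $\cos(A+B) = \cos A\cos B - \sin A\sin B$; subtracting the second from the first and dividing by two yields $\sin A\sin B = \tfrac{1}{2}\bigl[\cos(A-B) - \cos(A+B)\bigr]$.

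Next I would substitute $A = \omega_1 x + \phi_1$ and $B = \omega_2 x + \phi_2$, so that $A+B = (\omega_1+\omega_2)x + (\phi_1+\phi_2)$ and $A-B = (\omega_1-\omega_2)x + (\phi_1-\phi_2)$. This already produces the two target frequencies $\omega_1 \pm \omega_2$, but inside cosines rather than sines. The final step is cosmetic: using $\cos\theta = \sin(\theta + \pi/2)$ to rewrite the $+\cos(A-B)$ term as $\sin\bigl((\omega_1-\omega_2)x + \phi_1 - \phi_2 + \pi/2\bigr)$, and $-\cos\theta = \sin(\theta - \pi/2)$ to rewrite the $-\cos(A+B)$ term as $\sin\bigl((\omega_1+\omega_2)x + \phi_1 + \phi_2 - \pi/2\bigr)$. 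Collecting these two terms reproduces the stated right-hand side verbatim.

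There is no substantive obstacle here; the only thing to watch is the sign and phase bookkeeping — in particular, that it is the \emph{difference} of the two cosine formulas (not their sum) that isolates $\sin A\sin B$, and that the minus sign attached to $\cos(A+B)$ must be absorbed as a $-\pi/2$ phase shift rather than a $+\pi/2$ one. An alternative route would expand each factor via $\sin(\omega x + \phi) = \sin(\omega x)\cos\phi + \cos(\omega x)\sin\phi$ and multiply out, but this yields four terms that then must be regrouped using further product-to-sum identities, so the direct product-to-sum shortcut is cleaner and less error-prone.
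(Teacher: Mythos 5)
Your proof is correct, but it takes a genuinely different route from the paper. The paper works entirely in complex exponentials: it writes each sine via Euler's formula as $\sin(a) = \frac{e^{ja}-e^{-ja}}{2j}$, multiplies out, and regroups the four resulting exponentials into two sine combinations, absorbing the stray minus signs as factors of $e^{\pm j\pi/2}$ so that the $\mp\pi/2$ phase shifts emerge directly from the algebra. You instead stay in real arithmetic: you derive (or quote) the product-to-sum identity $\sin A\sin B = \tfrac{1}{2}\bigl[\cos(A-B)-\cos(A+B)\bigr]$ from the two cosine addition formulas, substitute the linear arguments, and then convert each cosine to a phase-shifted sine using $\cos\theta=\sin(\theta+\pi/2)$ and $-\cos\theta=\sin(\theta-\pi/2)$. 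The two arguments are essentially dual — the cosine addition formulas are themselves consequences of Euler's formula — but yours is more elementary in that it never leaves the reals, and it makes the origin of the two frequencies $\omega_1\pm\omega_2$ immediately legible as $A\pm B$; the paper's version is tighter to verify symbolically because every sign is carried as an explicit complex phase factor. Your remark about the sign bookkeeping on $-\cos(A+B)\mapsto -\pi/2$ shift versus $+\cos(A-B)\mapsto+\pi/2$ shift is exactly the right place to be careful, and you get it right.
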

\begin{proof}
    Without loss of generality, let $\sin(a) = \sin(\omega_1 x + \phi_1)$ and $\sin(b) = \sin(\omega_2 x + \phi_2)$.
    Then, 
    \begin{align}
        \sin(a) \cdot \sin(b) &= \frac{e^{ja} - e^{-ja}}{2j} \cdot  \frac{e^{jb} - e^{-jb}}{2j} \\
                              &= \frac{e^{j(a+b)} + e^{-j(a+b)} - e^{j(a-b)} - e^{-j(a-b)}}{-4} \\
                              &= -\frac{1}{2}\frac{e^{-j\pi/2}}{-j}\frac{e^{j(a+b)} + e^{-j(a+b)}}{2} + \frac{1}{2}\frac{e^{j\pi/2}}{j}\frac{e^{j(a-b)} + e^{-j(a-b)}}{2} \\
                              &= \frac{1}{2}\frac{e^{j(a+b-\pi/2)} + e^{-j(a+b-\pi/2)-j\pi}}{2j} + \frac{1}{2}\frac{e^{j(a-b+\pi/2)} + e^{-j(a-b+\pi/2)+j\pi}}{2j} \\
                              &= \frac{1}{2}\frac{e^{j(a+b-\pi/2)} - e^{-j(a+b-\pi/2)}}{2j} + \frac{1}{2}\frac{e^{j(a-b+\pi/2)} - e^{-j(a-b+\pi/2)}}{2j} \\
                              &= \frac{1}{2}\Big[\sin(a+b - \pi/2) + \sin(a-b +\pi/2)\Big]\\
    \end{align}
\end{proof}

\begin{theorem}
    The number of sines represented by a multiplicative filter network is given as 
    \begin{align*}
        N_\text{\normalfont sine}^{(N_\text{\normalfont L})} = \sum\limits_{i=0}^{N_\text{\normalfont L}-1} 2^{i} d_\text{\normalfont h}^{i+1},
    \end{align*}
    where $N_\text{\normalfont L}$ is the number of layers and $d_\text{\normalfont h}$ is the number of hidden features in the network.
    \label{thm:num_sines}
\end{theorem}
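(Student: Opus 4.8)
The plan is to propagate a count of the distinct sinusoids appearing in the hidden activations of the network, turn that propagation into a recurrence via \reflemma{lemma:sine_product}, solve the recurrence, and sum the contributions of the $N_\text{L}$ layers/output scales.

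First, I would recall the structure of the multiplicative filter network: the first hidden layer $z^{(1)} = \sin(\Omega_1 x + \phi_1)$ is a $d_\text{h}$-vector each of whose entries is a single sinusoid, and subsequent layers are formed as $z^{(i+1)} = \big(W_i z^{(i)} + b_i\big) \odot \sin(\Omega_{i+1} x + \phi_{i+1})$, where $\odot$ denotes the elementwise product; a multiscale network additionally reads off an affine combination of the entries of $z^{(i)}$ at each scale $i = 1,\dots,N_\text{L}$. Let $s_i$ be the number of sinusoids contained in one scalar entry of $z^{(i)}$; by construction $s_1 = 1$.

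Second, I would set up the recurrence. Each entry of $W_i z^{(i)} + b_i$ is a linear combination of the $d_\text{h}$ entries of $z^{(i)}$, hence a sum of $d_\text{h}\, s_i$ sinusoids (the constant bias term introduces only a lower-order correction and is not counted here, following the usual convention). Multiplying this sum entrywise by the single sinusoid of the next filter and applying \reflemma{lemma:sine_product} term by term replaces each sinusoid by exactly two (at the sum and difference frequencies), so $s_{i+1} = 2 d_\text{h}\, s_i$. Together with $s_1 = 1$ this gives $s_i = (2 d_\text{h})^{i-1} = 2^{\,i-1} d_\text{h}^{\,i-1}$, hence $z^{(i)}$ (with its $d_\text{h}$ entries) — and likewise the affine scale-$i$ readout — contains $d_\text{h}\, s_i = 2^{\,i-1} d_\text{h}^{\,i}$ sinusoids. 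Summing over the $N_\text{L}$ layers,
\[
  N_\text{sine}^{(N_\text{L})} \;=\; \sum_{i=1}^{N_\text{L}} 2^{\,i-1} d_\text{h}^{\,i} \;=\; \sum_{i=0}^{N_\text{L}-1} 2^{\,i} d_\text{h}^{\,i+1},
\]
which is the claimed expression.

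The induction itself is routine; the point that needs care is justifying that the count is exact rather than merely an upper bound — i.e., that the sum/difference frequencies produced by \reflemma{lemma:sine_product} do not collide (which holds for generic filter frequencies, so no terms merge) and that the affine bias terms are treated consistently (they add only constants, not counted as sinusoids here). With that convention fixed, the one-line recurrence $s_{i+1} = 2 d_\text{h}\, s_i$ and the resulting geometric sum give the result directly.
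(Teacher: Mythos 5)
Your recurrence $s_{i+1} = 2 d_\text{h}\, s_i$ omits the biases, and then you recover the claimed formula only by reinterpreting the theorem as a \emph{sum over scales}. Both steps are incorrect, and they happen to cancel each other out numerically; this is a coincidence of the algebra, not a proof.

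Concretely: you drop the bias term $\mathbf{b}_i$ on the grounds that it ``introduces only a lower-order correction,'' but in a multiplicative filter network $\mathbf{b}_i$ is not a constant in the final output. After the Hadamard product with $g_{i+1}(\mathbf{x}) = \sin(\boldsymbol{\omega}_{i+1}\mathbf{x} + \boldsymbol{\phi}_{i+1})$, the term $\mathbf{b}_i \circ g_{i+1}(\mathbf{x})$ is a vector of $d_\text{h}$ genuine sinusoids. Discarding those gives the biasless recurrence $s_{i+1} = 2 d_\text{h}\, s_i$ and hence $2^{N_\text{L}-1} d_\text{h}^{N_\text{L}}$ sines at the final layer. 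That is exactly the \emph{corollary} stated in this section, not the theorem. The theorem counts sines in the output of the final layer \emph{with biases included}, and the correct recurrence is
\begin{align*}
    N_\text{sine}^{(1)} = d_\text{h}, \qquad N_\text{sine}^{(i+1)} = d_\text{h} + 2 d_\text{h}\, N_\text{sine}^{(i)},
\end{align*}
where the extra $d_\text{h}$ per step is precisely the bias-times-sine contribution. Solving this gives the claimed geometric sum directly.

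Your ``sum over the $N_\text{L}$ layers'' step tries to repair the discrepancy by adding up the (biasless) sine count in each scale's readout, but that computes a different quantity from what the theorem asserts. The two totals agree numerically only because the first-order linear recurrence $a_{i+1} = d_\text{h} + 2d_\text{h}\, a_i$, $a_1 = d_\text{h}$, telescopes into the same geometric sum $\sum_{i=0}^{N_\text{L}-1} 2^i d_\text{h}^{i+1}$ that you get by summing the biasless per-layer counts. If your interpretation were the intended one, the corollary (no biases) would yield the same formula as the theorem and thus be vacuous; instead the paper records a strictly smaller count, which confirms that the theorem is about the final layer. To fix the argument, keep the biases in the recurrence and prove the closed form for that recurrence (by induction or by unrolling); your per-entry count $s_i$ is fine as a warm-up, but the final bookkeeping must track the $d_\text{h}$ new sines injected at each layer.
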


\begin{proof}
Consider a $N_\text{L}=1$ layer network, given as 
\begin{align}
    \mathbf{z}_0 = \sin(\boldsymbol{\omega}_0 \mathbf{x} + \boldsymbol{\phi}_0)
\end{align}
%
This expression is a vector of sines, and the proof follows trivially for $N_\text{L}=1$; the number of sines represented by this network is exactly $d_\text{h}$ because we have $\mathbf{x} \in \mathbb{R}^{d_\text{in}}$ and $\boldsymbol{\omega}_0 \in \mathbb{R}^{d_\text{h} \times d_\text{in}}$.

To build intuition, we also analyze the case of $N_\text{L}=2$, which gives
\begin{align}
    \mathbf{z}_1 &= \sin(\boldsymbol{\omega}_1 \mathbf{x} + \boldsymbol{\phi}_1)\circ \left[\mathbf{W}_1 \sin(\boldsymbol{\omega}_0 \mathbf{x} + \boldsymbol{\phi}_0) + \mathbf{b}_1\right]\\
                 &= \sin(\boldsymbol{\omega}_1 \mathbf{x} + \boldsymbol{\phi}_1)\circ\left[\mathbf{W}_1 \sin(\boldsymbol{\omega}_0 \mathbf{x} + \boldsymbol{\phi}_0)\right] + \sin(\boldsymbol{\omega}_1 \mathbf{x} + \boldsymbol{\phi}_1)\circ \mathbf{b}_1\\
\end{align}
where
\begin{align}
    &\sin(\boldsymbol{\omega}_1 \mathbf{x} + \boldsymbol{\phi}_1)\circ\left[\mathbf{W}_1 \sin(\boldsymbol{\omega}_0 \mathbf{x} + \boldsymbol{\phi}_0)\right]\\
    &= \begin{bmatrix}
        \sin(\boldsymbol{\omega}_1^{(1)}\mathbf{x} + \phi_1^{(1)})\\
        \vdots\\
        \sin(\boldsymbol{\omega}_1^{(d_\text{h})}\mathbf{x} + \phi_1^{(d_\text{h})})
    \end{bmatrix}\circ
    \begin{bmatrix}
        W_1^{(1, 1)} \sin(\boldsymbol{\omega}_0^{(1)}\mathbf{x} + \phi_0^{(1)}) + \cdots + W_1^{(1, d_\text{h})} \sin(\boldsymbol{\omega}_0^{(d_\text{h})}\mathbf{x} + \phi_0^{(d_\text{h})}) \\
        \vdots \\
        W_1^{(d_\text{h}, 1)} \sin(\boldsymbol{\omega}_0^{(1)}\mathbf{x} + \phi_0^{(1)}) + \cdots + W_1^{(d_\text{h}, d_\text{h})} \sin(\boldsymbol{\omega}_0^{(d_\text{h})}\mathbf{x} + \phi_0^{(d_\text{h})}).
    \end{bmatrix}
\end{align}
Given that the Hadamard product between each row in the above expression results in a doubling of the number of sines (\reflemma{lemma:sine_product}), the entire Hadamard product results in a vector containing $2d_\text{h}^2$ sines.
The remaining term $\sin(\boldsymbol{\omega}_1 \mathbf{x} + \boldsymbol{\phi}_1)\circ \mathbf{b}_1$ contributes an additional $d_\text{h}$ sines, for a total $2d_\text{h}^2 + d_\text{h}$ sines in the $N_\text{L}=2$ layer network. In general, the linear component $\mathbf{W}_i$ of each layer multiplies the total number of sines by $d_\text{h}$ and the Hadamard product with the sine doubles this by a factor of 2. An additional $d_\text{h}$ sines are contributed by the bias term. 

Now, assume we have a network with $N_\text{L} = k$ layers. Let $L_i(\mathbf{z}) = \mathbf{W}_i \mathbf{z} + \mathbf{b}_i$ and let $g_i(\mathbf{x}) = \sin(\boldsymbol{\omega}_i\mathbf{x} + \boldsymbol{\phi}_i)$. Then we have
\begin{align}
    \mathbf{z}_{k-1} = g_{k-1}(\mathbf{x}) \circ (L_{k-1} (g_{k-2}(\mathbf{x}) \circ ( \ldots (\overbrace{g_2(\mathbf{x}) \circ (\overbrace{L_2 ( \underbrace{g_1(\mathbf{x}) \circ (\underbrace{L_1 \underbrace{g_0(\mathbf{x})}_{d_\text{h}}}_{d_\text{h}^2})}_{2d_\text{h}^2 + d_\text{h}})}^{d_\text{h}(2d_\text{h}^2 + d_\text{h})})}^{d_\text{h} + 2d_\text{h}(2d_\text{h}^2 + d_\text{h})})\ldots)))),
\end{align}
where the brackets indicate the number of sines for successive terms, revealing the following recursion for the number of sines in each layer.
\begin{align}
    N_\text{sine}^{(1)} &= d_\text{h} \\
    N_\text{sine}^{(i+1)} &= d_\text{h} + 2d_\text{h} N_\text{sine}^{(i)}
\end{align}

To complete a proof by induction let us assume that the theorem holds for $N_\text{L}=k$, and we have 

\begin{align}
    N_\text{sine}^{(k+1)} &= d_\text{h} + 2d_\text{h} N_\text{sine}^{(k)}\\
                          &= d_\text{h} + 2d_\text{h} \sum\limits_{i=0}^{k-1} 2^{i}d_\text{h}^{i+1} \\
            &= d_\text{h} + \sum\limits_{i=0}^{k-1} 2^{i+1}d_\text{h}^{i+2} \quad \text{let } j = i+1.\\
            &= d_\text{h} + \sum\limits_{j=1}^{k} 2^{j}d_\text{h}^{j+1}\\
            &= \sum\limits_{j=0}^{k} 2^{j}d_\text{h}^{j+1}
\end{align}
which is the original result, completing the proof.
\end{proof}

\begin{corollary}

If we remove the bias layers from each $L_i$, then we remove the additional of $d_\text{h}$ from the recursion and it becomes
\begin{align}
    \tilde{N}_\text{\normalfont sine}^{(1)} &= d_\text{h} \\
    \tilde{N}_\text{\normalfont sine}^{(i+1)} &= 2d_\text{h} N_\text{\normalfont sine}^{(i)},
\end{align}
such that 
\begin{align}
    \tilde{N}_\text{\normalfont sine}^{(N_\text{\normalfont L})} = 2^{N_\text{\normalfont L} - 1} d_\text{h}^{N_\text{\normalfont L}}.
\end{align}
\end{corollary}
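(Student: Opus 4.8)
The plan is to derive the corollary directly from the proof of \refthm{thm:num_sines}, since the additive constant $d_\text{h}$ in the recursion there entered \emph{only} through the term $\sin(\boldsymbol{\omega}_i\mathbf{x}+\boldsymbol{\phi}_i)\circ\mathbf{b}_i$. First I would revisit the layer decomposition used in that proof, but now with $L_i(\mathbf{z}) = \mathbf{W}_i\mathbf{z}$ (no bias): the recursive step $\mathbf{z}_i = g_i(\mathbf{x})\circ(\mathbf{W}_i\mathbf{z}_{i-1})$ still multiplies the running sine count by $d_\text{h}$ — each output coordinate of $\mathbf{W}_i\mathbf{z}_{i-1}$ is a linear combination of $d_\text{h}$ copies of the incoming sines — and the subsequent Hadamard product with $g_i(\mathbf{x})=\sin(\boldsymbol{\omega}_i\mathbf{x}+\boldsymbol{\phi}_i)$ doubles it by \reflemma{lemma:sine_product}. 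What vanishes is exactly the $+\,d_\text{h}$ previously contributed by $g_i(\mathbf{x})\circ\mathbf{b}_i$. The base case is unchanged, $\tilde{N}_\text{sine}^{(1)}=d_\text{h}$, because layer $0$, $\mathbf{z}_0=\sin(\boldsymbol{\omega}_0\mathbf{x}+\boldsymbol{\phi}_0)$, carries neither a linear map nor an additive bias (its $\boldsymbol{\phi}_0$ is a phase inside the sine). This establishes the stated recursion $\tilde{N}_\text{sine}^{(i+1)} = 2d_\text{h}\,\tilde{N}_\text{sine}^{(i)}$.

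Next I would solve this homogeneous linear recursion. The quickest route is to unroll it, $\tilde{N}_\text{sine}^{(N_\text{L})} = (2d_\text{h})^{N_\text{L}-1}\,\tilde{N}_\text{sine}^{(1)} = (2d_\text{h})^{N_\text{L}-1}\,d_\text{h} = 2^{N_\text{L}-1} d_\text{h}^{N_\text{L}}$; alternatively, to parallel the induction in the theorem, one checks the base case $\tilde{N}_\text{sine}^{(1)} = d_\text{h} = 2^{0}d_\text{h}^{1}$ and the step $\tilde{N}_\text{sine}^{(k+1)} = 2d_\text{h}\cdot 2^{k-1}d_\text{h}^{k} = 2^{k}d_\text{h}^{k+1}$, closing the induction.

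No step is genuinely difficult here; the only point that deserves care is the claim that deleting every bias removes precisely the additive $d_\text{h}$ at each layer and leaves the multiplicative factor $2d_\text{h}$ untouched. I would make this explicit by noting that each layer's output splits as a sum of two disjoint groups of sines — those descending from $\mathbf{W}_i\mathbf{z}_{i-1}$ and those descending from $\mathbf{b}_i$ — so that setting all $\mathbf{b}_i=\mathbf{0}$ simply deletes the latter group at every level, yielding the clean recursion above rather than merely subtracting a term from the final count.
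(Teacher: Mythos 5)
Your proof is correct and is essentially the argument the paper leaves implicit: the corollary is stated without an explicit proof, and the only substantive claim is that dropping every $\mathbf{b}_i$ deletes exactly the additive $d_\text{h}$ from the recursion of \refthm{thm:num_sines} while leaving the multiplicative factor $2d_\text{h}$ intact, after which the closed form follows by unrolling (or a one-line induction). Your closing remark — that each layer's sines split into two disjoint groups, those descending from $\mathbf{W}_i\mathbf{z}_{i-1}$ and those descending from $\mathbf{b}_i$, so removing the bias simply deletes the second group at every level — is the right way to make the reduction rigorous, and is exactly why this is a corollary rather than a new theorem.

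One small housekeeping note: the paper's displayed recursion writes $\tilde{N}_\text{sine}^{(i+1)} = 2d_\text{h}\,N_\text{sine}^{(i)}$ with an untilded $N$ on the right; that is a typo for $\tilde{N}_\text{sine}^{(i)}$, and you correctly use the tilded quantity throughout.
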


\subsection{Deriving the Distribution of Frequencies}

\begin{lemma}
Let $X_i$, $i=1, 2, \ldots$  be a sequence of independent, identically distributed random variables. 
Then, let $N$ be a discrete random variable that is independent of $X_i$ and takes on values $N > 0$.
Now, define the compound random variable 
\begin{align}
    S_N = \sum\limits_{i=0}^{N-1} X_i,
\end{align}
The variance of $S_N$ is given by
\begin{align}
    \text{\normalfont Var}(S_N) = E[N]\, \text{\normalfont Var}(X_1) + \text{\normalfont Var}(N)E[X_1]^2
\end{align}
\label{lemma:total_variance}

\begin{proof}
    The proof follows from the law of total variance (see, e.g., p.\ 286 of Chatfield and Theobald~\cite{chatfield1973mixtures}).
\end{proof}
\end{lemma}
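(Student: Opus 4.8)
The plan is to invoke the law of total variance in its standard form, $\text{Var}(S_N) = E\!\left[\text{Var}(S_N \mid N)\right] + \text{Var}\!\left(E[S_N \mid N]\right)$, and then evaluate the two conditional moments explicitly using the i.i.d.\ assumption on the $X_i$ together with their independence from $N$. The whole argument is essentially a two-line computation once the conditional expectation and conditional variance are in hand, so the work is in justifying those two quantities carefully.

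First I would compute the conditional expectation. Conditioning on $N = n$, and using that $N$ is independent of the $X_i$ so that the law of the summands is unaffected by the conditioning, linearity gives $E[S_N \mid N = n] = \sum_{i=0}^{n-1} E[X_i] = n\,E[X_1]$, hence $E[S_N \mid N] = N\,E[X_1]$ as a random variable. Next I would compute the conditional variance: again conditioning on $N = n$ and now additionally using that the $X_i$ are (pairwise) independent, the variance of a sum is the sum of variances, so $\text{Var}(S_N \mid N = n) = \sum_{i=0}^{n-1} \text{Var}(X_i) = n\,\text{Var}(X_1)$, i.e.\ $\text{Var}(S_N \mid N) = N\,\text{Var}(X_1)$.

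Substituting these into the law of total variance yields
\begin{align}
    \text{Var}(S_N) = E\!\left[N\,\text{Var}(X_1)\right] + \text{Var}\!\left(N\,E[X_1]\right) = E[N]\,\text{Var}(X_1) + \text{Var}(N)\,E[X_1]^2,
\end{align}
where the last step pulls the deterministic constants $\text{Var}(X_1)$ and $E[X_1]^2$ out of the outer expectation and variance. This is exactly the claimed identity.

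The only genuinely delicate point — the ``main obstacle'' such as it is — is making the independence of $N$ from $\{X_i\}$ do its job at the right moment: it is what lets me treat $E[X_i]$ and $\text{Var}(X_i)$ as unchanged under conditioning on $N$, and it is also implicitly what makes $S_N$ a well-defined compound random variable with the moments above (one should note the lemma tacitly assumes $E[N]$, $\text{Var}(N)$, and the second moment of $X_1$ are finite, which I would state as hypotheses). Everything else is bookkeeping, which is why citing p.\ 286 of Chatfield and Theobald~\cite{chatfield1973mixtures} for the underlying law of total variance is adequate.
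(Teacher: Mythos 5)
Your proof is correct and follows exactly the route the paper gestures at: the paper's proof is just a citation to the law of total variance, and you have simply written out the standard two-term decomposition $\text{Var}(S_N) = E[\text{Var}(S_N\mid N)] + \text{Var}(E[S_N\mid N])$ with the straightforward conditional-moment computations. No new ideas beyond the paper's one-line reference; your remark about the tacit finiteness hypotheses is a fair observation but does not change the argument.
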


\begin{lemma}
Central Limit Theorem. Let $X_1, \ldots, X_n$ be a sequence of independent and identically distributed random variables with finite mean and variance, $\mu$ and $\sigma^2$, respectively, and let $S_n = \sum\limits_{i=1}^n X_i$.
For large $n$, $S_n$ approximates the normal distribution with $E[s_n] = n\mu$ and $\text{\normalfont Var}(s_n)=n\sigma^2$
    \begin{proof}
        See Ash et al.~\cite{ash2000probability}.
    \end{proof}
    \label{lemma:clt}
\end{lemma}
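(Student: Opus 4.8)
The plan is to prove the Central Limit Theorem via characteristic functions and Lévy's continuity theorem, which is the cleanest route given that we assume only a finite second moment (and hence cannot invoke a third-moment, Berry--Esseen-type argument). First I would reduce to the standardized case: set $Y_i = (X_i - \mu)/\sigma$, so the $Y_i$ are i.i.d.\ with $E[Y_i] = 0$ and $\mathrm{Var}(Y_i) = 1$, and write $Z_n = (S_n - n\mu)/(\sigma\sqrt{n}) = \tfrac{1}{\sqrt{n}}\sum_{i=1}^n Y_i$. Establishing $Z_n \Rightarrow \mathcal{N}(0,1)$ is equivalent to the stated claim that $S_n$ is approximately $\mathcal{N}(n\mu, n\sigma^2)$, since the latter is just $Z_n$ rescaled by $\sigma\sqrt{n}$ and shifted by $n\mu$.

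The core computation is on characteristic functions. Let $\varphi(t) = E[e^{itY_1}]$. Because $Y_1$ has a finite second moment, $\varphi$ is twice differentiable at the origin with $\varphi(0)=1$, $\varphi'(0)=iE[Y_1]=0$, and $\varphi''(0)=-E[Y_1^2]=-1$; a second-order Taylor expansion then gives $\varphi(t) = 1 - \tfrac{1}{2}t^2 + o(t^2)$ as $t\to 0$. By independence, $\varphi_{Z_n}(t) = \varphi(t/\sqrt{n})^n$, so for each fixed $t$,
\begin{align*}
    \varphi_{Z_n}(t) = \left(1 - \frac{t^2}{2n} + o\!\left(\frac{1}{n}\right)\right)^{\!n} \longrightarrow e^{-t^2/2},
\end{align*}
using the elementary fact that $(1 + z_n/n)^n \to e^z$ whenever $z_n \to z$ in $\mathbb{C}$. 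Since $e^{-t^2/2}$ is the characteristic function of $\mathcal{N}(0,1)$ and is continuous at $t=0$, Lévy's continuity theorem yields $Z_n \Rightarrow \mathcal{N}(0,1)$, and undoing the standardization gives the statement.

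The main obstacle is making the expansion $\varphi(t) = 1 - \tfrac12 t^2 + o(t^2)$ rigorous using only the finite second moment --- i.e.\ justifying differentiation under the expectation, or equivalently bounding the remainder via $|e^{itx} - (1 + itx - \tfrac12 t^2 x^2)| \le \min\{|tx|^3,\, |tx|^2\}$ and invoking dominated convergence --- and then handling the complex quantity $\varphi(t/\sqrt{n})^n$ carefully (the logarithm is multivalued, so one should work with the $(1+z_n/n)^n$ limit directly, or control $|\varphi(t/\sqrt{n})^n - (1 - t^2/2n)^n|$ through the telescoping inequality $|\prod a_i - \prod b_i| \le \sum |a_i - b_i|$ valid when $|a_i|,|b_i|\le 1$). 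Everything else --- the reduction, the product formula for the characteristic function of a sum, and the final rescaling --- is routine. Alternatively, and as the surrounding text elects to do, one may simply cite a standard reference such as Ash et al.\ for the complete argument.
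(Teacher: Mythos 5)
The paper's ``proof'' of this lemma is nothing more than a citation to Ash et al.; no argument is given because the Central Limit Theorem is a standard textbook result being invoked, not established. Your proposal, by contrast, supplies the classical characteristic-function argument: standardize to $Z_n = (S_n - n\mu)/(\sigma\sqrt{n})$, Taylor-expand $\varphi(t) = 1 - \tfrac12 t^2 + o(t^2)$ using the finite second moment, pass to $\varphi(t/\sqrt{n})^n \to e^{-t^2/2}$, and conclude via L\'evy's continuity theorem. That is precisely the proof one would find in the cited reference (or in Durrett, Billingsley, etc.), and your identification of the two genuine technical obstacles --- rigorously controlling the $o(t^2)$ remainder with only a second moment, and handling $\varphi(t/\sqrt{n})^n$ without taking a multivalued complex logarithm, e.g.\ via the telescoping product inequality --- is accurate and shows you understand where the work actually lies. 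So your sketch is correct and is in effect the proof the paper is deferring to; the only mismatch is one of register, since the paper deliberately treats this as a black-box citation rather than something to re-derive, as you yourself note in your final sentence.

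One small remark on the statement itself rather than your proof: as written, the lemma's conclusion that ``$S_n$ approximates the normal distribution with $E[S_n]=n\mu$ and $\mathrm{Var}(S_n)=n\sigma^2$'' is the informal, unnormalized phrasing. The mean and variance identities hold exactly for every $n$ by linearity and independence, and the distributional content is the weak convergence of the standardized sum $Z_n \Rightarrow \mathcal{N}(0,1)$, which is exactly what your argument proves. Your reduction step makes this precise, which is a genuine improvement in rigor over the paper's loose wording.
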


\begin{theorem}
    The frequencies of \bacon{} are approximately Gaussian distributed with variance equal to 
    \begin{align}
        \left[\sum\limits_{m=0}^{N_\text{L}-1} m \cdot \frac{2^{N_\text{L}-1-m}d_\text{h}^{N_\text{L}-m}}{\sum\limits_{i = 0}^{N_\text{L}-1} 2^{i}d_\text{h}^{i+1}}\right] \cdot \text{\normalfont Var}(\boldsymbol{\omega}_i^{(j_i)}). 
    \end{align}
where $\text{\normalfont Var}(\boldsymbol{\omega}_i^{(j_i)})$ is the variance of the initialized frequencies in each input sine layer $g_i$. 
    \begin{proof}
        The overall idea is that, if the number of sines is in the network is given as $\sum\limits_{i=0}^{N_\text{L}-1} 2^i d_\text{h}^{i+1}$ (\refthm{thm:num_sines}), then it turns out that the $i$th element in this sum describes the number of sines whose frequency is the sum of $i+1$ random variables.
        Then, we can use \reflemma{lemma:total_variance} and \reflemma{lemma:clt} to derive the variance and distribution of the network frequencies.

        First, let us show that the frequency of each sine represented by the network is itself a sum of random variables.
        We write an expression for the number of sines in the network $F_i(\boldsymbol{\omega})$ at frequency $\boldsymbol{\omega}$ directly after applying the Hadamard product with $g_i(\mathbf{x})$. Let $\delta(\boldsymbol{\omega})$ represent the Dirac delta function. Expanding on our previous results, we have that
\begin{align}
    F_0(\boldsymbol{\omega}) = \int\limits_{-\infty}^\infty \underbrace{\sum\limits_{j=0}^{d_\text{h}-1} \delta(\boldsymbol{\omega} - \boldsymbol{\omega}_0^{(j)})}_{f_0(\boldsymbol{\omega})} \,\mathrm{d}\boldsymbol{\omega}.
\end{align}
This expression simply places a delta function at the location of each frequency and the integral checks to see how many frequencies exist at the input parameter frequency, $\boldsymbol{\omega}$. 

\begin{figure*}[t!]
    \centering
    \includegraphics[width=\textwidth]{./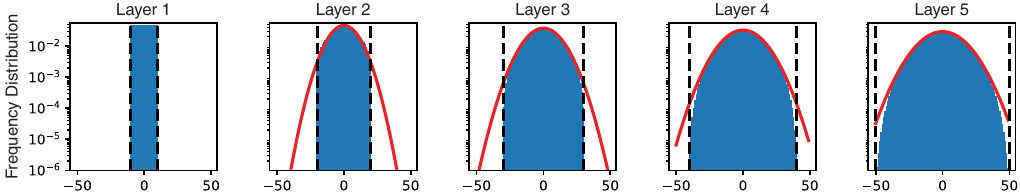}
    \caption{Empirical evaluation of distribution of frequencies. We plot the distribution of frequencies at the output of each layer in a 5-layer network with hidden features $d_\text{h} = 1024$ bandwidth $B_i$ equal to 10 (averaged over 1000 network realizations). The output of the first layer is uniformly distributed and successive layers increasingly approximate the normal distribution according to the Central Limit Theorem. A normal distribution with variance calculated using \refthm{thm:freqs} is plotted in red and closely matches the observed distribution.}
    \label{fig:freqs}
\end{figure*}
        
Now, recall the previous result that adding the next layer multiplies the number of frequencies by $2d_\text{h}$ and adds an additional $d_\text{h}$ frequencies. We use the convolution operator $\ast$ to shift the frequencies of the previous layer according to \reflemma{lemma:sine_product}. The additional $d_h$ frequencies result from the \red{Hadamard product} of the sine layer and the bias term from the previous layer. Thus we can give the following expression for the frequencies at layer $i+1$.
\begin{align}
    F_{i+1}(\boldsymbol{\omega}) = \int\limits_{-\infty}^\infty f_i(\boldsymbol{\omega}) \ast \underbrace{\sum\limits_{j=0}^{d_\text{h}-1} \left[\delta(\boldsymbol{\omega} - \boldsymbol{\omega}_{i+1}^{(j)}) + \delta(\boldsymbol{\omega} + \boldsymbol{\omega}_{i+1}^{(j)}) \right]}_{\text{shifts spectrum according to \reflemma{lemma:sine_product}}} + \underbrace{\sum\limits_{j=0}^{d_\text{h}-1} \delta(\boldsymbol{\omega} - \boldsymbol{\omega}_{i+1}^{(j)})}_{\text{new frequencies from the bias term}} \,\mathrm{d}\boldsymbol{\omega}.
\end{align}

If we ignore frequencies resulting from the bias terms in the above expression, we would have that there are $2^{N_\text{L}-1} d_\text{h}^{N_\text{L}}$ sines in an $N_\text{L}$ layer network (\refthm{thm:num_sines}) with output frequencies given as 
\begin{align}
    \boldsymbol{\bar{\omega}} = \boldsymbol{\omega}_0^{(j_0)} + s_1 \boldsymbol{\omega}_1^{(j_1)} + \cdots + s_{N_\text{L}-1} \boldsymbol{\omega}_{N_\text{L}-1}^{(j_{N_\text{L}-1})}
\end{align}
for some $s_0,\ldots,s_{N_\text{L}-1}\in\{-1, 1\}$ and indices $j_0, \ldots, j_{N_\text{L}-1} \in \{0, 1, \ldots, d_\text{h}-1\}$.
Including the bias term at the $i$th layer (for $i>0$) simply results in an additional $2^{N_\text{L} - 1 - i}d_\text{h}^{N_\text{L}- i}$ sines in the network, whose frequencies are a sum of $N_\text{L} - i$ terms (corresponding exactly to the terms of the sum in \refthm{thm:num_sines}).
Thus, frequencies represented by the network are drawn from a compound distribution (as in~\reflemma{lemma:total_variance}) since they can be the sum of from 1 to $N_\text{L}$ random variables.

Now we can describe the distribution of the frequencies of the network. 
Let the variance of an element of $\boldsymbol{\omega}_i$ be given by $\text{Var}(\boldsymbol{\omega}_i^{(j_i)})$.
Then, the output frequency $\boldsymbol{\bar{\omega}}$ is a compound random variable such that 
\begin{align}
    \boldsymbol{\bar{\omega}} = \boldsymbol{\omega}_M^{(j_M)} + \sum\limits_{i=M+1}^{N_\text{L}-1} s_i \boldsymbol{\omega}_i^{(j_i)} \quad M \in \{0, \ldots, N_\text{L}-1\}
\end{align}
and $M$ is a random variable whose probability depends on the total number of frequencies in the network that contribute to each possible value of $M$. 
Specifically, using \refthm{thm:num_sines} to evaluate the fraction of sines for each value of $M$ gives
\begin{align}
    p_M(m) = \frac{2^{N_\text{L}-1-m}d_\text{h}^{N_\text{L}-m}}{\sum\limits_{i = 0}^{N_\text{L}-1} 2^{i}d_\text{h}^{i+1}}.
\end{align}

We can calculate the resulting variance using the law of total variance outlined in \reflemma{lemma:total_variance}.

\begin{align}
    \text{\normalfont Var}(\boldsymbol{\bar{\omega}}) &= E[M]\, \text{\normalfont Var}(\boldsymbol{\omega}_i^{(j_i)}) + \text{\normalfont Var}(M)E[\boldsymbol{\omega}_i^{(j_i)}]^2\\
                                                      &= E[M]\, \text{\normalfont Var}(\boldsymbol{\omega}_i^{(j_i)}) & \boldsymbol{\omega}_i^{(j_i)} \text{ zero mean}\\
                                                      &= \sum\limits_{m=0}^{N_{\text{L}}-1}m\cdot p_M(m) \, \text{\normalfont Var}(\boldsymbol{\omega}_i^{(j_i)})\\
                                                      &= \left[\sum\limits_{m=0}^{N_\text{L}-1} m \cdot \frac{2^{N_\text{L}-1-m}d_\text{h}^{N_\text{L}-m}}{\sum\limits_{i = 0}^{N_\text{L}-1} 2^{i}d_\text{h}^{i+1}}\right] \cdot \text{\normalfont Var}(\boldsymbol{\omega}_i^{(j_i)})
\end{align}

Finally, we note that $\boldsymbol{\bar{\omega}}$ becomes the sum of a large number of random variables as the number of hidden layers in the network increases, and so we can approximate the distribution as a Gaussian using the Central Limit Theorem ~(\reflemma{lemma:clt}). We show that this holds empirically in Fig.~\ref{fig:freqs}, where we show the simulated distribution of frequencies in each layer of a 5-layer network with $d_\text{h}=1024$ and $B_i=10$, averaged over 1000 realizations. The distribution of frequencies is well-approximated by a Gaussian with the derived variance (especially for increasing layers).
\end{proof}
\label{thm:freqs}
\end{theorem}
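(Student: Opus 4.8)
The plan is to assemble the result from the three ingredients flagged by the lemmas. First, I would show that the argument frequency of every sine produced by the network is a \emph{signed sum} of the per-layer initialized frequency vectors $\boldsymbol\omega_0^{(j_0)},\ldots,\boldsymbol\omega_{N_\text{L}-1}^{(j_{N_\text{L}-1})}$. Second, I would show that the \emph{number} of terms in that sum is itself a random variable $M$ whose distribution can be read off term-by-term from the count in \refthm{thm:num_sines}, so that a uniformly-chosen network frequency $\boldsymbol{\bar\omega}$ is a compound random sum of the type in \reflemma{lemma:total_variance}. Third, I would compute $\text{Var}(\boldsymbol{\bar\omega})$ from that lemma and invoke \reflemma{lemma:clt} for the Gaussian shape.

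For the first step I would carry out the Dirac-comb bookkeeping used in the excerpt: let $f_i(\boldsymbol\omega)$ record (as a sum of deltas) the frequencies present just after the Hadamard product at layer $i$, with $f_0(\boldsymbol\omega)=\sum_j\delta(\boldsymbol\omega-\boldsymbol\omega_0^{(j)})$. By \reflemma{lemma:sine_product}, multiplying (Hadamard) by $\sin(\boldsymbol\omega_{i+1}\mathbf x+\boldsymbol\phi_{i+1})$ convolves the current spectrum with the symmetric pair $\delta(\,\cdot\,-\boldsymbol\omega_{i+1}^{(j)})+\delta(\,\cdot\,+\boldsymbol\omega_{i+1}^{(j)})$ (the sum- and difference-frequencies, hence a sign $s_i\in\{-1,1\}$), while the preceding linear map $\mathbf W_{i+1}$ produces $d_\text{h}$ mixed copies and the bias $\mathbf b_{i+1}$ injects a fresh comb $\sum_j\delta(\,\cdot\,-\boldsymbol\omega_{i+1}^{(j)})$. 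Unrolling this recursion, a sine that never inherits a bias term has frequency $\boldsymbol\omega_0^{(j_0)}+\sum_{i=1}^{N_\text{L}-1}s_i\boldsymbol\omega_i^{(j_i)}$, and a sine first created by the bias at layer $M$ has frequency $\boldsymbol\omega_M^{(j_M)}+\sum_{i=M+1}^{N_\text{L}-1}s_i\boldsymbol\omega_i^{(j_i)}$, i.e.\ a signed sum of $N_\text{L}-M$ independent terms.

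For the second step I would count the sines in each birth-layer class and check that these classes partition the total $\sum_{i=0}^{N_\text{L}-1}2^i d_\text{h}^{i+1}$ of \refthm{thm:num_sines}: matching exponents, the class born at layer $M=m$ should contain exactly $2^{N_\text{L}-1-m}d_\text{h}^{N_\text{L}-m}$ sines, which gives $p_M(m)=2^{N_\text{L}-1-m}d_\text{h}^{N_\text{L}-m}\big/\sum_i 2^i d_\text{h}^{i+1}$. Then $\boldsymbol{\bar\omega}$ is exactly a compound sum as in \reflemma{lemma:total_variance}, with summands $\boldsymbol\omega_i^{(j_i)}$ i.i.d.\ in magnitude; since they are symmetric about zero, the random signs $s_i$ leave the variance unchanged, the zero-mean property kills the $\text{Var}(M)\,E[\boldsymbol\omega_i^{(j_i)}]^2$ term, and one is left with the stated formula after substituting $p_M(m)$. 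Finally, since the dominant class is the largest one — so a typical frequency is a sum of many terms — \reflemma{lemma:clt} yields the approximate Gaussianity, matching the simulation in Fig.~\ref{fig:freqs}.

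I expect the main obstacle to be making the second step precise: proving by induction on $N_\text{L}$ that the birth-layer count is exactly $2^{N_\text{L}-1-m}d_\text{h}^{N_\text{L}-m}$ and thus lines up term-by-term with the sum in \refthm{thm:num_sines}. The delicate accounting is the bias contribution — each bias spawns $d_\text{h}$ new sines whose frequencies are subsequently doubled ($\times 2$) and mixed ($\times d_\text{h}$) by every later layer — so one must map the recursion $N_\text{sine}^{(i+1)}=d_\text{h}+2d_\text{h}N_\text{sine}^{(i)}$ bijectively onto the growth of the individual birth-layer classes while carefully tracking the index shift between $i$ (the exponent in \refthm{thm:num_sines}) and $m$ (the birth layer). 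A secondary, intentionally non-rigorous point is that \reflemma{lemma:clt} is applied with fixed, modest $N_\text{L}$ and a random number of summands, so the Gaussian conclusion is an approximation corroborated empirically rather than a limiting statement.
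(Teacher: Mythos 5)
Your proposal follows the paper's proof essentially step for step: the same Dirac-comb bookkeeping for $F_i(\boldsymbol\omega)$, the same classification of sines by birth layer $M$ with the same count $2^{N_\text{L}-1-m}d_\text{h}^{N_\text{L}-m}$ and resulting $p_M(m)$, the same invocation of \reflemma{lemma:total_variance} with the zero-mean simplification, and the same heuristic appeal to \reflemma{lemma:clt}. The paper is no more rigorous than you are on the birth-layer induction you flag as the delicate point---it likewise reads the class sizes off the sum in \refthm{thm:num_sines} by matching exponents rather than giving a separate inductive argument.
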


\subsection{Initialization and Distribution of Activations}

\subsubsection{Preliminary Derivations}

\begin{lemma}
    Let $X$ and $Y$ be two independent random variables with probability density functions $f_X$ and $f_Y$. Then the probability density function $f_Z(z)$ of $Z=XY$ is given as 
   \begin{align}
       f_Z(z) = \int_{-\infty}^\infty f_X(x)f_Y(z/x)\frac{1}{|x|} \mathrm{d}x. 
   \end{align} 
   \begin{proof}
       See Grimmett and Stirzaker~\cite{grimmett2020probability}. 
   \end{proof}
   \label{lemma:pdf_prod}
\end{lemma}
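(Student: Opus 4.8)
The plan is to reduce the statement to a one-variable change of variables, and the cleanest route runs through the cumulative distribution function (CDF) of $Z$, which avoids any two-dimensional Jacobian bookkeeping. I would start from $F_Z(z) = P(XY \le z)$ and condition on the sign of $X$; the event $\{X = 0\}$ has probability zero and can be discarded. For $x > 0$ the inequality $xy \le z$ is equivalent to $y \le z/x$, whereas for $x < 0$ it reverses to $y \ge z/x$. Since $X$ and $Y$ are independent, the joint density factors as $f_X(x)f_Y(y)$, so
\begin{align}
    F_Z(z) = \int_0^\infty f_X(x)\, F_Y(z/x)\,\mathrm{d}x + \int_{-\infty}^0 f_X(x)\,\bigl(1 - F_Y(z/x)\bigr)\,\mathrm{d}x,
\end{align}
where $F_Y$ denotes the CDF of $Y$.

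Next I would differentiate in $z$, moving the derivative under the integral sign (legitimate under mild regularity, e.g.\ $f_Y$ bounded and $f_X$ integrable, via dominated convergence; note that the integration limits do not depend on $z$). Using $\tfrac{\partial}{\partial z}F_Y(z/x) = \tfrac{1}{x}f_Y(z/x)$, the first integral yields $\int_0^\infty f_X(x)f_Y(z/x)\tfrac{1}{x}\,\mathrm{d}x$ and the second yields $\int_{-\infty}^0 f_X(x)f_Y(z/x)\bigl(-\tfrac{1}{x}\bigr)\,\mathrm{d}x$. On $x>0$ one has $\tfrac{1}{x} = \tfrac{1}{|x|}$, and on $x<0$ one has $-\tfrac{1}{x} = \tfrac{1}{|x|}$, so the two contributions combine into $\int_{-\infty}^\infty f_X(x)f_Y(z/x)\tfrac{1}{|x|}\,\mathrm{d}x$, which is exactly the claimed identity.

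As a backup I would keep the explicit bivariate transformation $(X,Y)\mapsto(Z,W):=(XY,\,X)$ in mind: its inverse is $(x,y)=(w,\,z/w)$ with Jacobian determinant $-1/w$, so the joint density of $(Z,W)$ is $f_X(w)f_Y(z/w)\tfrac{1}{|w|}$, and integrating out $w$ recovers the same formula. The one place I would be most careful is the handling of the sign of $X$ near the origin --- verifying that $P(X=0)=0$ so that division by $x$ (equivalently, the map $w\mapsto z/w$) is almost surely well defined, and checking that combining the $x>0$ and $x<0$ pieces produces precisely the factor $1/|x|$ rather than $1/x$. Everything else is routine bookkeeping, and since the result is a classical fact about products of independent random variables, a citation to a standard reference such as Grimmett and Stirzaker would also suffice in place of a full argument.
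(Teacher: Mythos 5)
Your argument is correct; the paper itself offers no proof at all, simply deferring to Grimmett and Stirzaker, so you are supplying a self-contained derivation where the paper only cites. Both of your routes are sound. The CDF route is the cleaner of the two: splitting $F_Z(z)=P(XY\le z)$ on the sign of $X$, noting that for $x>0$ the event is $\{Y\le z/x\}$ and for $x<0$ it is $\{Y\ge z/x\}$, differentiating under the integral, and observing that $1/x$ on $x>0$ and $-1/x$ on $x<0$ both collapse to $1/|x|$ is exactly the textbook derivation and correctly explains where the absolute value comes from. Your backup bivariate substitution $(X,Y)\mapsto(Z,W)=(XY,X)$ with Jacobian $-1/w$ is the other standard route and gives the same density after marginalizing $W$. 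The one caveat you flag --- that $P(X=0)=0$ so the division is a.s.\ well-defined --- is the right thing to be careful about and is automatic here because $X$ has a density. In short, either of your derivations would serve as a full replacement for the paper's citation-only proof.
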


\begin{theorem}
    Let W, X, and P be independent random variables sampled from continuous uniform distributions as  
    \begin{align}
        W &\sim \mathcal{U}(-B, B)\\
        X &\sim \mathcal{U}(-0.5, 0.5)\\
        P &\sim \mathcal{U}(-\pi, \pi) 
    \end{align}
    where $B \gg \pi$.
    Then let $Z = WX + P$. The probability density function $f_Z(z)$ of $Z$ is approximately
    \begin{align}
        \normalfont
        f_Z(z) \approx \begin{dcases} \frac{1}{B} \log\left(\frac{B}{|2z|} \right), & -B/2 \leq z \leq B/2\\ 0, & \text{else} \end{dcases}.
    \end{align}
    \begin{proof}
        Let $\tilde{Z} = WX$. Then, $f_{\tilde{Z}}(z)$ is given as (\reflemma{lemma:pdf_prod}):
        \begin{align}
            f_{\tilde{Z}}(z) &= \int_{-\infty}^{\infty} f_W(w) f_X(z/w)\,\frac{1}{|w|}\, \mathrm{d}w\\
            &= 2 \int_{0}^{\infty} f_W(w) f_X(z/w)\,\frac{1}{w}\, \mathrm{d}w\\
            &= \frac{1}{B}  \int_{0}^{B}f_X(z/w)\,\frac{1}{w}\, \mathrm{d}w\\
            &\quad f_X(z/w) = \begin{dcases} 1 & -0.5 \leq z/w \leq 0.5\\ 0 & \text{else} \end{dcases}\\
            &\quad\quad\quad\quad\;\;\:= \begin{dcases} 1 & w\leq -2z, \, w \geq 2z\\ 0 & \text{else} \end{dcases}\\
            &=\frac{1}{B} \int_{\text{min}(2z, B)}^B \frac{1}{w}\, \mathrm{d}w\\
            &= \frac{1}{B} \log(|w|)\Big\rvert_{\text{min}(2z, B)}^B\\
            &= \frac{1}{B} \log\left(\frac{B}{\text{min}(|2z|, B)} \right)\\
            &= \begin{dcases} \frac{1}{B} \log\left(\frac{B}{|2z|} \right), & -B/2 \leq z \leq B/2\\ 0, & \text{else} \end{dcases}
        \end{align}
        Now, $Z = WX + P = \tilde{Z} + P$ and $f_Z = f_{\tilde{Z}} \ast f_P$, where $\ast$ indicates convolution. For $B \gg \pi$, the support of $f_P$ is sufficiently small that we can neglect the ``broadening'' effect of the convolution, such that $f_Z \approx f_{\tilde{Z}}$.
    \end{proof}
    \label{thm:z_dist}
\end{theorem}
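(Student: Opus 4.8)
The plan is to compute the density of the product $\tilde Z = WX$ exactly via \reflemma{lemma:pdf_prod}, and then treat the addition of $P$ as a convolution whose effect is negligible in the regime $B \gg \pi$. For the product step I would substitute $f_W(w) = \tfrac{1}{2B}\mathbf{1}_{[-B,B]}(w)$ and $f_X(x) = \mathbf{1}_{[-1/2,1/2]}(x)$ into the formula, giving $f_{\tilde Z}(z) = \int_{-\infty}^{\infty} \tfrac{1}{2B}\mathbf{1}_{[-B,B]}(w)\,\mathbf{1}_{[-1/2,1/2]}(z/w)\,\tfrac{1}{|w|}\,\mathrm{d}w$. The integrand is even in $w$ (the indicator sets, $f_X$, and $1/|w|$ are all symmetric under $w \mapsto -w$), so I can fold the integral onto $w > 0$ at the cost of a factor $2$, which cancels the $\tfrac{1}{2B}$ and leaves $\tfrac{1}{B}\int_0^B \mathbf{1}_{[-1/2,1/2]}(z/w)\,\tfrac{1}{w}\,\mathrm{d}w$.

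Next I would rewrite the constraint: on $w > 0$, the condition $|z/w| \le 1/2$ is equivalent to $w \ge 2|z|$. Intersecting with $w \le B$ gives the integration range $[\min(2|z|, B),\, B]$, which is empty (so the density vanishes) unless $|z| \le B/2$. Evaluating $\int \tfrac{1}{w}\,\mathrm{d}w = \log w$ over this interval yields $\tfrac{1}{B}\log\!\big(B/\min(2|z|, B)\big) = \tfrac{1}{B}\log(B/|2z|)$ for $-B/2 \le z \le B/2$ and $0$ otherwise. This is the exact density of $\tilde Z$, and it already has the claimed form.

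Finally, since $Z = \tilde Z + P$ with $P$ independent, $f_Z = f_{\tilde Z} \ast f_P$, and $f_P$ is a unit-mass kernel supported on $[-\pi, \pi]$. Away from $z = 0$ and $z = \pm B/2$, the density $f_{\tilde Z}$ varies on the scale $B$, so convolving against a kernel of width $2\pi \ll B$ perturbs it only by $O(\pi/B)$, which I would argue is negligible. The main subtlety — and where the word ``approximately'' is genuinely needed — is the behavior near the integrable logarithmic singularity at $z = 0$ and near the hard cutoffs at $z = \pm B/2$, where $f_{\tilde Z}$ is not Lipschitz: there the convolution truly regularizes the singularity and smears the cutoff by $\pm\pi$. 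I expect this to be the only real obstacle, and I would dispatch it by noting that these effects are confined to $O(\pi)$-neighborhoods of three points and therefore do not alter the leading-order expression when $B \gg \pi$.
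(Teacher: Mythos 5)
Your proposal follows the paper's proof exactly: decompose $Z = \tilde Z + P$, compute $f_{\tilde Z}$ from \reflemma{lemma:pdf_prod} by folding the symmetric integrand onto $w>0$, reduce the indicator constraint to the range $[\min(2|z|,B),B]$, and then neglect the convolution with the narrow kernel $f_P$ when $B\gg\pi$. Your added remark about where the approximation genuinely degrades (the logarithmic singularity at $z=0$ and the cutoffs at $z=\pm B/2$) is a nice refinement the paper leaves implicit, but it does not change the argument.
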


\begin{theorem}
    With $Z$ and $B$ as defined in \refthm{thm:z_dist}, we have that 
\begin{align}
    \normalfont\text{Var}[\sin(Z)] \approx \frac{1}{2}\left[1 - \frac{\text{SI}(B)}{B} \right] \approx \frac{1}{2}
\end{align}

    \begin{proof}
        \begin{align}
         \text{Var}[\sin(Z)] &= \text{E}[\sin^2(Z)] \\
         &= \frac{1}{2}(1 - \text{E}[\cos(2Z)])\\
         \text{E}[\cos(2Z)] &= \int_{-\infty}^{\infty} f_Z(z) \cos(2z)\, \mathrm{d}z\\
                            &\approx\frac{1}{B}\int_{-B/2}^{B/2} \log\left(\frac{B}{|2z|}\right) \cos(2z)\, \mathrm{d}z\quad \quad (\refthm{thm:z_dist})\\
         &\quad\text{Integrate by parts: } \int f\, \mathrm{d}g = fg - \int \mathrm{d}f g\\
         &\quad f=\log\left(\frac{B}{|2z|}\right), \, \mathrm{d}g = \cos(2z)\, \mathrm{d}z, \, \mathrm{d}f = -\frac{1}{z}\,\mathrm{d}z, \, g = \frac{1}{2}\sin(2z) \\ 
         &= \frac{1}{B}\left[\frac{1}{2} \log\left(\frac{B}{|2z|} \right) \sin(2z) + \underbrace{\int \frac{\sin(2z)}{2z}\,\mathrm{d}z}_{\frac{1}{2}\text{SI}(2z)}\right]_{-B/2}^{B/2} \\
         &= \frac{1}{2B}\left[\log\left(\frac{B}{|2z|} \right) \sin(2z) + \text{SI}(2z)\right]_{-B/2}^{B/2} \\
         &= \frac{1}{2B}\left[\log\left(\frac{B}{|2z|} \right) \sin(2z) + \text{SI}(2z)\right]_{-B/2}^{B/2} \\
         &= \frac{\text{SI}(B)}{B} \approx \frac{\pi}{2B}, \, B\gg 0\\
         \Rightarrow \frac{1}{2}(1 - \text{E}[\cos(2Z)]) &= \frac{1}{2}\left(1 - \frac{\text{SI}(B)}{B}\right)\\
                                                         &\approx \frac{1}{2}\left(1 - \frac{\pi}{2B}\right), \, B\gg 0\\
                                                         &\approx \frac{1}{2}
        \end{align}
        Where we used that $\text{SI}(x)$ is the sine integral function: $\normalfont\text{SI}(x) = \int_0^x \frac{\sin(t)}{t}\, \mathrm{d}t$.
    \end{proof}
    \label{thm:var_sine}
\end{theorem}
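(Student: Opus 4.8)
The plan is to turn the variance into a single one-dimensional integral against the density $f_Z$ supplied by \refthm{thm:z_dist}, and then evaluate that integral in closed form. First I would note that the approximating density $f_{\tilde Z}$ is even in $z$, so under the approximation $f_Z\approx f_{\tilde Z}$ of \refthm{thm:z_dist} we have $\text{E}[\sin(Z)]\approx 0$, hence $\text{Var}[\sin(Z)]\approx\text{E}[\sin^2(Z)]$. Using the power-reduction identity $\sin^2(z)=\tfrac12\big(1-\cos(2z)\big)$, the whole problem collapses to computing $\text{E}[\cos(2Z)]$.

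Next I would substitute the explicit density from \refthm{thm:z_dist}, giving $\text{E}[\cos(2Z)]\approx \frac1B\int_{-B/2}^{B/2}\log\!\big(B/|2z|\big)\cos(2z)\,\mathrm{d}z$, and integrate by parts with $f=\log(B/|2z|)$ and $\mathrm{d}g=\cos(2z)\,\mathrm{d}z$, so that $\mathrm{d}f=-z^{-1}\,\mathrm{d}z$ and $g=\tfrac12\sin(2z)$. The boundary term vanishes at $z=\pm B/2$ because $\log(B/|2z|)=\log 1=0$ there (and it is harmless at $z=0$ since $\sin(2z)$ kills the logarithmic blow-up). What remains is $\int_{-B/2}^{B/2}\frac{\sin(2z)}{2z}\,\mathrm{d}z$, which under the substitution $t=2z$ and using that the sine integral $\text{SI}$ is odd equals exactly $\text{SI}(B)$; therefore $\text{E}[\cos(2Z)]\approx\text{SI}(B)/B$. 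Assembling the pieces yields $\text{Var}[\sin(Z)]\approx\tfrac12\big(1-\text{SI}(B)/B\big)$, and since $\text{SI}(B)\to\pi/2$ while $B\gg\pi$, the correction $\text{SI}(B)/B\approx\pi/(2B)$ is negligible, giving the claimed $\approx\tfrac12$.

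The step I expect to need the most care is the integration by parts: one must justify that the logarithmic singularity at $z=0$ contributes nothing to the boundary term and that the post-IBP integral converges — both are true but deserve a line. A conceptual subtlety worth flagging is that the residual $\text{SI}(B)/B$ is purely an artifact of the earlier approximation $f_Z\approx f_{\tilde Z}$ that discarded the convolution with $f_P$: had we kept $f_P$ exactly, averaging $\sin$ and $\sin^2$ over a full period of $P$ would give $\text{E}[\sin(Z)]=0$ and $\text{E}[\sin^2(Z)]=\tfrac12$ exactly, so the estimate $\approx\tfrac12$ is in fact robust, and the $\text{SI}(B)/B$ term is itself small.
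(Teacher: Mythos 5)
Your proof follows the paper's route step for step: reduce to $\text{E}[\cos(2Z)]$ via the power-reduction identity, substitute the density from \refthm{thm:z_dist}, integrate by parts, and recognize the sine integral, so the approach is essentially identical (with some welcome extra care about the vanishing boundary terms). Your closing observation — that retaining $f_P$ exactly and averaging over a full period of $P$ would give $\text{E}[\sin^2(Z)]=\tfrac12$ on the nose, making the $\text{SI}(B)/B$ residual an artifact of the $f_Z\approx f_{\tilde Z}$ approximation — is a cleaner justification than the paper offers and would have been a shorter route to the same conclusion.
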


\begin{lemma}
The variance of the product of two random variables $X$ and $Y$ is given by 
\begin{align}
    \normalfont
    \text{Var}[X \cdot Y] = \text{Var}[X] \cdot\text{Var}[Y] + \text{E}[Y]^2 \cdot \text{Var}[X] + \text{E}[X]^2\cdot \text{Var}[Y]
\end{align}
\begin{proof}
    Refer to Goodman~\cite{goodman1960exact}.     
\end{proof}
\label{lemma:var_prod} 
\end{lemma}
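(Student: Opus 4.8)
The plan is to reduce everything to first and second moments and to exploit independence of $X$ and $Y$ — which is the setting in which this identity holds, and the relevant setting for the initialized weights, phases, and frequencies used elsewhere in the supplement. Starting from the definition $\text{Var}[XY] = \text{E}[(XY)^2] - \text{E}[XY]^2$, I would first use independence to factor the joint expectations, $\text{E}[(XY)^2] = \text{E}[X^2]\,\text{E}[Y^2]$ and $\text{E}[XY] = \text{E}[X]\,\text{E}[Y]$, giving $\text{Var}[XY] = \text{E}[X^2]\,\text{E}[Y^2] - \text{E}[X]^2\,\text{E}[Y]^2$.

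Next I would rewrite the raw second moments using $\text{E}[X^2] = \text{Var}[X] + \text{E}[X]^2$ and $\text{E}[Y^2] = \text{Var}[Y] + \text{E}[Y]^2$, substitute, and expand the four-term product $(\text{Var}[X] + \text{E}[X]^2)(\text{Var}[Y] + \text{E}[Y]^2)$. Three of the four terms are exactly $\text{Var}[X]\,\text{Var}[Y]$, $\text{E}[Y]^2\,\text{Var}[X]$, and $\text{E}[X]^2\,\text{Var}[Y]$, while the remaining term $\text{E}[X]^2\,\text{E}[Y]^2$ cancels against the subtracted $\text{E}[XY]^2 = \text{E}[X]^2\,\text{E}[Y]^2$, leaving precisely the claimed expression.

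There is essentially no hard step here: the argument is a couple of lines of moment bookkeeping, and the ``main obstacle'' is only a matter of hypotheses rather than technique. The one point worth stating explicitly is the independence assumption — without it $\text{E}[X^2 Y^2]$ does not factor and one picks up covariance corrections (the general Goodman formula), so I would note that $X$ and $Y$ are taken independent, as they are for the network parameters used downstream, and remark that the stated identity is the zero-covariance specialization of the exact formula of Goodman~\cite{goodman1960exact}.
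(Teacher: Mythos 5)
Your proof is correct, and it is genuinely different from what the paper does: the paper offers no argument at all, simply citing Goodman~\cite{goodman1960exact} for the identity, whereas you give a self-contained two-line derivation from the moment identity $\text{Var}[XY]=\text{E}[X^2]\text{E}[Y^2]-\text{E}[X]^2\text{E}[Y]^2$ followed by the substitution $\text{E}[X^2]=\text{Var}[X]+\text{E}[X]^2$ and expansion. You also correctly flag the hypothesis the paper leaves implicit: the lemma as stated is only the independent-variable specialization of Goodman's exact formula, since $\text{E}[X^2Y^2]$ does not factor without independence and covariance terms appear otherwise. That observation is harmless here because every downstream application in the supplement (sine-layer outputs against independently initialized weights, Hadamard products of independently generated factors) does satisfy independence, but it is a real clarification that the paper's one-line citation glosses over. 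In short, the paper's ``proof'' buys brevity and an appeal to authority; yours buys transparency, makes the independence assumption explicit, and makes clear exactly which special case of Goodman's result is being used.
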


\subsubsection{Proof of the Initialization Scheme}
\begin{theorem}
    Let the input to \bacon{} be uniformly distributed in $[-0.5, 0.5]$ and the frequency $\boldsymbol{\omega}_i$ of each layer be uniformly distributed in $[-B_i, B_i]$ with $B_i \gg 0$. Then, let the linear layer weights $\mathbf{W}_i$ applied after the sine layers be distributed according to a random uniform distribution in the interval $[-\sqrt{6 / d_\text{h}}, \sqrt{6/ d_\text{h}}]$. The activations after each linear layer are standard normal distributed.

\begin{proof}
    A sketch of the proof is as follows.
    \begin{itemize}
        \item The variance of the output of each sine layer $g_i(\mathbf{x})$ is approximately 0.5 (\refthm{thm:var_sine}).
        \item The output after applying the first linear layer $\mathbf{W}_1 g_0(\mathbf{x})$ is standard normal distributed (neglecting the effect of the bias).
            We have that the $i$th output is $\sum_{j=0}^{d_{\text{h}-1}} \mathbf{W}_1^{(i,j)} g_0(\mathbf{x})^{(j)}$, with $\text{Var}\left(\mathbf{W}_1^{(i,j)} g_0(\mathbf{x})^{(j)}\right) = \text{Var}\left(\mathbf{W}_1^{(i,j)}\right) \cdot \text{Var}\left(g_0(\mathbf{x})^{(j)}\right) = \frac{1}{12}\left(2\sqrt{6/d_\text{h}}\right)^2 \cdot \frac{1}{2} = 1/d_\text{h}$ (\reflemma{lemma:var_prod}). Then the entire sum has variance $d_\text{h}\cdot 1/d_\text{h} = 1$, and is normal distributed by the Central Limit Theorem (\reflemma{lemma:clt}).
        \item The output of the Hadamard product $g_{1}(\mathbf{x}) \circ (\mathbf{W}_{1} g_0(\mathbf{x}) + \mathbf{b}_1)$ has variance $\text{Var}(g_{1}(\mathbf{x})) \cdot \text{Var}(\mathbf{W}_{1}g_0(\mathbf{x})  + \mathbf{b}_1) \approx \frac{1}{2} \cdot 1 = \frac{1}{2}$ (\reflemma{lemma:var_prod}).
        \item The distribution after applying the next linear layer $\mathbf{W}_{i+1}$ is again standard normal (using the same steps of taking the product of the variances and applying the Central Limit Theorem), and the same steps as above can be repeated to show the distributions are standard normal after each linear layer.
    \end{itemize}
    \label{thm:initialization}
\end{proof}

\end{theorem}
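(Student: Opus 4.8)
The plan is to prove the statement by induction on the layer index, maintaining throughout the invariant that the vector entering each linear layer $\mathbf{W}_i$ has (approximately) zero-mean coordinates with variance $\tfrac12$. The induction alternates between two elementary moves: a linear layer turns a zero-mean variance-$\tfrac12$ vector into a standard normal vector, and a Hadamard product with a fresh sine layer turns a standard normal vector back into a zero-mean variance-$\tfrac12$ vector. Chaining these establishes that the post-linear-layer activations are $\mathcal{N}(0,1)$ at every depth.

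For the base case I would observe that each coordinate of $g_0(\mathbf{x}) = \sin(\boldsymbol{\omega}_0\mathbf{x}+\boldsymbol{\phi}_0)$ has exactly the form $\sin(Z)$ with $Z = WX+P$ matching the hypotheses of \refthm{thm:z_dist} (for multi-dimensional input the argument is a sum of such terms, which only spreads the phase distribution further and does not affect the subsequent reasoning). \refthm{thm:var_sine} then gives $\text{Var}[\sin(Z)]\approx\tfrac12$, while evenness of $f_Z$ paired with oddness of $\sin$ gives $\text{E}[\sin(Z)]=0$; this is the invariant at the input to $\mathbf{W}_1$. For the inductive step, assume the vector $\mathbf{v}$ feeding $\mathbf{W}_i$ has zero-mean variance-$\tfrac12$ coordinates. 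The $k$th output coordinate is $\sum_{j=0}^{d_\text{h}-1}\mathbf{W}_i^{(k,j)}\mathbf{v}^{(j)}$; since each $\mathbf{W}_i^{(k,j)}\sim\mathcal{U}(-\sqrt{6/d_\text{h}},\sqrt{6/d_\text{h}})$ is independent of $\mathbf{v}$ and zero mean, \reflemma{lemma:var_prod} gives each summand mean $0$ and variance $\tfrac1{12}\bigl(2\sqrt{6/d_\text{h}}\bigr)^2\cdot\tfrac12 = 1/d_\text{h}$, so the full sum has variance $1$ and, by \reflemma{lemma:clt}, is approximately $\mathcal{N}(0,1)$ (neglecting the bias $\mathbf{b}_i$, typically initialized at or near zero). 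The next operation, the Hadamard product $g_i(\mathbf{x})\circ(\mathbf{W}_i\mathbf{v}+\mathbf{b}_i)$, multiplies this standard normal coordinate by $\sin(Z)$; applying \reflemma{lemma:var_prod} once more with $\text{E}[\sin Z]=0$, $\text{Var}[\sin Z]\approx\tfrac12$, and the normal factor having mean $0$ and variance $1$ yields mean $0$ and variance $\tfrac12\cdot 1=\tfrac12$ — precisely the invariant needed to feed $\mathbf{W}_{i+1}$. Iterating closes the induction.

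The main obstacle is the independence bookkeeping. The factors $g_i(\mathbf{x})$ and $\mathbf{W}_i\mathbf{v}$ both depend on $\mathbf{x}$, so their Hadamard product is not literally a product of independent variables, and the $d_\text{h}$ summands of each linear combination also share the single random input $\mathbf{x}$, so \reflemma{lemma:clt} does not apply verbatim. I would address this by conditioning on $\mathbf{x}$: given $\mathbf{x}$, the $k$th pre-activation is a weighted sum of the genuinely independent weights $\mathbf{W}_i^{(k,j)}$, to which a Lindeberg–Feller CLT applies as long as no single $\mathbf{v}^{(j)}$ dominates the others; and the across-coordinate randomness of $\mathbf{W}_i$ decorrelates the two factors in the Hadamard product enough that treating them as independent when invoking \reflemma{lemma:var_prod} is a controlled approximation. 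The residual slack — the $\approx$ in $\text{Var}[\sin Z]\approx\tfrac12$ (valid for $B_i\gg 0$) and the neglected bias contribution — is quantitatively small, so the conclusion should be read, like the preceding theorems, as an asymptotic/approximate identity that one would back up with a direct simulation of the activation statistics.
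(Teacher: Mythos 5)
Your proposal is correct and takes essentially the same route as the paper's sketch: it alternates between the variance-$\tfrac12$ sine-layer bound (\refthm{thm:var_sine}), the product-of-variances computation yielding $1/d_\text{h}$ per summand and hence unit variance after the linear map (\reflemma{lemma:var_prod} plus \reflemma{lemma:clt}), and the Hadamard product returning the variance to $\tfrac12$. The one substantive addition is your explicit acknowledgment of the shared-$\mathbf{x}$ dependence and the conditioning/Lindeberg--Feller remedy, which the paper's sketch elides; this is a genuine improvement in rigor but does not change the argument's structure or conclusion.
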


\begin{figure*}[t!]
    \centering
    \includegraphics[width=\textwidth]{./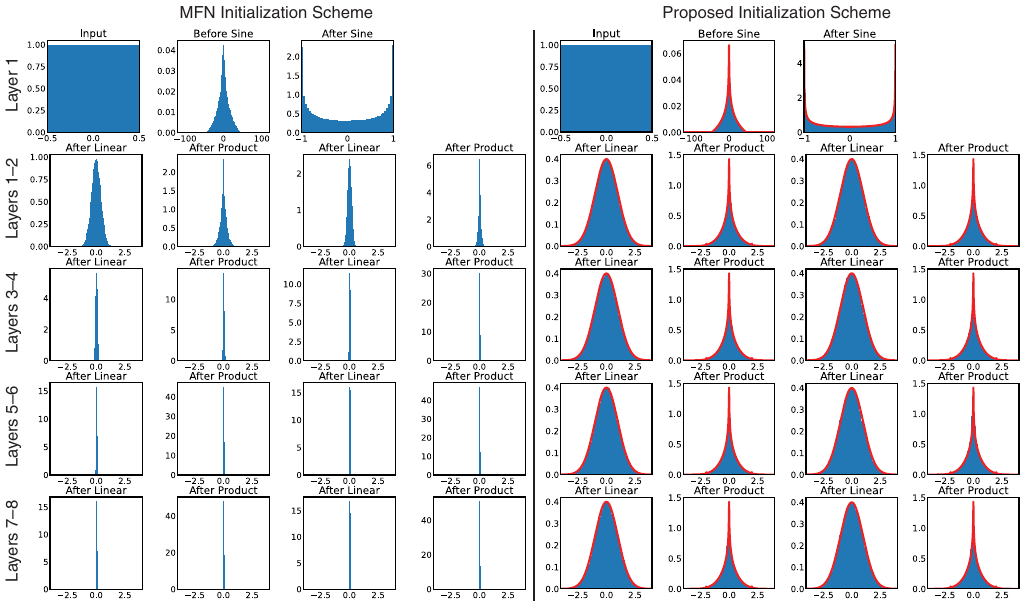}
    \caption{Empirical evaluation of initialization scheme. We show the default MFN initialization scheme (left)~\cite{fathony2020multiplicative}, and the proposed initialization scheme (right) for a network with 9 sine layers and 1024 hidden features ($d_\text{h}$). For our method we set the bandwidth $B_i$ of each layer to an arbitrary value of $30\pi$. We use the default settings of the MFN codebase, which initializes the linear layers $\mathbf{W}_i$ uniformly in $[-\sqrt{256/d_\text{h}},\sqrt{256/d_\text{h}}]$ and sets the bandwidth to $B_i = 256\sqrt{d_\text{in}}$.  Note that the proposed initialization scheme resolves the issue of the activation magnitude becoming extremely small with increased network depth. We also show that the distribution of activations closely matches our derivations, with the analytical expressions plotted in red.}
    \label{fig:activations}
\end{figure*}

\subsubsection{Empirical Evaluation}
We empirically evaluate the initialization scheme and derivations by showing plots of the distributions of activations for \bacon{} with 9 sine layers, 1024 hidden features ($d_\text{h}$), and an arbitrary bandwidth $B_i = 30\pi$. The plot is shown in Fig.~\ref{fig:activations}, and we overlay the analytical expression for the distribution at each intermediate output of the network.
We also compare to the conventional MFN initialization proposed by Fathony et al.~\cite{fathony2020multiplicative} using the publicly available implementation\footnote{\url{https://github.com/boschresearch/multiplicative-filter-networks}}.
Their proposed method initializes the linear layers $\mathbf{W}_i$ to a random uniform value in $[-\sqrt{256/d_\text{h}},\sqrt{256/d_\text{h}}]$ and sets the bandwidth to $B_i = 256\sqrt{d_\text{in}}$.
We find that this causes the magnitude of the activations to decrease significantly with increasing network size, leading to vanishing gradients.
There are five different distributions that can be observed at intermediate outputs of the network with our proposed initialization scheme:
\begin{enumerate}
    \item The input to the network is uniformly distributed in [-0.5, 0.5].
    \item The output of the linear layers applied directly to the input (that is, $\boldsymbol{\omega}_i \mathbf{x} + \boldsymbol{\phi}_i$) is distributed as derived in \refthm{thm:z_dist}.
    \item We find empirically that the output of $g_i(\mathbf{x}) = \sin(\boldsymbol{\omega}_i \mathbf{x} + \boldsymbol{\phi}_i)$ is approximately arcsine distributed and the variance is approximately $1/2$ as derived in \refthm{thm:var_sine}. In Fig.~\ref{fig:activations}, we show the empirical distribution together with a plot of the arcsine distribution with support over $[-1, 1]$ given by the probability density function $f_X(x) = \frac{1}{\pi \sqrt{1 - x^2}}$. Note the close correspondence of the arcsine distribution and the observed histogram of activations in Fig.~\ref{fig:activations}. We believe the connection to the arcsine distribution is related to previous work, which shows that the sine of uniform and normal random variables are both approximately arcsine distributed~\cite{sitzmann2020siren}. 
    \item The output of $\mathbf{W}_1 g_0(\mathbf{x}) + \mathbf{b}_1$ is standard normal distribution as described in \refthm{thm:initialization}. Also, the outputs of other linear layers $\mathbf{W}_i$ with bias $\mathbf{b}_i$ are standard normal distributed.
    \item The output of the Hadamard product $g_{i}(\mathbf{x}) \circ (\mathbf{W}_{i} \mathbf{z}_{i-1} + \mathbf{b}_i)$ is the product of an (approximately) arcsine distribution and a standard normal distribution. The distribution of a product of random variables is described by \reflemma{lemma:pdf_prod}, and the variance of this distribution is approximately equal to 1/2 (\reflemma{lemma:var_prod}). We numerically calculate the probability density function for the product of a standard normal and arcsine distribution according to \reflemma{lemma:pdf_prod} and find that this approximates the empirical distribution as shown in Fig.~\ref{fig:activations}.
\end{enumerate}

\newpage
\section{Supplemental Results}

\subsection{Images}
We evaluate \bacon{} on an image fitting task and compare its performance to three other methods: a ReLU network using Gaussian Fourier Features positional encoding (PE)~\cite{tancik2020fourier} and \siren{}~\cite{sitzmann2020siren}, both supervised at 256$\times$256 (1$\times$) resolution, and a ReLU network using integrated PE (adapted from Mip-NeRF)~\cite{barron2021mip} with supervision at 1/4, 1/2, and 1$\times$ resolutions.
\bacon{} is supervised at a single scale (1$\times$) and learns a multiscale decomposition.
All networks contain 4 hidden layers with 256 hidden features and are trained as described in the main text.

We perform a quantitative evaluation of the image fitting performance by training on a dataset of 16 randomly selected images from the DIV2K dataset and reporting the peak signal to noise ratio (PSNR) and structural similarity index measure (SSIM).
\red{We resize center crops of the images to 256$\times$256 resolution and then fit a model to the grid of pixels for each image.
We evaluate the performance on this training set of pixels as well as an offset validation grid of 256$\times$256 pixels whose values are bilinearly interpolated.}
Quantitative results in Table~\ref{tab:img_fit_quant} demonstrate that all methods fit the training set to well over 30 dB PSNR at at 256$\times$256 (1$\times$) resolution.
All methods perform similarly on the validation set.
Remarkably, \bacon{} demonstrates similar performance to the single-scale representations while simultaneously representing all output scales.

\red{Additional quantitative image results are shown in Table~\ref{tab:img_fit_multiscale}, evaluated across multiple scales.
Here, we train the network in the same fashion as above, but evaluate on a 1/4 (64 $\times$ 64) or 1/2 (128 $\times$ 128) resolution coordinate grid or a 4$\times$ upsampled grid (1024 $\times$ 1024).
We compare the network outputs to a bilinearly downsampled image or a high-resolution ground truth image in the case of 4$\times$ upsampling.}
\bacon{} and integrated positional encoding show the best performance for the low-resolution images while all methods perform similarly for upsampling.

Fig.~\ref{fig:supp_image} shows all output resolutions (1/4, 1/2, 1, and 4$\times$) for the result shown in the main text.
Fig.~\ref{fig:supp_image_dataset} shows additional results on center-cropped images from the DIV2K dataset~\cite{Agustsson_2017_CVPR_Workshops}.
In all results, Fourier Features and \siren{} fit to a single scale and show aliasing when subsampled, i.e.\ at 1/4 and 1/2 resolution.
Integrated PE learns reasonable anti-aliasing as it is explicitly supervised on anti-aliased pixel values.
The band-limited nature of \bacon{} allows it to closely represent a low-pass filtered image while only explicitly supervising at 1$\times$ resolution.
At 4$\times$ resolution, all methods except \bacon{} show high-frequency artifacts. 

\red{Fig.~\ref{fig:supp_antialias} shows an experiment where we compare \bacon{} and the network with a normal and low-pass filtered version of integrated position encoding at the 4$\times$ upsampled resolution.
The integrated positional encoding result contains spurious high-frequency details and artifacts from aliasing since the bandwidth of the network is not constrained.
Since aliasing corrupts the low-frequency components, these artifacts cannot be removed by applying a low-pass filter.}

\red{\paragraph{Deep \bacon{}.}
We compare deep 8- and 16-layer versions of \bacon{} with the proposed initialization scheme and MFNs with the original initialization scheme. 
For the image fitting task, we find that an 8-layer \bacon{} fits the lighthouse image shown in the main paper to 38.8 dB PSNR versus 29.8 dB PSNR for an 8-layer MFN.
For 16 layers, \bacon{} fits the image to 37.4 dB while the MFN architecture fails to optimize due to numerical instabilities.
We show convergence plots of the PSNR in Fig.~\ref{fig:supp_deep_mfn}.}

\red{\paragraph{Scale Interpolation.}
Interpolating between the discrete output scales allows a kind of continuous output scale to be achieved, similar to the trilinear filtering used to render from mimaps~\cite{williams1983pyramidal}.
To illustrate this effect, we sample \bacon{} at all output scales on the same 256$\times$256 resolution grid, and then linearly interpolate between resulting images.
Results are shown in Fig.~\ref{fig:supp_mipmap}.
Note that since linear interpolation is used, there is a discontinuous appearance of high-frequency Fourier coefficients when moving from one scale to the next.
Still, this simple technique allows blending between scales. 
}

\begin{table}[h]
    \centering
    \begin{tabular}{lc|cc|cc}
    \toprule
    & & \multicolumn{2}{c}{Training} & \multicolumn{2}{c}{Validation}\\
    & \# Params. & PSNR  &  SSIM  & PSNR & SSIM \\\midrule
        Fourier Features & 264K &  37.362 $\pm$ 2.544 & 0.976 $\pm$ 0.009                         & \textbf{29.771} $\pm$ 1.410 & \textbf{0.931} $\pm$ 0.022 \\
        \siren{}         & 265K &  \textbf{41.851} $\pm$ 2.084 & \textbf{0.987} $\pm$ 0.006       & 28.927 $\pm$ 1.756 & \underline{0.922} $\pm$ 0.034 \\
        Integrated PE        &  274K & 33.092 $\pm$ 2.219 & 0.930 $\pm$ 0.027                     & \underline{29.505} $\pm$ 1.498 & 0.901 $\pm$ 0.025 \\
        \bacon{}             &  268K & \underline{38.871} $\pm$ 1.727 & \underline{0.979} $\pm$ 0.005 & 29.266 $\pm$ 1.632 & \underline{0.922} $\pm$ 0.023 \\
    \bottomrule
    \end{tabular}
    \caption{Quantitative evaluation (mean $\pm$ standard deviation) for image fitting. For \bacon{} and Integrated Positional Encoding we compare to the highest resolution output.}
    \label{tab:img_fit_quant}
\end{table}

\begin{table}[h]
    \centering
    \begin{tabular}{l|lc|cc}
    \toprule
    Scale & Method & \# Params. & PSNR  &  SSIM \\\midrule
    \multirow{4}{*}{1/4}        & Fourier Features & 264K  & 24.484 $\pm$ 1.636 & 0.858 $\pm$ 0.055 \\
                                & \siren{}         & 265K  & 24.063 $\pm$ 1.832 & 0.843 $\pm$ 0.069 \\
                                & Integrated PE    & 274K  & \textbf{36.819} $\pm$ 1.697 & \textbf{0.984} $\pm$ 0.007 \\
                                & \bacon{}         & 67K   & \underline{31.179} $\pm$ 1.890 & \underline{0.948} $\pm$ 0.011 \\\midrule
    \multirow{4}{*}{1/2}        & Fourier Features & 264K  & 30.830 $\pm$ 1.462 & \underline{0.955} $\pm$ 0.017 \\
                                & \siren{}         & 265K  & 29.474 $\pm$ 2.024 & 0.942 $\pm$ 0.033 \\
                                & Integrated PE    & 274K  & \underline{33.020} $\pm$ 1.596 & \textbf{0.959} $\pm$ 0.013 \\
                                & \bacon{}         & 199K  & \textbf{33.140} $\pm$ 1.711 & \textbf{0.959} $\pm$ 0.009 \\\midrule
    \multirow{4}{*}{4$\times$}  & Fourier Features & 264K  & 25.909 $\pm$ 3.161 & 0.722 $\pm$ 0.122 \\
                                & \siren{}         & 265K  & \textbf{26.198} $\pm$ 3.255 & \textbf{0.740} $\pm$ 0.115 \\
                                & Integrated PE    & 274K  & 24.530 $\pm$ 2.477 & 0.667 $\pm$ 0.127 \\
                                & \bacon{}         & 268K  & \underline{25.967} $\pm$ 2.869 & \underline{0.731} $\pm$ 0.108 \\
    \bottomrule
    \end{tabular}
    \caption{Quantitative evaluation (mean $\pm$ standard deviation) for image fitting, evaluated at multiple scales.}
    \label{tab:img_fit_multiscale}
\end{table}

\begin{figure*}[t]
    \centering
    \includegraphics[width=\textwidth]{./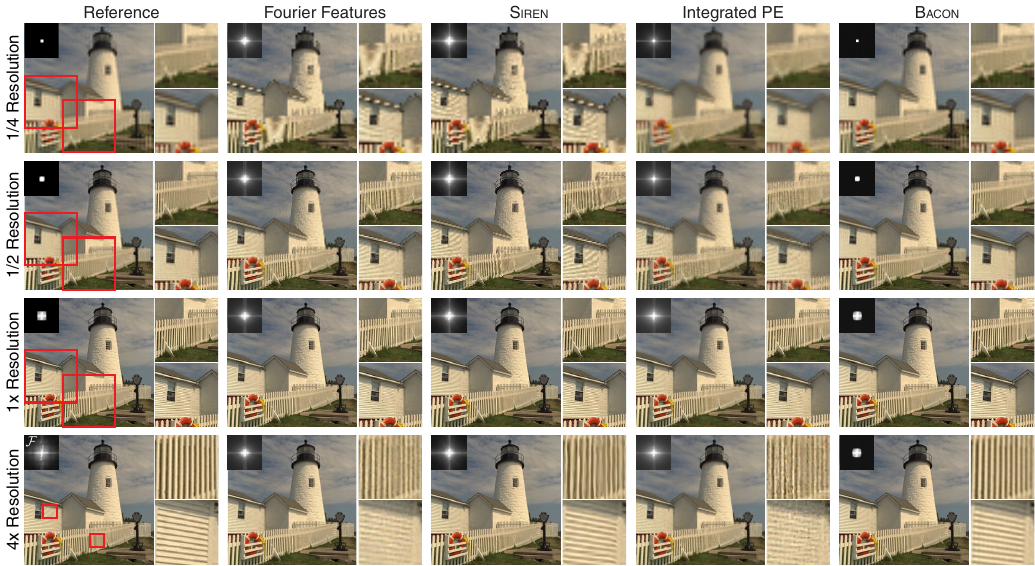}
    \caption{Image fitting results. We train a ReLU network using Fourier features (FF) positional encoding (PE)~\cite{tancik2020fourier} and a SIREN~\cite{sitzmann2020siren} to fit an image at 256$\times$256 (1$\times$) resolution, and evaluate the models at 64$\times$64 (1/4), 128$\times$128 (1/2), 256$\times$256 (1$\times$), and 1024$\times$1024 (4$\times$) resolution. Since these methods fit to a single scale, we see aliasing at lower resolutions, and high-frequency artifacts at 4$\times$ resolution (see insets). We train a ReLU network with integrated PE~\cite{barron2021mip} with supervision at 1/4, 1/2, and 1$\times$ resolutions. While this network learns anti-aliasing at low resolutions, inference at the unsupervised 4$\times$ resolution yields artifacts. Finally, \bacon{} is supervised at a single scale, learning band-limited outputs that closely match low-pass filtered reference images (see left column, and Fourier spectra insets). All methods achieve an accurate fit at 1$\times$ resolution with PSNRs of 37.838 dB (FF PE), 41.513 dB (SIREN), 34.105 dB (Integrated PE), and 40.314 dB (\bacon{}).}
    \label{fig:supp_image}
\end{figure*}

\begin{figure*}[t]
    \centering
    \includegraphics[width=\textwidth]{./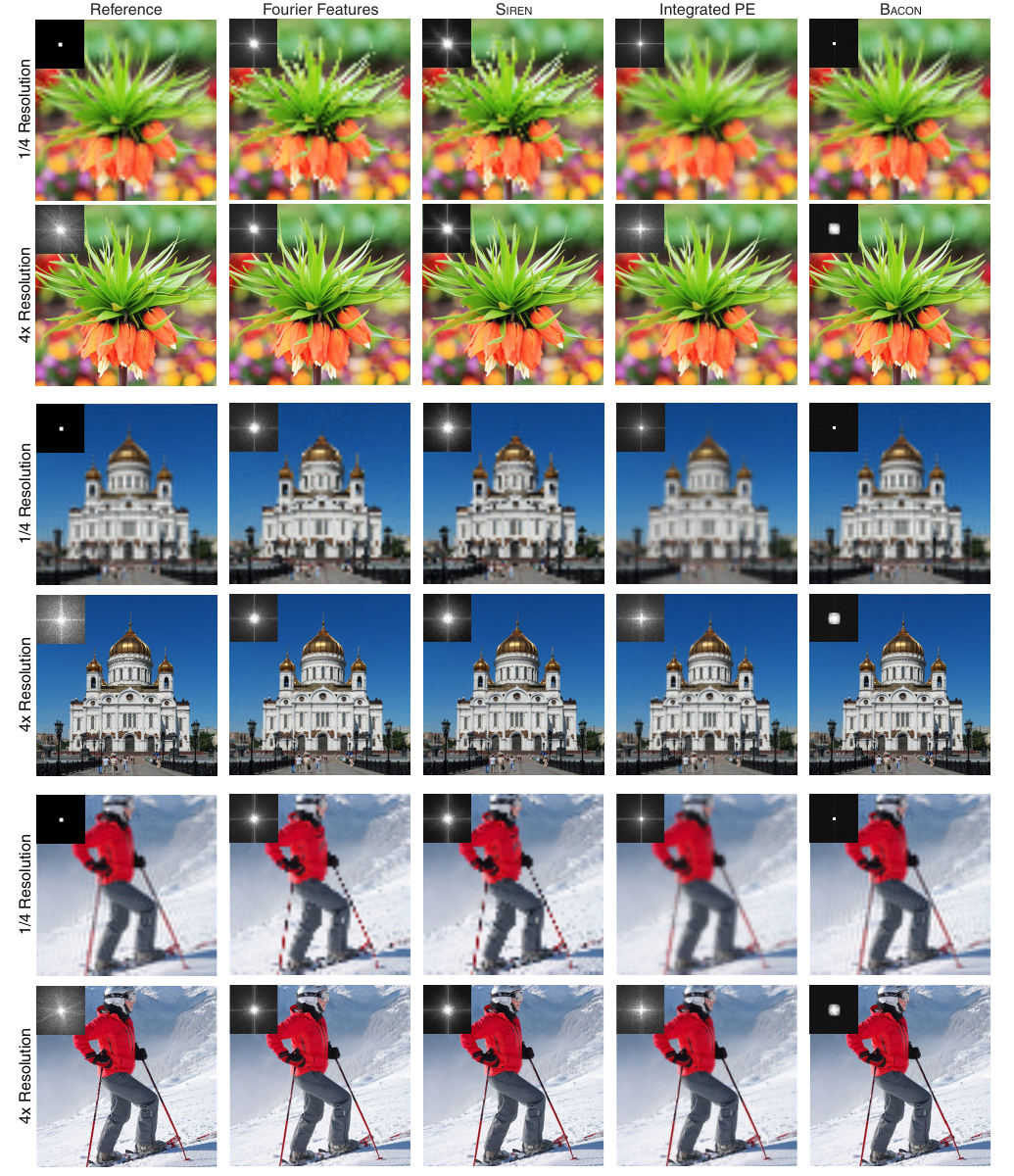}
    \caption{Supplemental image fitting results. We show the output of baseline methods and \bacon{} fit to a subset of images from the DIV2K dataset. Similar to previous results, we observe aliasing in baseline methods when subsampling to lower resolutions, and artifacts in 4$\times$ supersampled outputs. \bacon{} produces anti-aliased outputs at low-resolution and interpretable upsampled results via band-limited interpolation.}
    \label{fig:supp_image_dataset}
\end{figure*}

\begin{figure*}[t]
    \centering
    \includegraphics[width=\textwidth]{./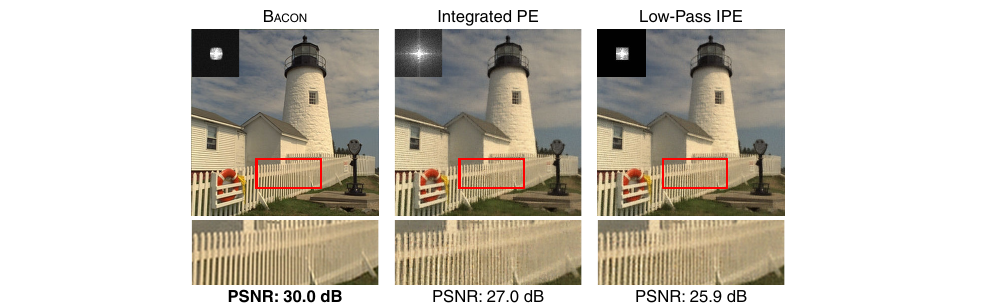}
    \caption{\red{Applying low-pass filter to output of integrated positional encoding (IPE) network. Since IPE networks are not band limited, there are artifacts in the output when upsampling at 4$\times$ resolution (middle column). \bacon{} (left column) is band limited and does not exhibit these artifacts. Spurious high frequency oscillations in the IPE network are aliased onto low-frequency components after sampling the network and cannot be removed by a low-pass filter (right column).}}
    \label{fig:supp_antialias}
\end{figure*}

\begin{figure*}[t]
    \centering
    \includegraphics[]{./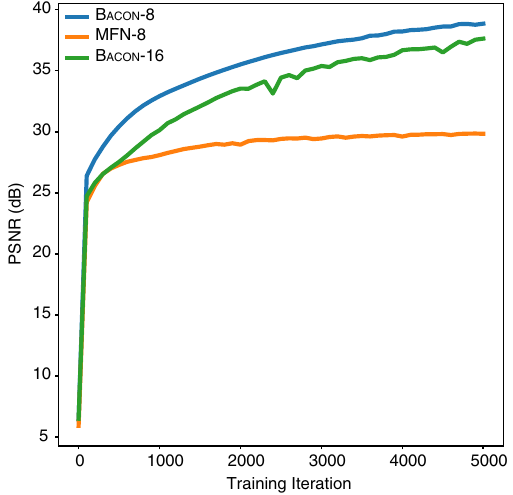}
    \caption{\red{Comparison of deep versions of the original MFN and \bacon{} for image fitting. Both the 8- and 16-layer \bacon{} models fit the lighthouse image to well over 30 dB PSNR while the 8-layer MFN does not reach 30 dB PSNR. We were unable to train a 16-layer MFN with the original initialization scheme due to numerical instabilities during optimization.}}
    \label{fig:supp_deep_mfn}
\end{figure*}

\begin{figure*}[t]
    \centering
    \includegraphics[]{./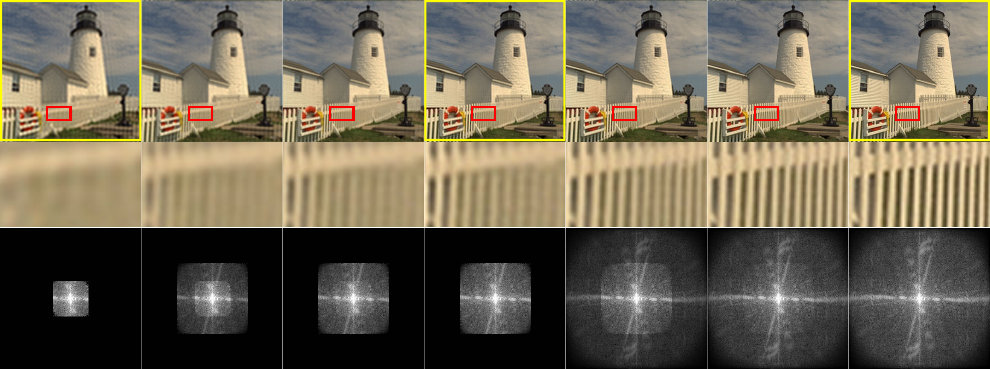}
    \caption{\red{Illustration of interpolation between output scales, similar to the trilinear interpolation used to render from mipmaps. The yellow bordered images in the top row are the outputs of \bacon{} at 1/4, 1/2 and 1$\times$ resolution, and the other images are linear interpolations. Zoomed insets are shown in the middle row, and the bottom row shows the Fourier spectrum of each image.}}
    \label{fig:supp_mipmap}
\end{figure*}

\clearpage

\subsection{Neural Radiance Fields}

We provide additional implementation details and results on reconstructing neural radiance fields using NeRF~\cite{mildenhall2020nerf}, Mip-NeRF~\cite{barron2021mip}, and \bacon{}.

\subsubsection{Additional Implementation Details}
For training neural radiance fields, we use \bacon{} with 8 hidden layers and 256 hidden features.
We set the bandwidth of each layer using random uniform initialization with $\boldsymbol{\omega}_i~\sim \mathcal{U}(-B_i, B_i)$.
For a maximum bandwidth $B$, we set $B_0=B_1=B_2 = B/24$, $B_3=B_4=B/16$, $B_5=B_6=1/8$, and $B_7=B_8=1/4$ such that $\sum_{i=0}^8 B_i = B$.

For training \bacon{} we also adopt the regularization strategy of Hedman et al.~\cite{hedman2021snerg} to penalize non-zero off-surface opacity values, $\sigma$. 
We include this term to mitigate non-zero opacity at unsupervised locations can produce hazy spots in the rendered images using \bacon{}.
The regularization penalty is given as
\begin{equation}
    \mathcal{L}_\text{reg} = \lambda_\sigma \sum\limits_{i,j,k} \log\left(1 + 2\sigma_k(\mathbf{r}_i, \mathbf{t}^f_j) \right),
\end{equation}
where $\lambda_\sigma$ is a weight that we decay from logarithmically from 1e-3 to 1e-6 during training.

\red{To evaluate the effect of the regularization, we train \bacon{}, Mip-NeRF, and NeRF using the same regularizer and report the PSNR in Table~\ref{tab:reg}.
All methods are trained on the \textit{lego} scene for 300K iterations with and without regularization.
NeRF and Mip-NeRF have few opacity artifacts, and so do not benefit from regularization.
\bacon{} shows a significant benefit from regularization.}

Additionally, we find that we can obtain a roughly 30\% speedup for training and inference without noticeable drop in performance by re-using outputs $g_i$ throughout the network.
For example, for layers where $B_i = B_{i+1}$, we set $g_i = g_{i+1}$, allowing us to reuse computation and reducing the number of input layers that need to be computed by roughly half.

\begin{table}[h]
    \centering
    \begin{tabular}{cccc}
        \toprule
        scale & NeRF (no reg/reg) & Mip-NeRF (no reg/reg) & \textsc{Bacon} (no reg/reg)\\ \midrule
         1$\times$  & \textbf{27.695}/27.679 & \textbf{32.655}/32.436 & 24.377/\textbf{29.658} \\ 
         1/2        & \textbf{30.577}/30.565 & \textbf{33.952}/33.863 & 24.611/\textbf{29.501} \\ 
         1/4        & 31.305/\textbf{31.311} & \textbf{34.058}/34.008 & 25.105/\textbf{29.468} \\ 
         1/8        & \textbf{27.067}/27.059 & \textbf{33.805}/33.762 & 25.854/\textbf{28.958} \\ 
         \bottomrule
    \end{tabular}
    \caption{\red{Evaluation of the effect of opacity regularization for NeRF, Mip-NeRF, and \bacon{}}.}
    \label{tab:reg}
\end{table}

\subsubsection{Supplemental Results}
\red{We provide inference times of NeRF, Mip-NeRF and \bacon{} in Table~\ref{tab:timings} evaluated on an NVIDIA RTX A6000 GPU.
 While our implementation is generally slower than the NeRF and Mip-NeRF implementations, we attribute some of this difference to the underlying frameworks; we use PyTorch~\cite{paszke2019pytorch}, while NeRF and Mip-NeRF are implemented in JAX~\cite{jax2018github}.
Additionally, the \bacon{} architecture has somewhat greater computational complexity than the comparable NeRF and Mip-NeRF architectures due to the additional sine input layers and Hadamard products.
Still, for low-resolution outputs \bacon{} has a computational advantage because only the first few layers need to be evaluated.}

In Tables~\ref{table:nerf_psnr} and~\ref{table:nerf_ssim} we provide the per-scene average PSNR and SSIM for each method.
We observe the same trends as in the main paper, with Mip-NeRF achieving the best performance, while \bacon{} outperforms NeRF at the lowest and highest resolution outputs and uses a fraction of the parameters to render the low-resolution outputs compared to either baseline.
\red{An additional comparison is shown in Table~\ref{tab:small_model} for small versions of the NeRF and Mip-NeRF models trained on the \textit{lego} scene.
The number of layers is reduced so that the parameter count is roughly equivalent to the lowest resolution output of \bacon{}.
The output PSNR for each method degrades by roughly 1--2 dB at each scale compared to the full-resolution models}.

Supplemental qualitative results are shown in Fig.~\ref{fig:supp_nerf}, where we show output images for each scene at each scale.
Finally, we show additional results for learning neural radiance fields in a semi-supervised case in Fig.~\ref{fig:supp_nerf_semisupervise}. 
Here, outputs of \bacon{} at each scale are supervised on full resolution images, and \bacon{} automatically learns the multiscale decomposition of the neural radiance field used for rendering.

\begin{table}[h]
    \centering
    \begin{tabular}{lcccc}
    \toprule
    & \multicolumn{4}{c}{Inference Times (s)}\\
                    & 1$\times$ & 1/2 & 1/4 & 1/8 \\\midrule
                    NeRF      &  4.4 & 1.1 & 0.28 & 0.073\\
                    Mip-NeRF  &  4.5 & 1.1 & 0.28 & 0.073 \\
                    \bacon{}  &  10.2 & 2.1 & 0.39 & 0.065 \\
    \bottomrule
    \end{tabular}
    \caption{\red{Inference times for NeRF, Mip-NeRF, and \bacon{}.}}
    \label{tab:timings}
\end{table}

\begin{table}[h]
    \centering
    \begin{tabular}{lc|cccccccc|c}
    \toprule
    & & \multicolumn{8}{c}{PSNR} \\
    & \# Params. & chair  & drums  & ficus  & hotdog  & lego & materials & mic & ship & Avg.\\\midrule
        NeRF 1/8         & \downbar               &  28.767 &  24.025 &  25.188 &   29.685 & 26.539 &      24.758 & 26.720 & 26.028 & 26.464 \\
    NeRF 1/4             & \upbar                 &  33.064 &  25.492 &  26.161 &   33.478 & 30.782 &      26.618 & 30.615 & 28.163 & 29.297 \\
    NeRF 1/2             & 511K                   &  32.882 &  24.503 &  25.387 &   33.711 & 31.037 &      25.850 & 30.517 & 27.640 & 28.941 \\
    NeRF 1$\times$       & \downbar               &  29.565 &  22.741 &  24.280 &   31.408 & 28.228 &      24.319 & 27.827 & 25.508 & 26.734 \\
    NeRF Avg.            & \upbar                 &  31.070 &  24.190 &  25.254 &   32.071 & 29.147 &      25.386 & 28.920 & 26.834 & 27.859 \\\midrule
    Mip-NeRF 1/8         & \downbar               &  37.174 &  28.200 &  28.177 &   37.332 & 33.924 &      30.414 & 35.803 & 31.631 & 32.832 \\
    Mip-NeRF 1/4         & \upbar                 &  36.700 &  26.979 &  26.951 &   37.131 & 34.266 &      29.233 & 34.977 & 30.503 & 32.093 \\
    Mip-NeRF 1/2         & 511K                   &  35.724 &  25.560 &  26.685 &   36.622 & 34.295 &      27.972 & 34.219 & 29.379 & 31.307 \\
    Mip-NeRF 1$\times$   & \downbar               &  33.374 &  24.005 &  26.428 &   34.984 & 33.136 &      26.764 & 32.494 & 27.808 & 29.874 \\
    Mip-NeRF Avg.        & \upbar                 &  35.743 &  26.186 &  27.060 &   36.517 & 33.905 &      28.596 & 34.373 & 29.830 & \textbf{31.526} \\\midrule
    \bacon{} 1/8         & 133K                   &  31.764 &  25.967 &  27.184 &   31.670 & 29.161 &      25.899 & 28.609 & 27.549 & 28.475 \\
    \bacon{} 1/4         & 266K                   &  32.523 &  26.094 &  25.562 &   32.175 & 29.768 &      25.268 & 29.524 & 27.244 & 28.520 \\
    \bacon{} 1/2         & 398K                   &  31.958 &  25.074 &  24.319 &   32.342 & 29.890 &      24.948 & 29.444 & 26.552 & 28.066 \\
    \bacon{} 1$\times$   & 531K                   &  30.729 &  24.175 &  23.753 &   31.942 & 30.418 &      24.300 & 28.454 & 25.668 & 27.430 \\
    \bacon{} Avg.        & 329K                   &  31.744 &  25.327 &  25.204 &   32.032 & 29.809 &      25.104 & 29.008 & 26.753 & \underline{28.123} \\
    \bottomrule
    \end{tabular}
    \caption{PSNR for each method averaged over each scene of the multiscale Blender dataset.}
    \label{table:nerf_psnr}
\end{table}

\clearpage
\begin{table}[h]
    \centering
    \begin{tabular}{lc|cccccccc|c}
    \toprule
    &  & \multicolumn{8}{c}{SSIM} \\
    & \# Params. & chair & drums & ficus & hotdog & lego & materials & mic & ship & Avg.\\\midrule
        NeRF 1/8       & \downbar               & 0.941 &   0.902 &   0.918 &    0.957 &  0.930 &       0.944 & 0.963 &  0.876 & 0.929 \\
    NeRF 1/4           & \upbar                 & 0.976 &   0.926 &   0.947 &    0.973 &  0.968 &       0.943 & 0.980 &  0.888 & 0.950 \\
    NeRF 1/2           & 511K                   & 0.972 &   0.909 &   0.942 &    0.968 &  0.961 &       0.922 & 0.971 &  0.865 & 0.939 \\
    NeRF 1$\times$     & \downbar               & 0.935 &   0.879 &   0.925 &    0.951 &  0.922 &       0.895 & 0.947 &  0.820 & 0.909 \\
    NeRF Avg.          & \upbar                 & 0.956 &   0.904 &   0.933 &    0.962 &  0.945 &       0.926 & 0.965 &  0.862 & 0.932 \\\midrule
    Mip-NeRF 1/8       & \downbar               & 0.990 &   0.952 &   0.951 &    0.986 &  0.984 &       0.978 & 0.994 &  0.928 & 0.970 \\
    Mip-NeRF 1/4       & \upbar                 & 0.989 &   0.942 &   0.954 &    0.983 &  0.984 &       0.964 & 0.990 &  0.909 & 0.964 \\
    Mip-NeRF 1/2       & 511K                   & 0.986 &   0.929 &   0.960 &    0.980 &  0.982 &       0.949 & 0.985 &  0.890 & 0.957 \\
    Mip-NeRF 1$\times$ & \downbar               & 0.975 &   0.915 &   0.957 &    0.973 &  0.972 &       0.932 & 0.980 &  0.861 & 0.946 \\
    Mip-NeRF Avg.      & \upbar                 & 0.985 &   0.935 &   0.955 &    0.981 &  0.980 &       0.955 & 0.987 &  0.897 & \textbf{0.959} \\\midrule
    \bacon{} 1/8       & 133K                   & 0.962 &   0.919 &   0.933 &    0.967 &  0.954 &       0.945 & 0.970 &  0.882 & 0.942 \\
    \bacon{} 1/4       & 266K                   & 0.972 &   0.931 &   0.930 &    0.966 &  0.948 &       0.922 & 0.975 &  0.877 & 0.940 \\
    \bacon{} 1/2       & 398K                   & 0.968 &   0.923 &   0.927 &    0.965 &  0.949 &       0.913 & 0.968 &  0.854 & 0.934 \\
    \bacon{} 1$\times$ & 531K                   & 0.957 &   0.917 &   0.926 &    0.959 &  0.951 &       0.901 & 0.958 &  0.827 & 0.924 \\
    \bacon{} Avg.      & 329K                   & 0.965 &   0.923 &   0.929 &    0.964 &  0.951 &       0.921 & 0.968 &  0.860 & \underline{0.935} \\
    \bottomrule
    \end{tabular}
    \caption{SSIM for each method averaged over each scene of the multiscale Blender dataset.}
    \label{table:nerf_ssim}
\end{table}

\begin{table}[h]
    \centering
    \begin{tabular}{lc|cccc|cccc}
    \toprule
    & & \multicolumn{4}{c|}{PSNR} & \multicolumn{4}{c}{SSIM}\\
    & \# Params. & 1$\times$ & 1/2 & 1/4 & 1/8 & 1$\times$ & 1/2 & 1/4 & 1/8 \\\midrule
        NeRF     & 157K   &  27.144 & 30.050 & 31.554 & 27.309 & 0.903 & 0.949 & 0.971 & 0.940 \\
        Mip-NeRF & 157K   &  30.136 & 32.067 & 32.901 & 32.798 & 0.939 & 0.965 & 0.977 & 0.980 \\
        \bacon{} & 133K   &  N/A & N/A & N/A & 29.161 & N/A & N/A & N/A & 0.954\\ \bottomrule
    \end{tabular}
    \caption{\red{Comparison between small models trained on the \textit{lego} dataset with roughly equal numbers of parameters as the lowest resolution output of \bacon{}. Reducing the parameter count of NeRF and Mip-NeRF results in a roughly 1--2 dB loss at each scale output.}}
    \label{tab:small_model}
\end{table}

\begin{figure*}[t]
    \centering
    \includegraphics{./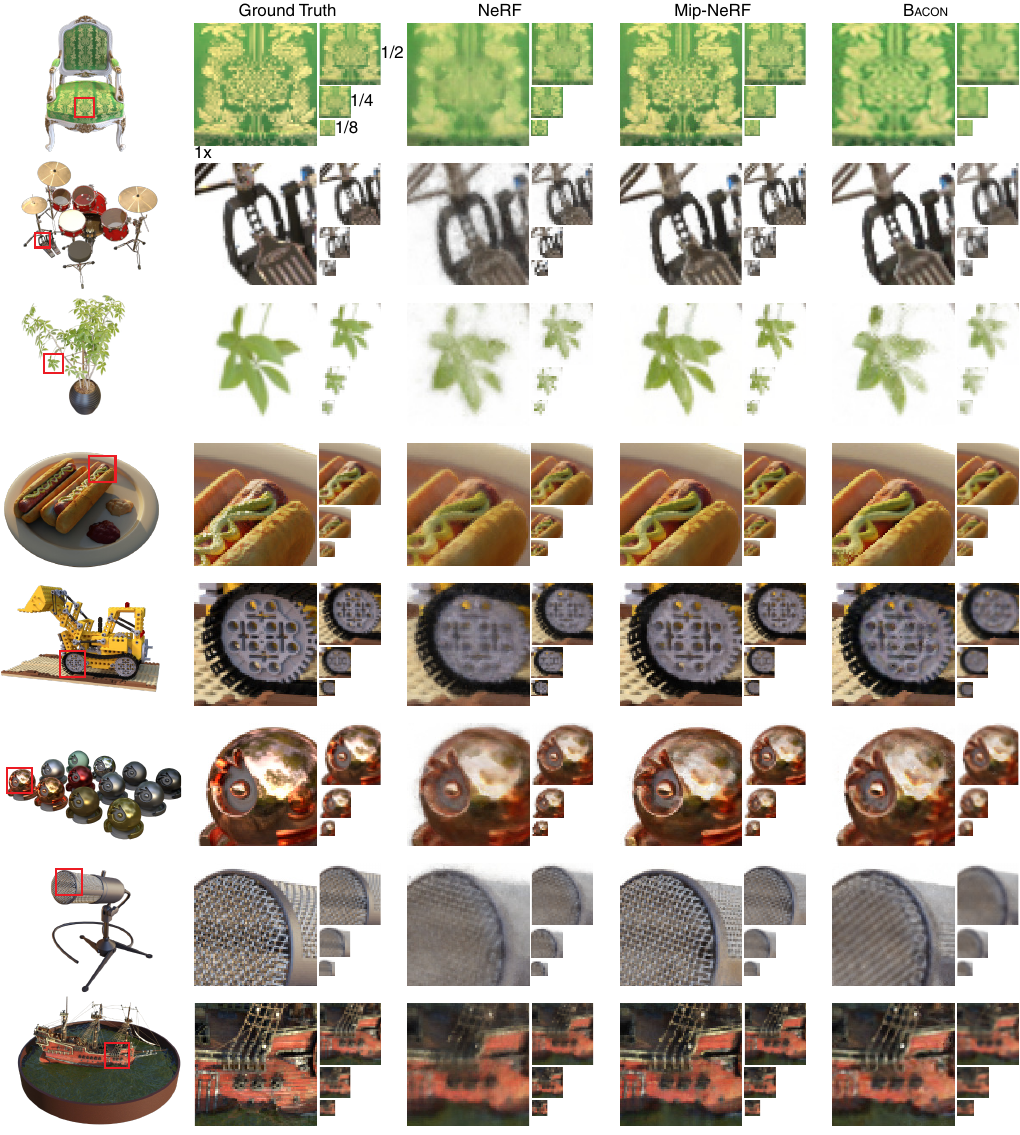}
    \caption{Neural radiance field results. Outputs of NeRF~\cite{mildenhall2020nerf}, Mip-NeRF~\cite{barron2021mip}, and \bacon{} are shown, where all outputs are supervised using each scale of the multiscale Blender dataset.}
    \label{fig:supp_nerf}
\end{figure*}

\begin{figure*}[t]
    \centering
    \includegraphics[width=0.9\textwidth]{./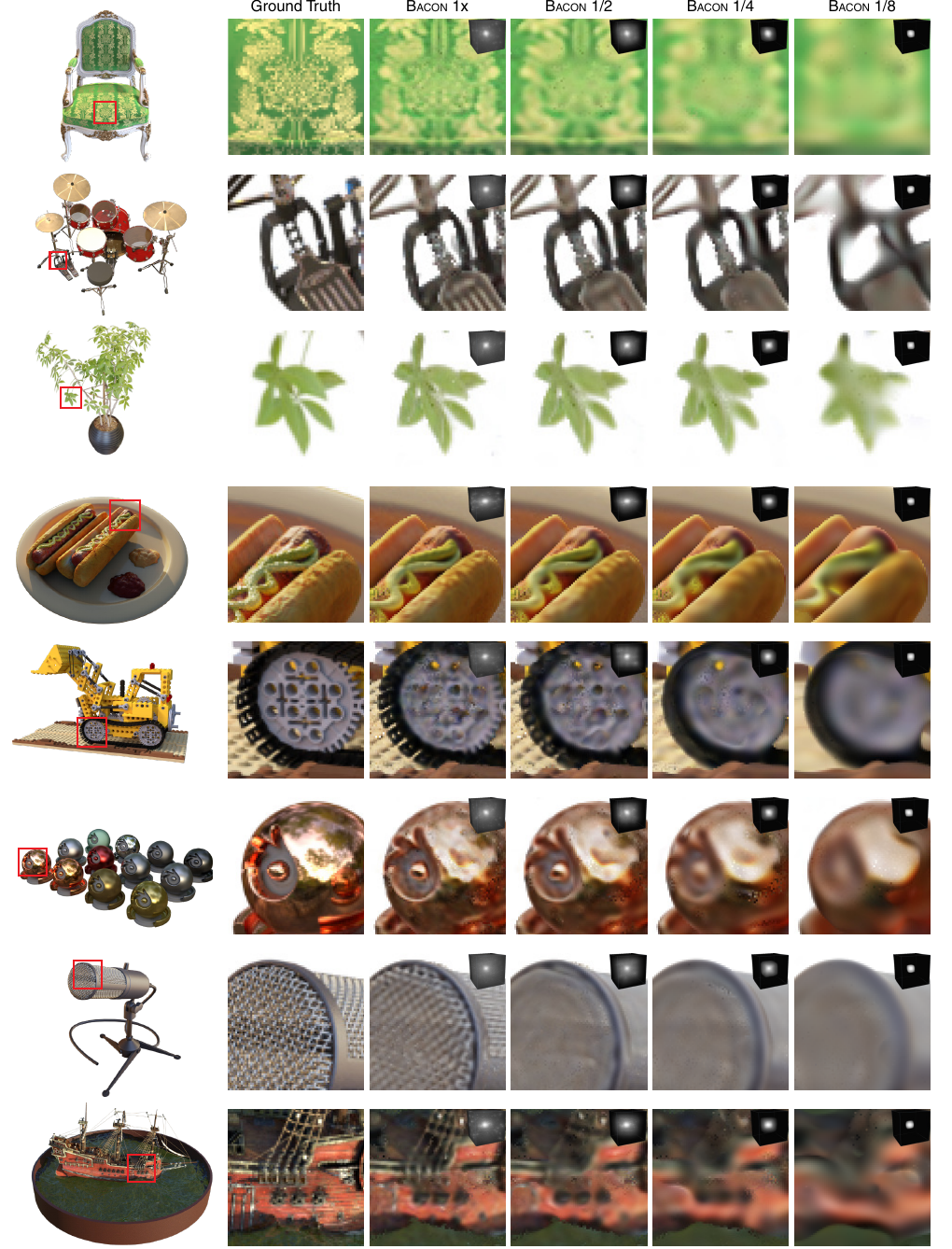}
    \caption{Results of training \bacon{} with all outputs supervised at high resolution. \bacon{} learns a multiscale decomposition for each scene. Fourier spectra of the learned opacity volume are shown as insets.}
    \label{fig:supp_nerf_semisupervise}
\end{figure*}

\clearpage

\subsection{3D Shape Representation}

\subsubsection{Additional Implementation Details}
We evaluate \bacon{} on 3D shape representation with SDFs and compare its performance to three other methods: Fourier Features~\cite{tancik2020fourier}, \siren{}~\cite{sitzmann2020siren}, and Neural Geometric Level of Detail (\nglod{})~\cite{takikawa2021nglod}.
All networks are trained to directly fit a signed distance function estimated from a ground-truth mesh.
For \bacon{}, Fourier Features, and \siren{} we use 8 hidden layers with 256 hidden features.
We scale the shape models so that they fit within a volume whose dimensions extend from -0.5 to 0.5.
The maximum bandwidth of \bacon{} is set to 256 cycles per unit interval for the \textit{Lucy} and \textit{Thai Statue} scenes, and to 192 cycle per unit interval for the \textit{Armadillo}, \textit{Dragon}, and \textit{Sphere} models.
We scale the bandwidth of each layer the same as with the neural radiance field models as described in the main text.
We determine the maximum bandwidth empirically such that the network achieves a good fit to high frequency details in the model with noticeable smoothing in the lower levels of detail.

After training, the models are extracted at $512^3$ resolution using marching cubes, and we evaluate performance using Chamfer distance and intersection over union (IOU).
Chamfer distance is calculated by sampling 300,000 points on the surface of the ground truth and predicted models and then finding the distance to the closest point on the other surface.
That is, for the two point clouds $P_1$, and $P_2$ we compute 
\begin{equation}
D_\text{Chamfer}(P_1, P_2) = \frac{1}{|P_1|} \sum\limits_{\mathbf{x}\in P_1} \min_{\mathbf{y}\in P_2}\lVert \mathbf{x} - \mathbf{y} \rVert_2^2 + \frac{1}{|P_2|} \sum\limits_{\mathbf{x}\in P_2} \min_{\mathbf{y}\in P_1}\lVert \mathbf{x} - \mathbf{y} \rVert_2^2.
\end{equation}
%
For the IOU score, we compute intersection and union of the occupancy values for the ground truth and predicted meshes on a $128^3$ grid of points centered on the object.

\subsubsection{Supplemental Results}
We include shape fitting results for four scenes from the Stanford 3D Scanning Repository (\textit{Armadillo}, \textit{Dragon}, \textit{Lucy}, \textit{Thai Statue}) and a simple sphere in Figures~\ref{fig:supp_dragon},~\ref{fig:supp_armadillo},~\ref{fig:supp_lucy}, \ref{fig:supp_thai}, and \ref{fig:supp_sphere}.
All methods perform similarly at the highest level of detail (see Table~\ref{tab:shapes_quant}).
For lower levels of detail, \bacon{} (1/8, 1/4, 1/2) represents a smooth multiscale decomposition of the shape, while the representations for \nglod{} (\nglod{}-1,2,3) show angular artifacts due to their high-frequency spectra (see figure insets).
Note that results for \nglod{} are shown for training the representation on a maximum of 4 levels of detail (i.e., the number of trained levels of their feature octree) and then rendering out the resulting trained levels of detail 1--4.

Table~\ref{tab:shapes_quant} includes quantitative evaluation of each method for the \textit{Armadillo}, \textit{Dragon}, \textit{Lucy}, \textit{Thai Statue} scenes, and a simple sphere baseline (with radius 0.25).
The highest detail outputs of all methods perform comparably, including \bacon{}, which achieves similar performance despite simultaneously representing multiple levels of detail.
\nglod{} generally improves at higher levels of detail at the cost of significantly more model parameters.
Here, \nglod{} 1--4 represent outputs from the model trained at maximum level of detail 4, and we also train separate models with maximum levels of detail 5--6 (\nglod{}-5 and \nglod{}-6).

Additionally, Table~\ref{tab:shapes_quant_low_res} includes evaluation of shape fitting at lower levels of detail for \bacon{} and \nglod{} (trained with a maximum level of detail 4).
Note that here \nglod{} has fewer parameters than \bacon{} for lower levels of detail; this is not true for the full resolution models, as number of parameters scales superlinearly for \nglod{} and linearly for \bacon{}.

\begin{table}[h]
    \centering
    \resizebox{\columnwidth}{!}{
    \begin{tabular}{lccccccccccc}
    \toprule

                    & \multirow{2}{*}{Parameters} & \multicolumn{2}{c}{Sphere} & \multicolumn{2}{c}{Dragon} & \multicolumn{2}{c}{Armadillo} & \multicolumn{2}{c}{Lucy} & \multicolumn{2}{c}{Thai Statue} \\
                    & & Chamfer$\downarrow$ & IOU$\uparrow$ & Chamfer$\downarrow$ & IOU$\uparrow$  & Chamfer$\downarrow$ & IOU$\uparrow$ & Chamfer$\downarrow$ & IOU$\uparrow$ & Chamfer$\downarrow$ & IOU$\uparrow$ \\\midrule

        Fourier Features    & 527K  & \underline{8.364e-7} & \textbf{1.000e+0}    &   \textbf{1.861e-6}     & 9.828e-1             & \underline{3.230e-6}  & \underline{9.897e-1}  & \textbf{2.956e-6}     & \textbf{9.654e-1}     & \textbf{1.946e-6}     & \underline{9.823e-1}\\
        \textsc{Siren}      & 528K  & 8.407e-7          & \textbf{1.000e+0}        &   2.762e-6             & 9.621e-1             & 3.895e-6              & 9.858e-1              & 3.706e-6              & 9.625e-1              & 2.695e-6              & 9.651e-1 \\
        \nglod{}-4             & 1.35M & 9.722e-7          & 9.990e-1                 &   2.272e-6             & 9.722e-1             & 3.410e-6              & 9.891e-1              & 5.479e-6              & 9.421e-1              & 2.320e-6              & 9.689e-1 \\
        \nglod{}-5             & 10.1M & 9.443e-7          & \underline{9.993e-1}     &   2.211e-6             & \textbf{9.841e-1}    & 3.804e-6              & 9.835e-1              & 3.206e-6              & 9.621e-1              & 2.047e-6              & 9.767e-1 \\
        \nglod{}-6             & 78.8M & 1.064e-6          & 9.966e-1                 &   1.918e-6             & \underline{9.840e-1} & \textbf{3.212e-6}     & \textbf{9.911e-1}     & \underline{3.013e-6}  & 9.634e-1              & \underline{1.939e-6}  & \textbf{9.824e-1} \\
        \bacon{} 1$\times$  & 531K  & \textbf{8.353e-7} & \textbf{1.000e+0}        &   \underline{1.875e-6} & 9.831e-1             & 3.233e-6              & 9.893e-1              & 3.075e-6              & \underline{9.650e-1}  & 1.972e-6              & 9.791e-1 \\
    \bottomrule
\end{tabular}}
\caption{Quantitative evaluation of 3D shape fitting for high-detail outputs. All methods show comparable performance. Neural Geometric Level of Detail (\nglod{})~\cite{takikawa2021nglod} is shown for levels of detail 4, 5, and 6 and performance generally increases (as does the model parameter count) with increasing levels of detail. \bacon{} gives comparable performance at the highest resolution scale to other methods despite simultaneously representing multiple levels of detail.}
    \label{tab:shapes_quant}
\end{table}

\begin{table}[h]
    \centering
    \resizebox{\columnwidth}{!}{
    \begin{tabular}{lccccccccccc}
    \toprule
         & \multirow{2}{*}{Parameters} & \multicolumn{2}{c}{Sphere} & \multicolumn{2}{c}{Dragon} & \multicolumn{2}{c}{Armadillo} & \multicolumn{2}{c}{Lucy} & \multicolumn{2}{c}{Thai Statue} \\
         & & Chamfer$\downarrow$ & IOU$\uparrow$ & Chamfer$\downarrow$ & IOU$\uparrow$  & Chamfer$\downarrow$ & IOU$\uparrow$ & Chamfer$\downarrow$ & IOU$\uparrow$ & Chamfer$\downarrow$ & IOU$\uparrow$ \\\midrule
	    \nglod{}-1             & 8.74K & 9.391e-7              & 9.987e-1             & 6.624e-6             & 9.381e-1             & 6.435e-6             & 9.699e-1              & 1.965e-5             &   8.936e-1           & 8.139e-6             & 9.392e-1 \\
        \nglod{}-2             & 36.8K & 9.549e-7              & 9.989e-1             & 3.247e-6             & 9.612e-1             & 4.033e-6             & 9.839e-1              & 7.550e-6             &   9.288e-1           & 3.474e-6             & 9.638e-1 \\
        \nglod{}-3             & 199K  & 1.100e-6              & \underline{9.975e-1} & 2.274e-6             & 9.722e-1             & 3.407e-6             & 9.891e-1              & 5.513e-6             &   9.421e-1           & 2.325e-6             & 9.689e-1 \\
        \bacon{} 1/8        & 133K  & 8.349e-7              & \textbf{1.000e+0}    & 3.430e-6             & 9.624e-1             & 3.997e-6             & 9.844e-1              & 5.309e-6             & 9.461e-1             & 3.168e-6             & 9.622e-1 \\
        \bacon{} 1/4        & 266K  & \textbf{8.320e-7}     & \textbf{1.000e+0}    & \underline{2.223e-6} & \underline{9.773e-1} & \underline{3.355e-6} &\underline{ 9.892e-1}  & \underline{3.467e-6} & \underline{9.627e-1} & \underline{2.119e-6} & \underline{9.748e-1} \\
        \bacon{} 1/2        & 398K  & \underline{8.343e-7} & \textbf{1.000e+0}    & \textbf{1.955e-6}    & \textbf{9.815e-1}    & \textbf{3.242e-6}    & \textbf{9.897e-1}     & \textbf{3.174e-6}    & \textbf{9.652e-1}    & \textbf{1.985e-6}    & \textbf{9.799e-1} \\
    \bottomrule
\end{tabular}}
\caption{Quantitative evaluation of 3D shape fitting for \bacon{} and Neural Geometric Level of Detail (\nglod{})~\cite{takikawa2021nglod} for low levels of detail. Here, \nglod{} has fewer parameters than \bacon{}, and \bacon{} generally achieves better performance.}
    \label{tab:shapes_quant_low_res}
\end{table}

\clearpage
\begin{figure*}[ht]
    \centering
    \includegraphics[width=\textwidth]{./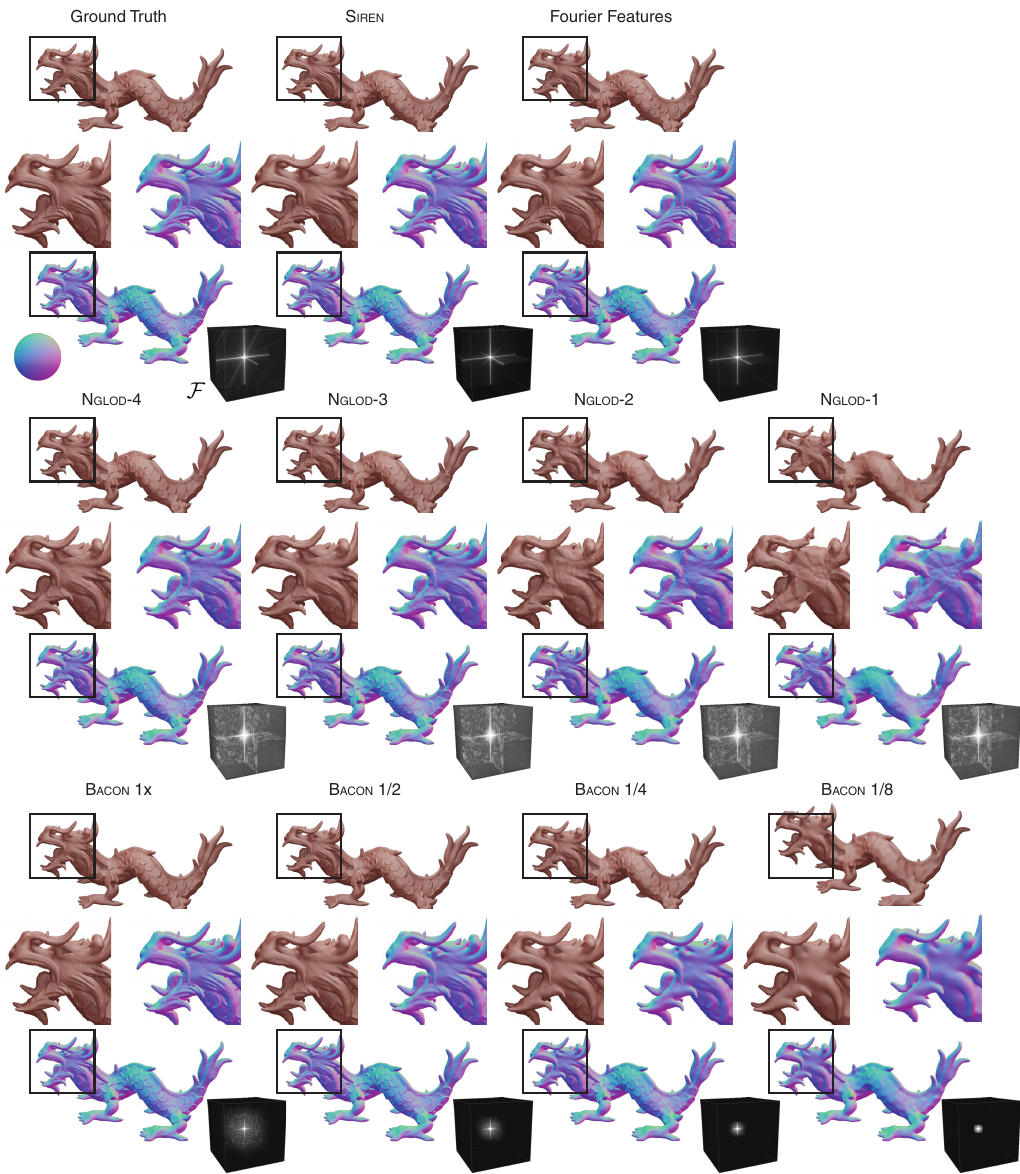}
    \caption{Qualitative results on the \textit{Dragon} scene. Rendered objects and normal maps are shown, and Fourier spectra of the SDF values are included as insets.}
    \label{fig:supp_dragon}
\end{figure*}

\begin{figure*}[ht]
    \centering
    \includegraphics[width=\textwidth]{./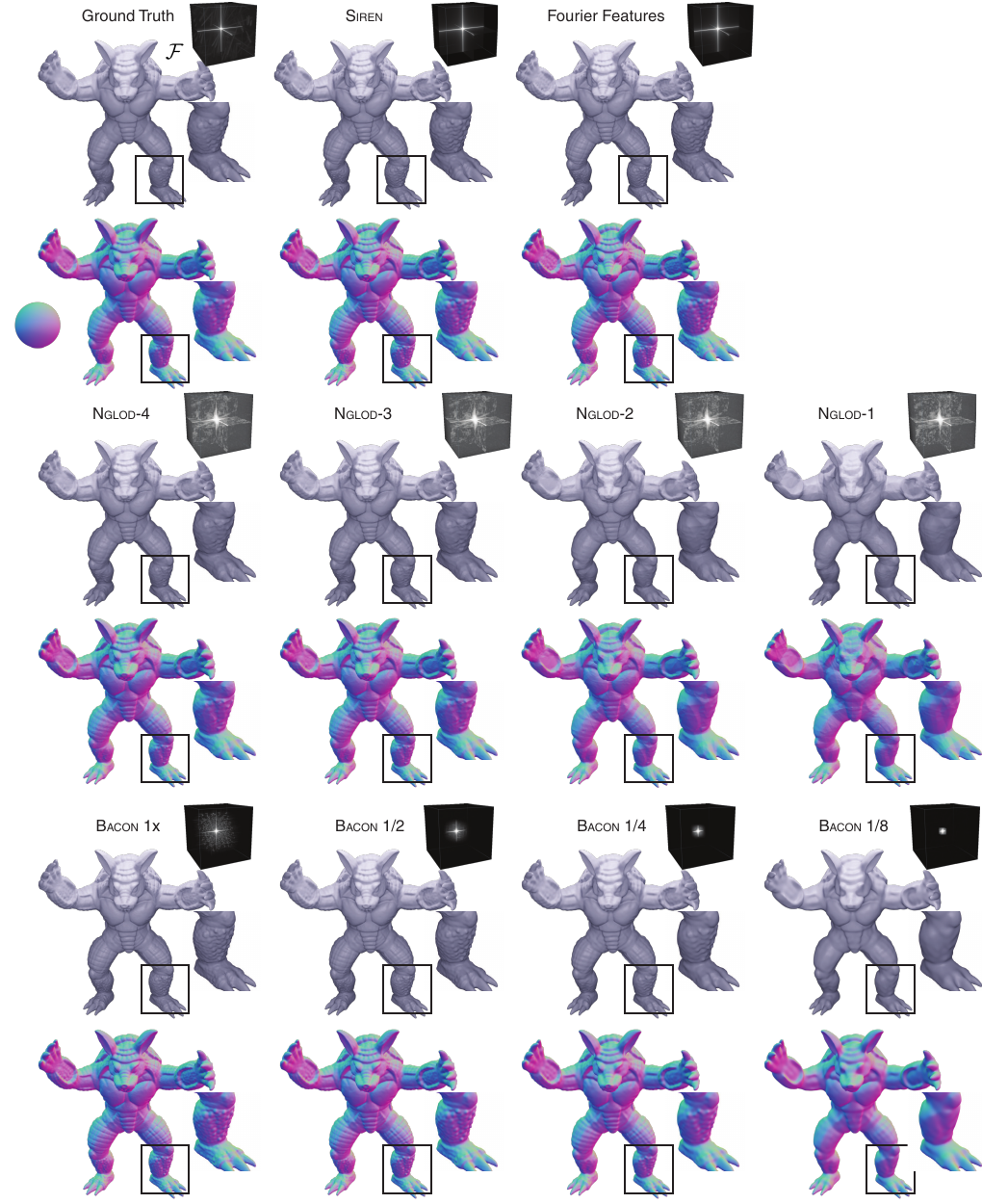}
    \caption{Qualitative results on the \textit{Armadillo} scene. Rendered objects and normal maps are shown, and Fourier spectra of the SDF values are included as insets.}
    \label{fig:supp_armadillo}
\end{figure*}

\begin{figure*}[ht]
    \centering
    \includegraphics[width=0.95\textwidth]{./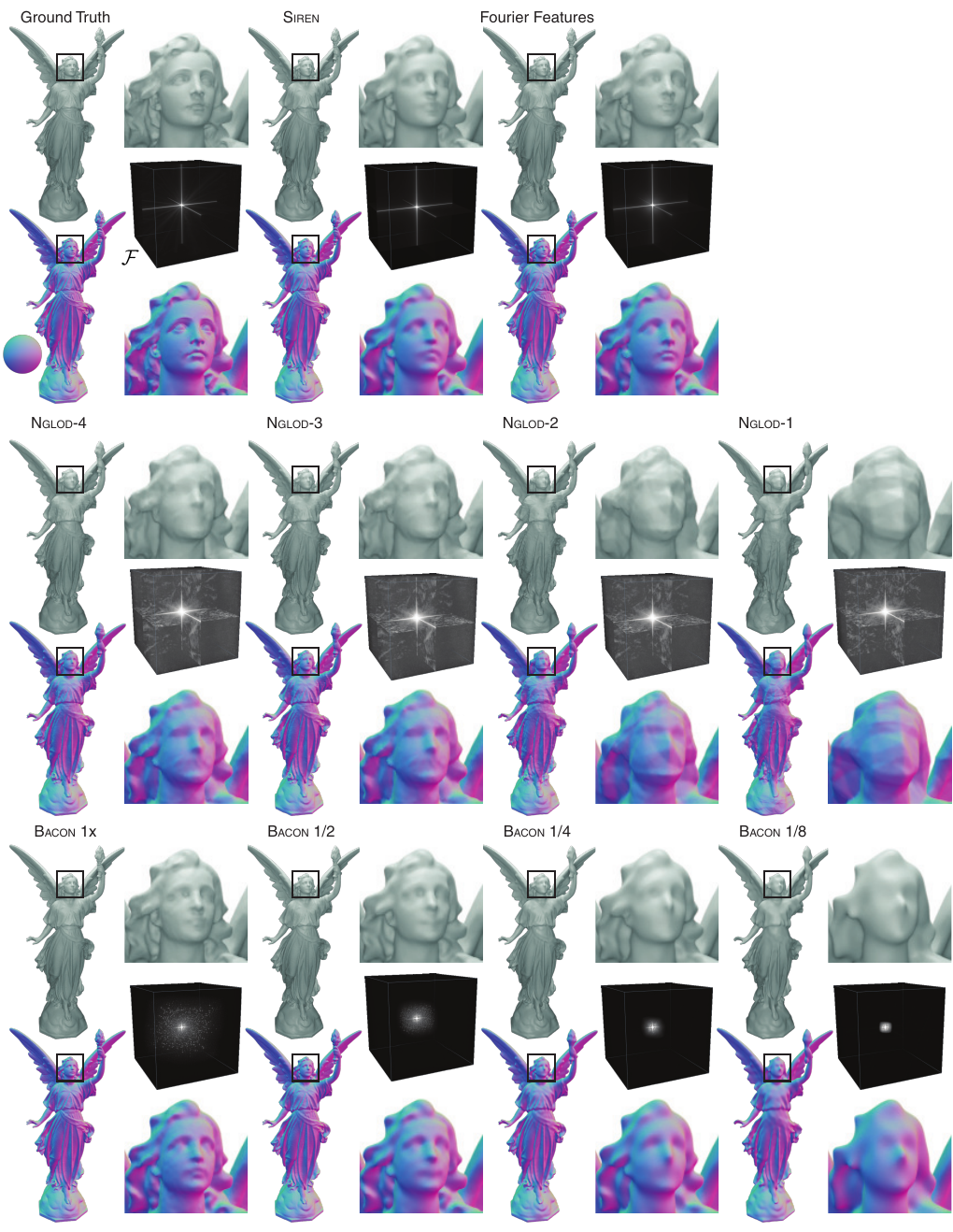}
    \caption{Qualitative results on the \textit{Lucy} scene. Rendered objects and normal maps are shown, and Fourier spectra of the SDF values are included as insets.}
    \label{fig:supp_lucy}
\end{figure*}

\begin{figure*}[ht]
    \centering
    \includegraphics[width=0.95\textwidth]{./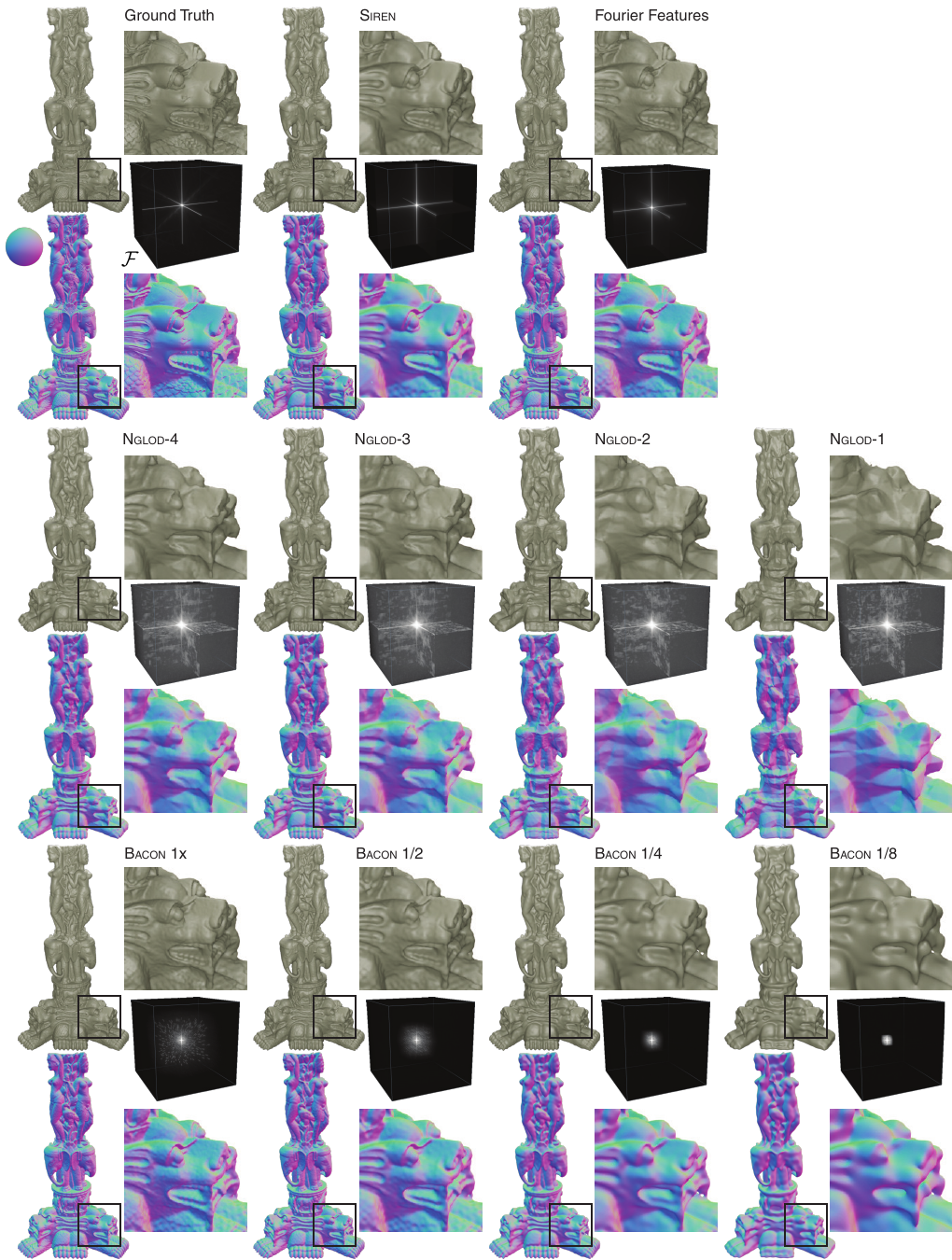}
    \caption{Qualitative results on the \textit{Thai Statue} scene. Rendered objects and normal maps are shown, and Fourier spectra of the SDF values are included as insets.}
    \label{fig:supp_thai}
\end{figure*}

\begin{figure*}[ht]
    \centering
    \includegraphics[width=0.95\textwidth]{./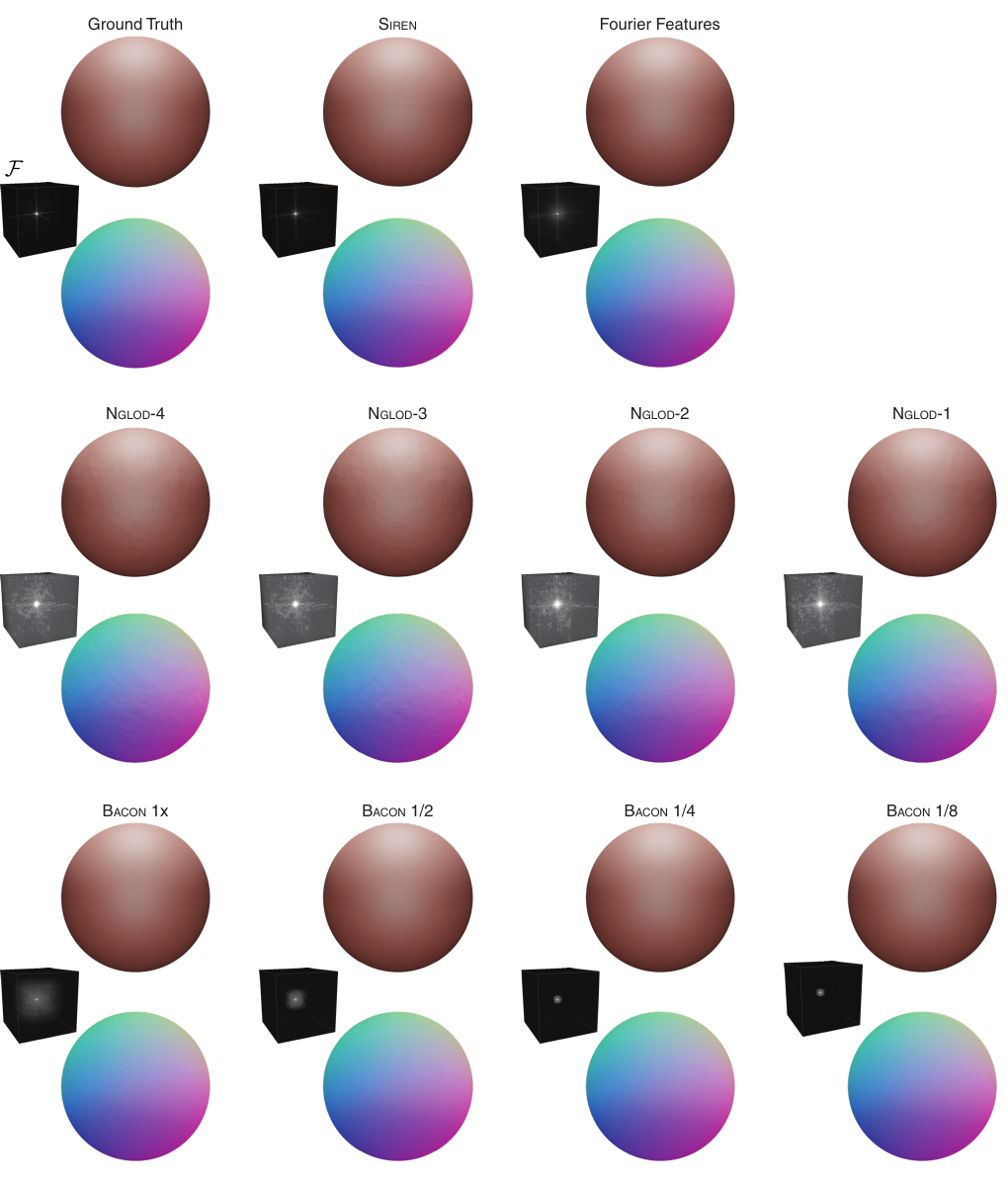}
    \caption{Qualitative results on the \textit{Sphere} scene. Rendered objects and normal maps are shown, and Fourier spectra of the SDF values are included as insets.}
    \label{fig:supp_sphere}
\end{figure*}

\clearpage

\subsection{Accelerated Marching Cubes}
In this section, we explain two strategies for accelerating mesh extraction via the Marching Cubes algorithm with signed distance function (SDF) representation networks.

\subsubsection{Adaptive-Frequency SDF Evaluation}
We observe that the band-limited, multi-scale nature of our network allows efficient allocation of computational resources when evaluating SDFs.
The key idea is to adaptively choose whether to use SDFs from low-frequency or high-frequency output layers (Fig.
\ref{fig:adaptive}).
For each cell, we compute the low-frequency output $\text{SDF}_{\text{low}}$ that takes a fraction of time of the full network evaluation.
Then, for cells that are far away from the zero-level-set (i.e., magnitude of $\text{SDF}_{\text{low}}$ larger than some threshold $\tau$), we adopt early-stopping and do not proceed to the higher network layers, as we do not need highly-accurate SDFs for empty cells.
For cells near the surface (i.e., $|\text{SDF}_{\text{low}}|<\tau$), we need accurate SDFs, and thus we use the full network for high-frequency outputs. 
This adaptive early-stopping strategy meaningfully reduces the computation time for mesh extraction (Table~\ref{tab:MarchinbCubes}) and is unique to \bacon{} that produces multi-scale intermediate outputs, unlike the existing architectures such as SIREN that needs to go through the entire network for all cases.
We set $\tau$ to be 0.7 times the finest voxel length.

\subsubsection{Multi-scale SDF Evaluation}
We introduce another strategy to accelerate Marching Cubes mesh extraction using octree-style, multi-scale SDF grids.
As shown in Fig.~\ref{fig: MS_MC}, we evaluate the shape SDFs in a hierarchical way, from the low to high resolution grids.
We note that the SDFs evaluated at a coarse level can be used to decide whether or not to subdivide a cell.
That is, assuming the modeled SDFs are accurate, when the magnitude of SDF at the center of a voxel is larger than the radius of the circumsphere, the voxel is empty (i.e., containing no zero-level-set), so we do not need to further evaluate the SDFs at higher resolutions.
Similarly, when the SDF magnitude at the center is smaller than the threshold $R$, the voxel contains zero-crossing, so it needs to be evaluated at higher resolution via subdivision.
Empirically, we set $R$ to be 2 times the circumsphere radius, to provide a margin of safety to the SDF modeling errors.
This multi-scale Marching Cubes approach is not unique to \bacon{} and can be applied to other SDF-modeling networks such as SIREN.
As shown in Table~\ref{tab:MarchinbCubes}, the strategy reduces the computation time by a factor of $\approx$40. 

\subsubsection{Combining the Two Strategies}
While the above two strategies individually provide significant speedup for mesh extraction, we can combine them together to further enhance the performance.
That is, we adopt the adaptive-frequency approach for each level in the multi-scale evaluation.
For coarse levels, we adopt the early-stopping strategy to all cells.  For the finest resolution level, which takes account for most of the computations, we similarly adopt early-stopping for voxels that are far away from the zero-crossings using the threshold $\tau$.
As a result, the combination of the two strategies provide another meaningful reduction of computation time against the pure multi-scale scheme, as shown in Table~\ref{tab:MarchinbCubes}.
Note our accelerated Marching Cubes does not decrease the quality of the extracted meshes (see, output shapes in Fig.~\ref{fig:supp_mc}).

\subsubsection{Discussion of Occupancy Networks}
We notice that Occupancy Networks similarly proposed a multi-resolution mesh extraction strategy on the occupancy fields.
The octree-style evaluation for occupancy fields could lead to errors, however, because occupancy fields to not provide the same empty-space guarantees that SDFs provide.
Furthermore, the adaptive-frequency evaluation cannot be used for Occupancy Networks, and thus all query points need to be evaluated by the full network layers.

\begin{figure*}[ht]
\centering
\hspace*{\fill}
\begin{subfigure}[t]{.35\linewidth}
  \centering
  \includegraphics[width=.99\linewidth]{./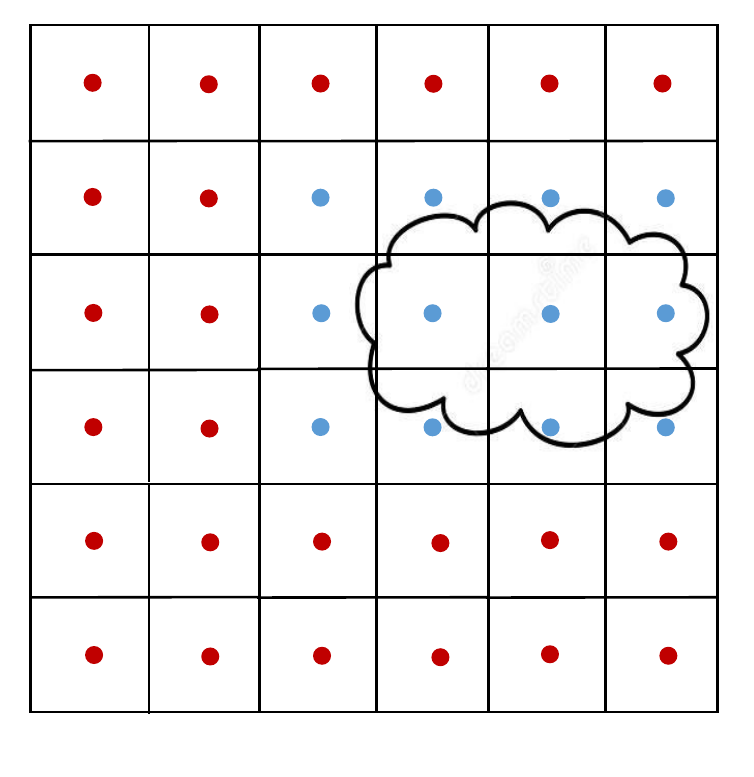}
  \caption{Adaptive-Frequency Outputs}
  \label{fig:sub1}
\end{subfigure}%
\hfill
\begin{subfigure}[t]{.55\linewidth}
  \centering
  \includegraphics[width=.99\linewidth]{./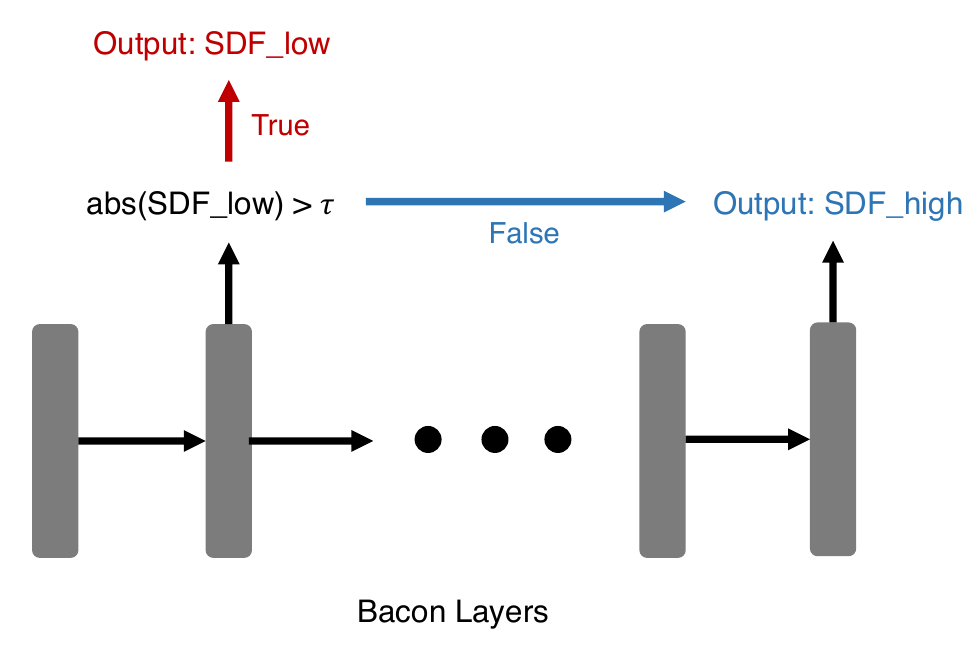}
  \caption{Adaptive-Frequecy Decision Process}
  \label{fig:sub2}
\end{subfigure}
\hspace*{\fill}
\caption{Adaptive-frequency SDF evaluations. When evaluating SDFs on a dense grid (a) for mesh extraction, we leverage the band-limited nature of \bacon{} layers to adaptively allocate the computation resources (b) across the cells. For each cell we first compute the SDF with a low-frequency output layer ($\text{SDF}_{\text{low}}$). For cells with the magnitude of $\text{SDF}_{\text{low}}$ larger than some threshold $\tau$ (i.e., the red cells that are far from the zero-level-set), we do not proceed to the higher layers of \bacon{}, as we do not need high-frequency details in the empty-space. For cells near the surface (i.e., when $|\text{SDF}_{\text{low}}|<\tau$), we compute the full-frequency SDF ($\text{SDF}_{\text{high}}$) using the highest layer output (the blue cells). This adaptive-frequency SDF evaluation saves significant amount of time on computing the SDFs in empty-space via early-stopping, which cannot be adopted by existing network architectures, e.g., SIREN, that need to go through the entire network layers for all evaluations.}
\label{fig:adaptive}
\end{figure*}

\begin{figure*}[ht]
    \centering
    \includegraphics[width=\textwidth]{./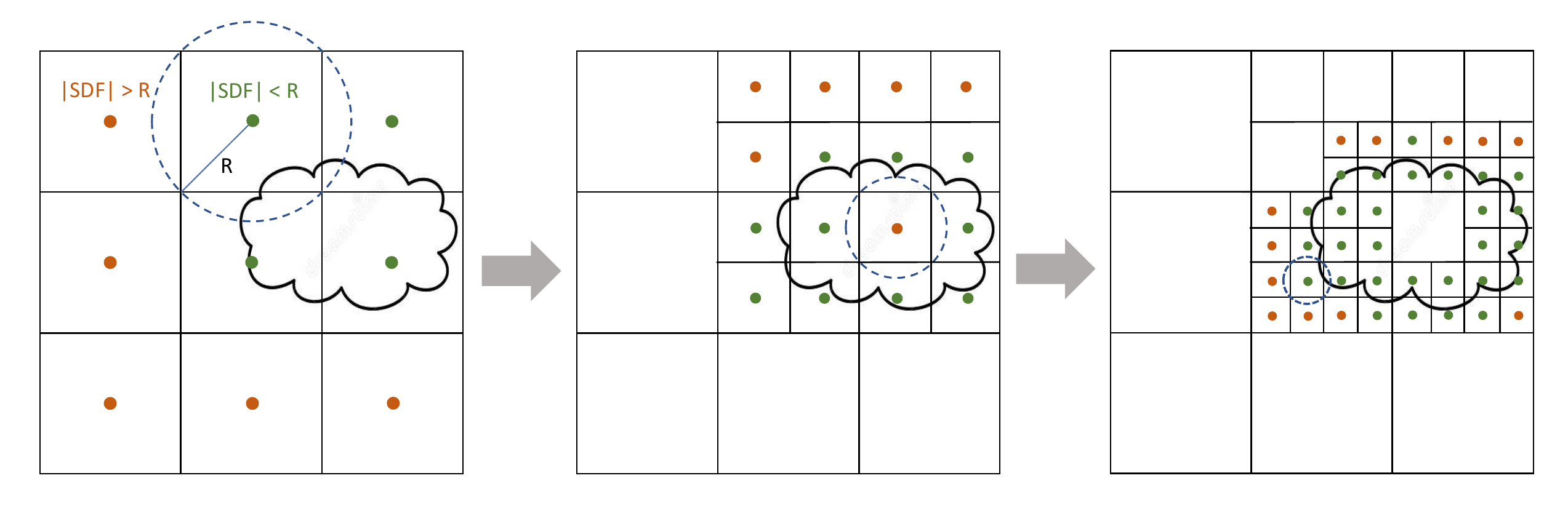}
    \caption{Multi-Scale Marching Cubes. We propose a hierarchical octree-style SDF evaluation scheme for efficiently pruning empty spaces using the nature of SDFs. We start from the coarsest resolution grid (left) and query the SDFs for all cells. Assuming the modeled SDFs are correct, we can identify some of the cells to be empty (i.e, no zero-crossing exists within the cell), when the magnitude of SDFs at the center is larger than the the radius of the circumsphere (dotted circles in the diagram). In practice, to provide a margin of safety for the model errors, we use $2\times R$ for the criteria for checking empty cells. Then, only for the non-empty cells (green), we subdivide them into 8 cells and evaluate higher resolution SDFs in the next level grid, which we repeat multiple times. For all our experiments we used 4 levels of scales. Note that the proposed multi-scale SDF evaluation is not unique to \bacon{} and can be applied to existing networks, e.g., SIREN, that model SDFs.}\label{fig: MS_MC}
\end{figure*}

\begin{table}[h]
    \centering
    \begin{tabular}{lcccccccc}
    \toprule
    & \multicolumn{5}{c}{Seconds} \\
    & Armadillo & Dragon & Lucy & Sphere & Thai\\\midrule
    SIREN Original         & 16.75 &   16.78 &   16.76 &    16.78 &  16.76 \\ 
    SIREN Multi-Scale               & 0.404 &   0.258 &   0.253 &    0.252 &  0.354  \\
    \bacon{} Original                & 17.93 &   17.836 &   17.909 &    17.926 &  17.938 \\ 
    \bacon{} Adaptive               & 5.925 &   5.325 &   5.278 &    5.391 &  5.584  \\ 
    \bacon{} Multi-Scale                & 0.411 &   0.273 &   0.270 &    0.265 &  0.364  \\ 
    \bacon{} Adapt. + Multi. (Proposed)                & \textbf{0.280} &   \textbf{0.207} &   \textbf{0.188} &    \textbf{0.172} &  \textbf{0.267}  \\ 
    \bottomrule
    \end{tabular}
    \caption{Marching Cubes timing analysis. From top to bottom: dense vanilla Marching Cubes using SIREN; multi-scale Marching Cubes using SIREN; dense vanilla Marching Cubes  using \bacon{}; adaptive mesh extraction using \bacon{}; multi-scale Marching Cubes using \bacon{}; the proposed combination of multi-scale and adaptive SDF evaluation using \bacon{}. } \label{tab:MarchinbCubes}
\end{table}

\begin{figure*}[ht]
    \centering
    \includegraphics[width=\textwidth]{./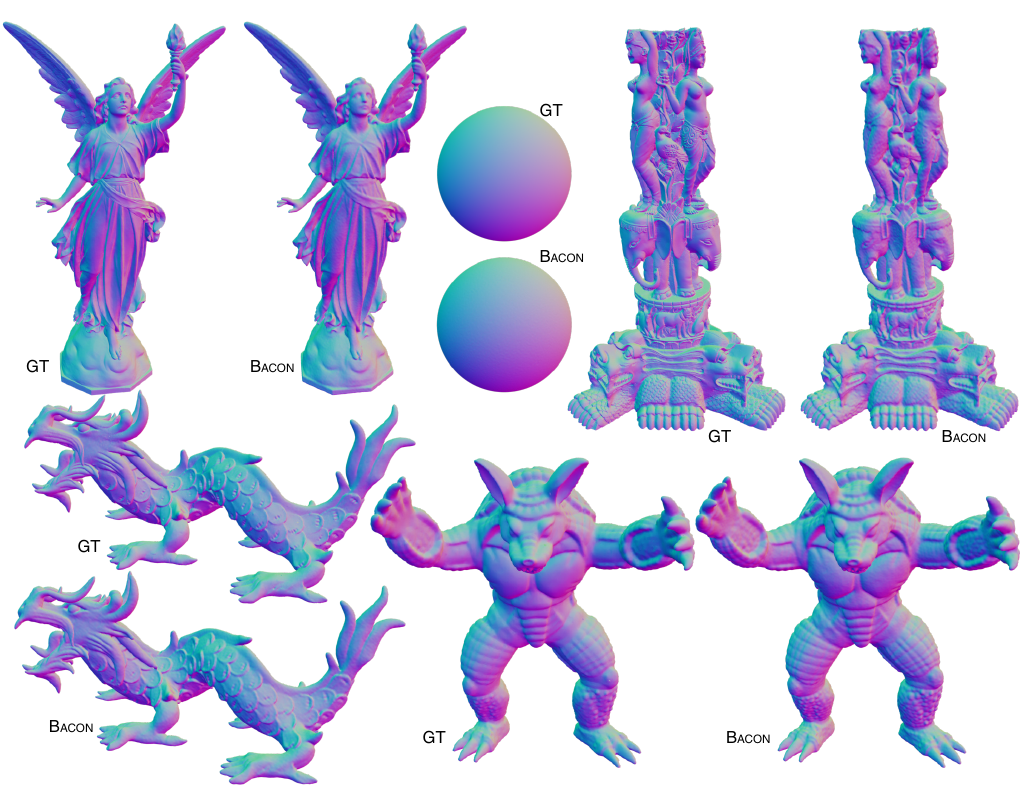}
    \caption{Extracted 3D shapes using the proposed adaptive-frequency multiscale inference procedure.}
    \label{fig:supp_mc}
\end{figure*}

\clearpage

\begin{figure}[ht]
    \centering
    \includegraphics[]{./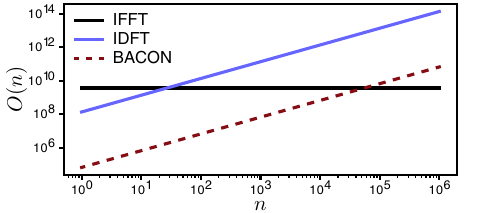}
    \caption{Comparison of asymptotic computational complexity.}
    \label{fig:fft}
\end{figure}

\subsection{Comparison to Explicit Fourier Basis}
Interestingly, we find that a \bacon{} representing a grid of $512^3$ discrete frequencies (used in our shape fitting experiments) is more efficient to evaluate for few samples than using the Inverse Fast Fourier Transform (IFFT) or naive computation of the inverse discrete Fourier transform (IDFT) on an explicit grid of $512^3$ coefficients.
The computational complexity of the IFFT and IDFT are $O(N\log(N))$ and $O(N^2)$, where here, $N=512^3$.
\bacon{} is a compressive representation of the spectrum, and its complexity scales as $O(d_\text{h}^2)$ (due to matrix multiplication).
In Fig.~\ref{fig:fft} we plot a comparison of asymptotic computational complexity for $n$ output samples for each of these methods. 
Since the IFFT always computes $512^3$ outputs, its cost is constant.

{\small
\bibliographystyle{ieee_fullname}
\bibliography{references}
}